\newsavebox{\abstractbox}
\renewenvironment{abstract}
{\begin{lrbox}{0}\begin{minipage}{\textwidth}
			\begin{center}\normalfont\sectfont\abstractname\end{center}\quotation}
		{\endquotation\end{minipage}\end{lrbox}%
	\global\setbox\abstractbox=\box0 }
\newtheorem{example}{Example}
\newtheorem{definition}{Definition}
\newtheorem{theorem}{Theorem}
\newtheorem{proposition}{Proposition}
\DeclareMathOperator*{\argmax}{arg\,max}
\DeclareMathOperator*{\argmin}{arg\,min}
\DeclareTextFontCommand{\textmyfont}{\myfont}
\newcolumntype{L}[1]{>{\raggedright\let\newline\\\arraybackslash\hspace{0pt}}p{#1}}
\newcolumntype{C}[1]{>{\centering\let\newline\\\arraybackslash\hspace{0pt}}p{#1}}
\newcolumntype{R}[1]{>{\raggedleft\let\newline\\\arraybackslash\hspace{0pt}}p{#1}}
\renewcommand{\emph}[1]{\textit{#1}}
\DeclareMathOperator*{\conv}{conv}
\DeclareMathOperator*{\dom}{dom}
\DeclareMathOperator*{\inte}{int}
\newcommand{\bfe}{\boldsymbol{e}}
\newcommand{\bfy}{\boldsymbol{y}}
\newcommand{\bfu}{{\boldsymbol{u}}}
\newcommand{\bfp}{\boldsymbol{p}}
\newcommand{\bfq}{\boldsymbol{q}}
\newcommand{\bfell}{\boldsymbol{\ell}}
\newcommand{\bfw}{{\boldsymbol{w}}}
\newcommand{\bfx}{\boldsymbol{x}}
\newcommand{\bfb}{\boldsymbol{b}}
\newcommand{\bfsigma}{\boldsymbol{\sigma}}
\newcommand{\bftheta}{{\boldsymbol{\theta}}}
\newcommand{\bflambda}{\boldsymbol{\lambda}}
\newcommand{\bfmu}{\boldsymbol{\mu}}
\newcommand{\rmo}{\mathrm{o}}
\newcommand{\calY}{\mathcal{Y}}
\newcommand{\calP}{\mathcal{P}}
\newcommand{\calD}{\mathcal{D}}
\newcommand{\calF}{\mathcal{F}}
\newcommand{\calH}{\mathcal{H}}
\newcommand{\calL}{\mathcal{L}}
\newcommand{\calW}{\mathcal{W}}
\newcommand{\calC}{\mathcal{C}}
\newcommand{\bbE}{\mathbb{E}}
\newcommand{\bbR}{\mathbb{R}}
\newcommand*\bigcdot{\mathpalette\bigcdot@{.65}}
\newcommand*\bigcdot@[2]{\mathbin{\vcenter{\hbox{\scalebox{#2}{$\m@th#1\bullet$}}}}}
\begin{document}
\emergencystretch 3em
\newacronym{acr:CO}{CO}{combinatorial optimization}
\newacronym{acr:ML}{ML}{machine learning}
\newacronym{acr:SL}{SL}{structured learning}
\newacronym{acr:NN}{NN}{neural network}
\newacronym{acr:FNN}{FNN}{feedforward neural network}
\newacronym{acr:MCFP}{MCFP}{multi-commodity flow problem}
\newacronym{acr:WE}{WE}{wardrop equilibrium}
\newacronym{acr:WE-reg}{WE-reg}{regularized wardrop equilibrium problem}
\newacronym{acr:GNN}{GNN}{graph neural network}
\newacronym{acr:COAML}{COAML}{combinatorial optimization-augmented machine learning}
\newacronym{acr:MLP}{MLP}{multi layer perceptron}
\newacronym{acr:MAE}{MAE}{mean absolute error}
\newacronym{acr:LP}{LP}{linear problem}
\newacronym{acr:SAA}{SAA}{sample average approximation}
\newacronym{acr:MILP}{MILP}{mixed integer linear program}
\newacronym{acr:RL}{RL}{reinforcement learning}
\newacronym{acr:VI}{VI}{variational inequality}
\title{\large WardropNet: Traffic Flow Predictions via Equilibrium-Augmented Learning}

\author[1]{\normalsize Kai Jungel}
\author[2]{\normalsize Dario Paccagnan}
\author[3]{\normalsize Axel Parmentier}
\author[4]{\normalsize Maximilian Schiffer}
\affil{\small 
	School of Management, Technical University of Munich, Munich, Germany
	
	\scriptsize kai.jungel@tum.de
	
	\small
	\textsuperscript{2} Department of Computing, Imperial College London, London, United Kingdom
	
	\scriptsize d.paccagnan@imperial.ac.uk

    \small
	\textsuperscript{3} CERMICS, \'{E}cole des Ponts, Marne-la-Vallée, France
	
	\scriptsize axel.parmentier@enpc.fr

    \small
	\textsuperscript{4} School of Management \& Munich Data Science Institute, Technical University of Munich, Munich, Germany
	
	\scriptsize schiffer@tum.de

 }

\date{}

\lehead{\pagemark}
\rohead{\pagemark}

\begin{abstract}
\begin{singlespace}
{\small\noindent When optimizing transportation systems, anticipating traffic flows is a central element. Yet, computing such traffic equilibria remains computationally expensive. Against this background, we introduce a novel combinatorial optimization augmented neural network architecture that allows for fast and accurate traffic flow predictions. We propose WardropNet, a neural network that combines classical layers with a subsequent equilibrium layer: the first ones inform the latter by predicting the parameterization of the equilibrium problem's latency functions. Using supervised learning we minimize the difference between the actual traffic flow and the predicted output. We show how to leverage a Bregman divergence fitting the geometry of the equilibria, which allows for end-to-end learning. WardropNet outperforms pure learning-based approaches in predicting traffic equilibria for realistic and stylized traffic scenarios. On realistic scenarios, WardropNet improves on average for time-invariant predictions by up to 72\% and for time-variant predictions by up to~23\% over pure learning-based approaches. \\
\smallskip}
{\footnotesize\noindent \textbf{Keywords:} structured learning, combinatorial optimization augmented machine learning, traffic equilibrium prediction}
\end{singlespace}
\end{abstract}

\maketitle

\section{Introduction}
With the advent of digitalization, big data, and sharing economies, algorithmic decision support evolved as a pivotal tool to analyze, design, and operate today's transportation systems. When designing algorithmic decision support in the context of transport optimization, an element that is crucial at all planning stages is anticipating the transportation system's response to the decision taken, e.g., to anticipate traffic flows when taking congestion pricing decisions. In general, transportation systems converge to equilibrium states in which agents cannot improve their outcome by unilaterally changing their behavior. Unfortunately, computing such equilibria usually requires a high-fidelity solver, e.g., a microscopic traffic flow simulation, that is computationally expensive and requires hours to days when computing an equilibrium state.

Against this background, we study a novel learning paradigm, that, based on knowledge of previously realized equilibria, allows us to accurately predict traffic flows in a new context within short computational times. Specifically, we assume access to contextual information $\bfx$ that is correlated to observed traffic flows $\bfy$, and place ourselves in a supervised learning setting: 
both the context $\bfx$ and the traffic flows $\bfy$ are realizations of correlated random variables $X \in \mathcal{X}$ and $Y \in \mathcal{Y}$. We introduce a hypothesis class $\calH$ of \emph{hypotheses}~$h$ that map a context $\bfx \in \mathcal{X}$ to a flow $\hat \bfy \in \mathcal{Y}$, and quantify prediction accuracy by introducing a loss $\calL(\bfy, \hat\bfy)$ that measures the difference between target flow~$\bfy$ and the predicted flow~$\hat \bfy$. We aim at finding the hypothesis $h\in \calH$ that minimizes the expected loss
\begin{equation}
    \min_{h \in \calH} \bbE_{X,Y}\calL\big(Y,h(X)\big).
\end{equation}
While the joint distribution over $X,Y$ is unknown, we assume access to a training set $\{(\bfx_i,\bfy_i)\}_{i=1}^N$ derived from past traffic flow measurements on different contexts or through samples obtained by a high-fidelity simulator. Based on this training set, we aim to solve the empirical supervised learning problem
\begin{equation}\label{eq:empiricalLearningProblem}
    \min_{h \in \calH} \frac1N\sum_{i=1}^N\calL\big(\bfy_i,h(\bfx_i)\big),
\end{equation}
which is clearly sensitive to the chosen hypothesis class.
In this context, we propose \emph{WardropNet}, a novel hybrid end-to-end learning paradigm that augments a neural network with an equilibrium layer, as such capturing combinatorial interdependencies between flows. Once trained, WardropNet produces efficient traffic predictions at low computational cost as inference only requires a forward pass on this neural network.

\paragraph{Contribution} 
In this work, we derive a \gls{acr:COAML} pipeline, called WardropNet, that uses a neural network to learn the parameterization of latency functions and a \gls{acr:CO}-layer to compute the resulting equilibrium problem. We train this pipeline via imitation learning, intuitively minimizing the Bregman divergence between target and predicted traffic flows. Crucially, we show how to train this pipeline in an end-to-end fashion by leveraging a suitably defined regularization term arising from the use of a Fenchel-Young loss. This step is key to determining a meaningful gradient, as a direct approach results in ill defined and vanishing gradients due to the combinatorial nature of the equilibrium layer. Additionally, we show that utilizing a Fenchel-Young loss in this context does not only allow to differentiate through the equilibrium layer, but it is also equivalent to minimizing the Bregman divergence between the predicted and the target traffic flows.

We present a comprehensive numerical study and show that WardropNet outperforms pure \gls{acr:ML} baselines on various realistic and stylized environments in both time-invariant and time-variant settings, yielding accuracy improvements of up to $75\%$. Our results show that WardropNet allows to significantly improve traffic flow predictions compared to pure \gls{acr:ML} baselines. Specifically, WardropNet benefits from the synergy between predicting latency function parameterizations with a statistical model while preserving the combinatorial structure of the respective flows within the \gls{acr:CO}-layer.

\paragraph{Related work} 
Our work relates to two fields: traffic equilibria prediction and \gls{acr:CO}-augmented \gls{acr:ML}. \gls{acr:ML} approaches for predicting traffic equilibria range from conventional \gls{acr:ML} approaches \citep{
Li2014, 
HouEtAl2015, 
Lv2015} 
to deep learning approaches that embed physical or combinatorial solution restrictions within the network architecture 
\citep{AmosKolter2017, Smith2022, 
Seccamonte2023}. 
Alternatively, one can utilize \glspl{acr:GNN} which allow to embed network specific constraints 
\citep{
Li2018, 
WuEtAl2018, 
Yu2018, 
GuoEtAl2019, 
Wu2019, 
BaiEtAl2020, 
Ali2022, 
Jin2023, 
Wu2023}. 
One can also incorporate an equilibrium layer into the deep learning pipeline to directly predict equilibrium states
\citep{BaiEtAl2019, 
GuEtAl2020, 
LiEtAl2020, 
Marris2022, 
mckenzie2023operator}, 
which can be used to optimize stackelberg games
\citep{SakaueEtAl2021}.
However, these works are either agnostic of combinatorial structures or impose major restrictions on the equilibrium layers to allow for backpropagation, e.g., the well-posedness of problems or the restriction to small quadratic programs. Contrarily, the \gls{acr:COAML} pipeline introduced in this work allows for  backpropagation through general, possibly combinatorial, equilibrium layers.

Recently, \gls{acr:COAML} pipelines that embed a general \gls{acr:CO}-layer into a statistical model emerged. These pipelines aim to minimize the combinatorial error induced by the statistical model during training.
First approaches train the statistical model based on the resulting combinatorial objective value \citep{ElmachtoubEtAl2020, Elmachtoub2022}, which requires information about target \gls{acr:ML} predictions.
Recent approaches learn the \gls{acr:COAML} pipelines in an end-to-end fashion via directly imitating combinatorial solutions, and have been applied to path problems~\citep{Parmentier2022} and contextual multi-stage optimization~\citep{dalle2022learning, Baty2024,  jungel2024learningbased}. 
A major challenge of these end-to-end \gls{acr:COAML} pipelines is deriving a gradient of the respective \gls{acr:CO} problem to allow for backpropagation \citep{AgrawalEtAl2019}.
To do so, recent approaches introduced regularization techniques \citep{BerthetEtAl2020} that allow to differentiate through \gls{acr:CO} problems, enabling \gls{acr:CO} problems as layers in end-to-end \gls{acr:ML} pipelines \citep{Blondel2020}.
The presented research stream shows the efficiency of \gls{acr:COAML} pipelines, but the introduced \gls{acr:COAML} pipelines have not been studied for predicting traffic equilibria. In general, equilibrium layers have neither been studied from a theoretical nor practical perspective in the context of \gls{acr:COAML} pipelines.

\section{Generalized Wardrop Equilibria}
\label{sec:generalizedWardrop}
We begin by introducing the notion of \gls{acr:WE}, as this will be key for the development of WardropNet. In doing so, we also generalize the original notion of \cite{Wardrop1952} to allow for the travel time on each individual arc to depend on the traffic flow on other arcs in the network. This enables us to model spill-over effects where the flow on one arc is influenced by that of neighbouring ones. Moreover, it allows us to introduce a suitable regularization term which is key for enabling end-to-end learning. For a technical derivation of the formalization, including proofs, we refer to Appendix~\ref{app:WEs}.

We consider a transportation network, represented as a directed graph~$G = (V, A)$ with arc-set~$A$ and vertex-set~$V$.
Here, we take the non-atomic perspective of the problem and assume that each commodity~$j\in J$ has size $D^{(j)}\in\mathbb{R}_{\ge0}$, and is required to travel from its given origin node $o^{(j)}\in V$ to its given destination node $d^{(j)}\in V$. As a result, commodity~$j$ places the vector of flows~$\bfy^{(j)} = (y^{(j)}_a)_{a\in A}$ over the arcs. This flow is feasible if an amount of flow $D^{(j)}$ leaves the origin and reaches the destination while satisfying conservation of flow at all nodes. Formally, a commodity's flow $\bfy^{(j)} \in  \calY^{(j)}$, lives in a (feasible) multiflow polytope~$\calY^{(j)}$ as detailed in Appendix~\ref{app:WEs}. Note that we can switch between a traditional agent perspective ($D^{(j)} = 1$), which we adopt in the remainder of this paper, or a commodity perspective ($D^{(j)} \geq 1$), by restricting $D^{(j)}$ accordingly.
Finally, the traffic flow we aim to predict arises by aggregating the individual agents' flows as
\[\bar \bfy =\sum_{j\in J} \bfy^{j} \quad \text{and} \quad \bar \calY = \Big\{\bar \bfy = \sum_{j \in J} \bfy^{(j)} \colon \bfy^{(j)} \in \calY^{(j)} \quad \forall j\in J\Big\},
\] 
where we note that both $\calY$ and $\bar \calY$ are, again, polytopes.

\paragraph{Wardrop Equilibrium} 
Originally introduced by \cite{Wardrop1952}, a \gls{acr:WE} describes a set of  flows for each agent such that any unilateral deviation would incur in a longer travel time. Here the total travel time experienced by a flow is obtained by summing the travel time experience over each arc on its path.  In turn, the travel time over each arc is measured by a so-called latency function, i.e., an arc-specific function $\ell_a : \mathbb{R}_+ \rightarrow \mathbb{R}_+$, mapping the aggregated flow $\bar \bfy$ in the corresponding travel time $\ell_a(\bar \bfy)$. Note that this generalized definition allows for the travel time on each arc to depend on the aggregated flow on the entire network, and not necessarily only on the flow of that specific arc.

While there exist different, equivalent, definitions of \glspl{acr:WE}, in the following we utilize its variational characterization as derived in Appendix~\ref{app:WEs}.
      
\begin{definition}[Wardrop Equilibrium]
\label{def:WE}
Given a directed graph $D = (V, A)$, origin-destination pairs $(o_j,d_j)_{j\in J}$, and a vector of latency functions $\bfell = \{\ell_a\}_{a\in A}$, a \gls{acr:WE} is a multiflow $\bfy = (\bfy^{(j)})_{j \in J} \in \calY^{(j)}$ such that
\begin{equation}\label{eq:WEdefVar}
\text{for all } j \in J \text{ and } \bfy'^{(j)} \in \calY^{(j)}, \quad \ell(\bar \bfy)^\top \bfy^{(j)} \leq \ell(\bar\bfy)^\top \bfy'^{(j)}.
\end{equation}
\end{definition}

Let us recall that our goal is to embed an equilibrium problem as a (final) layer into a neural network architecture. In this case, it appears natural to introduce a family of latency functions and seek the best approximation among the equilibria via choosing the best latency functions. Accordingly, we aim to establish a learning algorithm that allows us to learn the parameterization of the respective latency functions. To do so, it will prove useful if our equilibrium problem remains convex.
    
\paragraph{Convex characterization}
In the decomposable case, when the latency $\tilde \ell_a$ on arc~$a$ does only depend on the flow~$\bar y_a$, it is well known that the \gls{acr:WE} exists and is unique, when we have non-decreasing~$\tilde \ell_a, \, \forall a \in A$.
We now generalize this result to the non-decomposable case. To do so, we say that latency functions $\{\ell_a\}_{a \in A}$ \emph{derive from a potential} $\Phi : \bbR_+^a \rightarrow \bbR$ if
\begin{equation}
    \label{eq:potentialDefinition}
    \ell_a = \frac{\partial \Phi}{\partial y_a}.
\end{equation}

\begin{theorem}[Convex characterization]
\label{theo:convexMain}
    Consider a directed graph $D = (V, A)$, origin-destination pairs $(o_j,d_j)_{j\in J}$, and a vector of latency functions $\bfell = \{\ell_a\}_{a\in A}$ that derive from a potential $\Phi$.
    \begin{enumerate}
        \item A multiflow $\bfy = (\bfy^{(j)})_{j \in J} \in \calY$ is a \gls{acr:WE} if and only if it is an optimal solution to 
        \begin{equation}
            \min_{\bfy \in \calY} \Phi(\bar \bfy) \quad \text{s.t.} \quad \bar \bfy = \sum_j y_j.
            \label{eq:wardrop_equilibrium_convex_opt}
        \end{equation} 
        \item A vector $\bar \bfy \in \bar \calY$ is the aggregated flow of a \gls{acr:WE} if and only if it is an optimal solution to 
        \begin{equation}
        \label{eq:WEaggregatedFLow_optDef}
            \min_{\bar \bfy \in \bar\calY} \Phi(\bar \bfy)
        \end{equation}
        \item If $\Phi$ is strictly convex, then a \gls{acr:WE} $\bar\bfy$ exists and is unique.
    \end{enumerate}
\end{theorem}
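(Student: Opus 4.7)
The plan is to leverage the relation $\ell(\bar\bfy) = \nabla \Phi(\bar\bfy)$ implied by~\eqref{eq:potentialDefinition}, which turns the variational inequality in Definition~\ref{def:WE} into a first-order optimality condition, and then apply convex analysis (with $\Phi$ convex, as is natural for~\eqref{eq:wardrop_equilibrium_convex_opt} to be a convex program).

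First I would observe that, since $\calY = \prod_{j\in J}\calY^{(j)}$ is a Cartesian product, the family of per-commodity inequalities~\eqref{eq:WEdefVar} is equivalent to the single aggregated condition
\[
\ell(\bar\bfy)^\top \bar\bfy \;\leq\; \ell(\bar\bfy)^\top \bar\bfy' \quad \text{for all } \bfy' \in \calY \text{ with } \bar\bfy' = \sum_j \bfy'^{(j)},
\]
one direction being obtained by summing over $j$, the other by deviating in a single commodity at a time. Substituting $\ell = \nabla \Phi$ and applying the chain rule to $\bfy \mapsto \Phi\big(\sum_j \bfy^{(j)}\big)$ transforms this inequality into exactly the first-order optimality condition for~\eqref{eq:wardrop_equilibrium_convex_opt}. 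Provided $\Phi$ is convex, this first-order condition is equivalent to global optimality over the polytope $\calY$, which establishes part~1.

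For part~2, I would exploit that $\bar\calY$ is, by construction, the image of $\calY$ under the linear aggregation map $\bfy \mapsto \sum_j \bfy^{(j)}$, and that the objective in~\eqref{eq:wardrop_equilibrium_convex_opt} depends on $\bfy$ only through its aggregate. Consequently,~\eqref{eq:wardrop_equilibrium_convex_opt} and~\eqref{eq:WEaggregatedFLow_optDef} share the same optimal value, and $\bar\bfy \in \bar\calY$ minimizes~\eqref{eq:WEaggregatedFLow_optDef} if and only if it can be written as the aggregate of some minimizer of~\eqref{eq:wardrop_equilibrium_convex_opt}. Combined with part~1, this yields the claimed characterization of equilibrium aggregated flows.

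Part~3 then follows quickly: $\bar\calY$ is a nonempty, compact polytope and $\Phi$ is continuous, so~\eqref{eq:WEaggregatedFLow_optDef} admits a minimizer; strict convexity of $\Phi$ rules out a second one, and part~2 transfers existence and uniqueness to the aggregated equilibrium flow. The main obstacle I anticipate lies in the bookkeeping required to move rigorously between the per-commodity variational inequalities and the single optimality condition on the aggregate in \emph{both} directions, together with justifying the chain-rule step so that convexity and gradient structure are preserved under composition with the linear aggregation map. Once that is in place, the remaining arguments are routine convex analysis on polytopes.
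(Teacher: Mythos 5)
Your proof is correct, but it takes a genuinely different route from the paper's. The paper proves part~1 by passing to the path-variable reformulation of \eqref{eq:wardrop_equilibrium_convex_opt} (via the flow decomposition into $o$--$d$ paths), writing the Lagrangian and KKT conditions, and observing that the KKT system is literally the classical Wardrop condition ``every used path has minimal latency, every unused path has latency at least that minimum.'' You instead start from the variational characterization in Definition~\ref{def:WE}, show that the per-commodity inequalities \eqref{eq:WEdefVar} collapse (by the Cartesian-product structure of $\calY$ and the one-commodity-at-a-time deviation argument) into the single aggregated inequality $\nabla\Phi(\bar\bfy)^\top(\bar\bfy'-\bar\bfy)\ge 0$ for all $\bfy'\in\calY$, and identify this, via the chain rule applied to $\bfy\mapsto\Phi\bigl(\sum_j\bfy^{(j)}\bigr)$, with the first-order optimality condition of \eqref{eq:wardrop_equilibrium_convex_opt}. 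Your route is cleaner: it avoids the path decomposition entirely (whose non-uniqueness the paper itself has to work around) and reduces part~1 to the textbook equivalence between the variational inequality and global optimality for a convex objective on a convex set. What the paper's route buys in exchange is a direct link to the classical path-based Wardrop conditions without needing the variational characterization as an intermediary. Parts~2 and~3 are handled essentially identically in both arguments. One point worth noting: both proofs need convexity of $\Phi$ for the direction ``first-order/KKT conditions $\Rightarrow$ global optimum,'' an assumption absent from the formal statement of parts~1 and~2; you make this hypothesis explicit, whereas the paper's proof uses it silently, so this is not a gap relative to the paper.
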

The proof of Theorem~\ref{theo:convexMain} is in Appendix~\ref{app:WEs}. Note that Theorem~\ref{theo:convexMain} allows us to switch between the decomposable notion and the non-decomposable notion of latency functions.

\section{WardropNet---Learning data-driven Wardrop equilibria}
\label{sec:learningWardrop}
Recall that our goal is to predict the aggregated traffic flow~$\bar \bfy$ based on contextual information $\bfx$ that correlates with the observed traffic. 
Following, the supervised learning setting from the introduction, we assume having access to samples $\{(\bfx_i,\bar \bfy_i)\}_{i=1}^N$, and aim at solving the resulting empirical supervised learning problem in Equation~\eqref{eq:empiricalLearningProblem}. In the following, we introduce our WardropNet paradigm by first detailing the (combinatorial) hypothesis class and the respective loss function used. We then specify the resulting supervised learning problem and show that it can be interpreted as minimizing the Bregman divergence between target and predicted traffic flows.

\begin{figure}[bt]
    \centering
    \fontsize{8}{8}\selectfont
    \def\svgwidth{0.9\textwidth}
    \subimport*{./figures/}{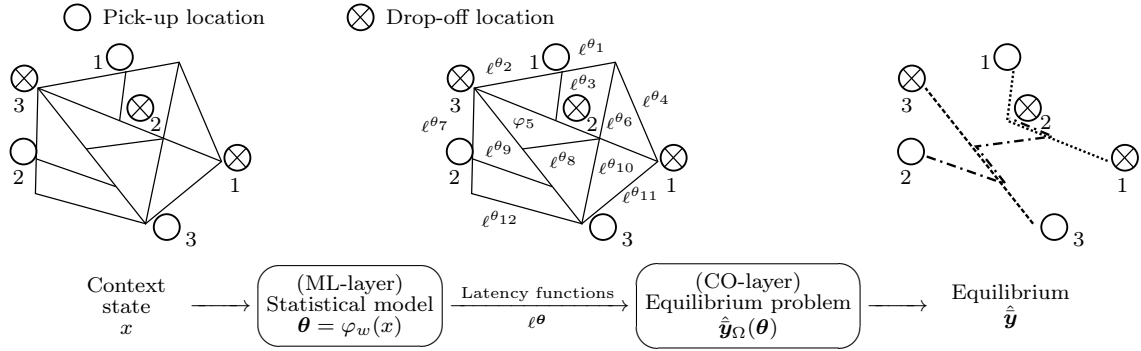}
    \vspace{-0.2cm}
    \begin{equation*}
        \parbox{1.5cm}{\centering Context state \\ $x$}
        \xrightarrow[]{\qquad}
        \ovalbox{\begin{Bcenter} 
        (ML-layer)\\
        Statistical model \\
        $\bftheta=\varphi_{w}({x})$ \end{Bcenter}}
        \xrightarrow[\ell^{\bftheta}]{\; \text{Latency functions} \; }
        \ovalbox{\begin{Bcenter} 
        (CO-layer)\\
        Equilibrium problem \\
        $\hat {\bar \bfy}_\Omega(\bftheta)$ \end{Bcenter}}
        \xrightarrow[]{\qquad}
        \parbox{2cm}{\centering Equilibrium \\ $\hat {\bar \bfy}$}
    \end{equation*}
    \caption{Schematic illustration of the WardropNet paradigm, implemented as a \gls{acr:COAML} pipeline that comprises a statistical model~$\varphi_{\bfw}$ that predicts the latency function's parameterization~$\bftheta$ and an equilibrium problem yielding the respective prediction~$\hat {\bar \bfy}_\Omega(\bftheta)$.}
    \label{fig:COAMLpipeline}
\end{figure}

\paragraph{Hypothesis class}
Figure~\ref{fig:COAMLpipeline} illustrates our hypothesis class, which is a \gls{acr:COAML} pipeline that combines an equilibrium problem with a statistical model. To leverage this pipeline, i.e., use the statistical model $\varphi_{\bfw}$ to feed information that accounts for the context $\bfx$ into the equilibrium problem, 
we introduce a parameterized family of latency functions $\ell^{\bftheta}$, and use the statistical model~$\varphi_{\bfw}$ to predict its parameterization $\bftheta = \varphi_\bfw(\bfx)$.

In this context, we generally consider latency functions~$\ell^{\bftheta}$ that depend on $\bftheta$ and derive from a \textit{regularized potential} of the form
$$ \Phi_{\bftheta}(\bar\bfy) = \psi(\bar\bfy) - \bftheta^\top \bfsigma(\bar \bfy) $$
where $\bfsigma$ is a vector of concave basis functions, and $\psi$ is a strictly convex \emph{regularization function} such that $\bar \calY \subseteq \dom(\psi)$, 
where $\dom(\psi)$ is the domain of $\psi$.
Practically, we consider architectures with linear, piecewise linear, and non-linear basis function.
We choose this specific structure for~$\Phi_{\bftheta}$ as it mimics the structure of the Fenchel-Young loss, which will be key to obtain a differentiable end-to-end learning problem. In the following, we will first consider latency functions $\ell^{\bftheta}$ that derive from a \emph{regularized linear potential}, to ease the technical exposition. Afterward, we will provide extensions to the  general case in Section~\ref{sec:pipeline}.
Specifically, we consider
\begin{equation}
    \Phi_{\bftheta}(\bar\bfy) = \psi(\bar\bfy) - \bftheta^\top \bar\bfy,
    \label{eq:regularized_linear_potential}
\end{equation}
where $-\bftheta$ belongs to the positive cone $\bbR_+^A$.
Theorem~\ref{theo:convexMain} ensures that computing a~\gls{acr:WE} then equals solving the following convex regularized flow problem

\begin{equation}
    \label{eq:regularizedPrediction}
    \max_{\bfy \in \calY} \bftheta^\top \bar \bfy - \psi(\bar \bfy) \quad \text{where} \quad \bar \bfy = \sum_{j\in J} \bfy^{(j)}.
\end{equation}
This problem can be formulated directly on the aggregated flow polytopes as
\begin{equation}
\label{eq:regularizedPredictionOnAggregatedPolyhedron}
    \hat {\bar \bfy}_\Omega(\bftheta):= \argmax_{\bfy} \bftheta^\top \bfy - \Omega(\bfy) \quad \text{where} \quad \Omega = \psi + I_{\bar \calY} \quad \text{and} \quad I_{\bar \calY}(\bfy) =\begin{cases}
        0 & \text{if }\bfy \in \bar\calY, \\
        +\infty & \text{otherwise.}
    \end{cases}
\end{equation}
Note that the $\argmax$ is unique because $\Omega$ is strictly convex.
This enables us to define $\hat {\bar \bfy}$ as a mapping from the latency parameterization~$\bftheta$ to the corresponding observed aggregated flow $\hat {\bar \bfy}(\bftheta)$. 
We can now introduce our hypothesis class formally as
\begin{equation*}
    \calH = \Big\{h_{\bfw}\colon x \mapsto \hat {\bar \bfy}_\Omega \circ \varphi_{\bfw}(x) ,\, w \in \calW \Big\}.
\end{equation*}
To get practical hypothesis, we need to choose a regularization $\Omega$ as well as a statistical model $\varphi_\bfw$. 
We defer the discussion of the relevant architectures to the next section, and for now focus on the loss definition.

\paragraph{Fenchel-Young loss}
Instead of stating a loss $\tilde\calL(\bar \bfy, \hat {\bar \bfy})$ between our target traffic flow $\bar \bfy$ and the predicted traffic flow~$\hat {\bar \bfy}$, we define the loss $\calL(\bftheta,\bar \bfy)$ between the latency parameterization~$\bftheta = \varphi_\bfw(x)$ and the target traffic flow~$\bar \bfy$. 
This loss, which is in our case the Fenchel-Young loss, measures the distance between the target traffic flow~$\bar \bfy$ and the traffic flow~$\hat {\bar \bfy}_\Omega(\bftheta)$ deriving from the latency parameterization~$\bftheta$. The Fenchel-Young loss evolves from the Fenchel-Young inequality and leverages the relationship it describes between a convex function and its Fenchel conjugate. To understand this relationship in our context, we recall that the \emph{Fenchel conjugate} of a function $f : \bfy \mapsto f(\bfy)$ 
with domain $\calD$ is defined as 
$$ f^*(\bftheta) = \sup_{\bfy \in \calD}\bftheta^\top\bfy - f(\bfy). $$
Then, we can state the Fenchel conjugate of the regularization~$\Omega$ as $$\Omega^*(\bftheta) = \max_{\bfy \in \bar\calY} \bftheta^\top\bfy - \Omega(\bfy),$$ because the domain of $\Omega$ is $\bar\calY$, which is compact. Furthermore, we recall that $(\Omega^*)^* = \Omega$, because
the biconjugate of a proper lower semi-continuous convex function $f$ is itself: $(f^*)^* = f$. 

Then, this relationship allows us to derive a general notion of the \emph{Fenchel-Young loss} for a regularized flow problem~(\ref{eq:regularizedPrediction}) associated to the regularization $\Omega$ as stated in \cite{Blondel2020} as
\begin{equation}
    \calL_\Omega(\bftheta,\bar \bfy) = \Omega^*(\bftheta) + \Omega(\bar \bfy) - \bftheta^\top \bar \bfy = \max_{\bfy \in \bar\calY}\big[\bftheta^\top\bfy - \Omega(\bfy)\big] - \big[\bftheta^\top\bar\bfy - \Omega(\bar\bfy)\big].
    \label{eq:fenchel_young_loss}
\end{equation}
The first equality in~(\ref{eq:fenchel_young_loss}) defines $\calL_\Omega(\bftheta,\bar \bfy)$ as the left hand side of the Fenchel-Young inequality. This guarantees that $\bftheta \mapsto \calL_\Omega(\bftheta,\bar \bfy)$ is convex, non-negative, and equal to zero if and only if $\hat {\bar \bfy}_\Omega(\bftheta)= \bar \bfy$.

\paragraph{Supervised learning problem}
Combining our hypothesis class with the Fenchel-Young loss, our supervised learning problem becomes 
\begin{equation}
    \min_w \sum_{i=1}^N \calL_\Omega\big(\varphi_{\bfw}(x_i),\bar\bfy_i\big),
    \label{eq:learning_problem}
\end{equation}
If $\Omega^*$ is differentiable in $\bftheta$, the gradient of the Fenchel-Young loss function is equal to
\begin{equation}
    \nabla_{\theta} \calL_\Omega(\bftheta,\bar \bfy) = \nabla\Omega^*(\bftheta) -  \bar \bfy.
    \label{eq:fenchel_young_loss_gradient}
\end{equation}
We later introduce regularizations $\Omega$ for which (the stochastic version of) this gradient is tractable, which enables to solve the learning problem~\ref{eq:learning_problem} using stochastic gradient descent. For the remainder of this section we focus on showing that solving Problem~\ref{eq:learning_problem} equals minimizing a Bregman divergence between the observed and predicted traffic flows.

\paragraph{Interpretation of the Fenchel-Young loss as Bregman divergence}
Let $f : \calD \rightarrow \bbR$ be a continuously differentiable and strictly convex function with convex domain $\mathrm{dom}(f) = \calD$.
Then, the \emph{Bregman divergence} associated to~$f$ is 
\begin{equation}
    D_f(\bfp, \bfq) = f(\bfp) - \big[f(\bfq) + \langle \nabla f(\bfq), \bfp - \bfq \rangle\big]
\end{equation}
and measures the difference between two points $\bfp$ and $\bfq$ in $\calD$ based on $f$. 
It is a generalization of the squared Euclidean distance $\frac12\|\bfp-\bfq\|^2$, which we obtain when we choose~$p\mapsto \frac{1}{2}||\bfp||^2$ as $f$. 

Our Fenchel-Young loss $\calL_\Omega(\bftheta,\bar\bfy)$ is closely related to the Bregman divergence $D_\psi$.
To show this, we need the following notion.
A function $\psi : \dom(\psi)\rightarrow \bbR$ is of \emph{Legendre type} if its domain $\dom(\psi)$ is non-empty, it is strictly convex, and differentiable on the interior $\inte(\dom(\psi))$ of $\dom(\psi)$, and $\lim\limits_{i\rightarrow \infty} \nabla\psi(\bfp_i) = +\infty$ for any sequence $(\bfp_i)$ of elements of $\dom(\psi)$ converging to a boundary point of $\dom(\psi)$. 
For instance, $\bfp\mapsto\frac12\|\bfp\|^2$ is of Legendre type with domain $\bbR^d$.

Supposing that our regularization function $\psi$ is of Legendre type, which is the case for the potentials we introduce in the next section,
\citet[Proposition 3]{Blondel2020} show that the Fenchel-Young loss $\calL_\Omega$ introduced above is a convex upper-bound on the Bregman divergence 
\begin{equation}
\label{eq:BregmanFenchelInequality}
 0\leq \underbrace{D_{\psi}\big( \bar \bfy, \hat {\bar \bfy}_\Omega(\bftheta) \big)}_{\text{possibly non-convex in $\bftheta$}} \leq \underbrace{\calL_\Omega(\bftheta,\bar\bfy)}_{\text{convex in $\bftheta$}} 
\end{equation}
leading to equality when the loss is minimized
\begin{equation}
    \label{eq:BregmanFencelatZero}
 \hat {\bar \bfy}_{\Omega}(\bftheta) = \bar \bfy \quad \Leftrightarrow \quad \calL_{\Omega}(\bftheta,\bar\bfy) = 0 \quad \Leftrightarrow \quad D_\psi\big(\bar\bfy,\hat {\bar \bfy}_\Omega(\bftheta)\big) = 0. 
\end{equation}
In the special case where $\Omega = \psi$ in addition to $\psi$ being Legendre type, we even have, 
\begin{equation}
\label{eq:BregmanFenchelEquality}
D_{\psi}\big( \bar \bfy, \hat {\bar \bfy}_\Omega(\bftheta) \big) = \calL_\Omega(\bftheta,\bar\bfy).
\end{equation}
The practical consequences of Equation~\eqref{eq:BregmanFenchelInequality} and Equation~\eqref{eq:BregmanFencelatZero} are the following: the Bregman divergence $D_{\psi}\big( \bar \bfy, \hat {\bar \bfy}_\Omega(\bftheta) \big)$ 
provides a natural way to measure the difference between the equilibrium we observe $\bar \bfy$ and the equilibrium we predict $\hat {\bar \bfy}_\Omega(\bftheta)$.
While directly minimizing this Bregman divergence might lead to non-convex problems, the Fenchel-Young loss is a convex surrogate that fits this geometry and leads to a tractable learning problem.

\section{Pipeline architecture}
\label{sec:pipeline}

In the following, we detail our \gls{acr:COAML} pipeline architectures to implement the WardopNet paradigm by first defining its statistical model~$\varphi_{\bfw}$ before introducing various regularization techniques to obtain tractable equilibrium layers~$\hat {\bar \bfy}_\Omega$.

\paragraph{Statistical model}
Generally, the statistical model~$\varphi_{\bfw}$ can be any arbitrary differentiable model which maps the context~$\bfx$ to the latency parameterization~$\bftheta$. In our setting, we specify the context~$\bfx$ as a tuple of vectors~$\bigl( (x_a)_{a \in A}, (x_v)_{v \in V} \bigr)$, each vector~$x_a$ comprising arc specific attributes, and each vector~$x_v$ comprising vertex specific attributes. The output~$\bftheta$ can be any set of vectors which is combinable to the latency coefficients~$(\theta_{k,a})_{k \in \{1,\ldots,K\}}^{a\in A}$, where~$K$ represents the amount of parameters that define the latency function on a given arc~$a \in A$. The statistical model~$\varphi_{\bfw}$ encodes its prediction via the parameterization~$\bfw$.
In Appendix~\ref{appendix:statistical_model} we provide a detailed description of the \gls{acr:FNN} and \gls{acr:GNN} architectures used as statistical model~$\varphi_{\bfw}$.

In the following, we focus on concisely introducing different regularization and modeling techniques to obtain tractable equilibrium layers~$\hat {\bar \bfy}_\Omega$. For an in-depth discussion and a detailed derivation of the respective gradients, we refer to Appendix~\ref{app:EQlayers}.

\paragraph{Latencies with Euclidean regularization }
(Details in Appendix~~\ref{appendix:euclideanRegularization})\quad We can obtain a differentiable equilibrium layer by utilizing a rather simple Euclidean regularization, considering the respective squared Euclidean loss~$\psi(\bar \bfy) = \frac{1}{2}\|\bar \bfy\|^2.$
In this case, we retrieve decomposable affine latency functions of the form
\begin{equation}
    \ell^{\bftheta}_a(\bar \bfy)= \frac{\partial}{\partial \bar y_a} \Phi_\bftheta(\bar \bfy) = -\theta_a + \bar y_a = \tilde \ell^{\bftheta}_a(\bar y_a),
    \label{eq:euclideanRegularization}
\end{equation}
and the vector~$\bftheta$ corresponds to the y-intercept of the latency functions. 
Then, the equilibrium is the orthogonal projection of $\bftheta$ on $\bar \calY$, which can be computed using any convex quadratic optimization solver.

\paragraph{Regularization by perturbation} 
(Details in Appendix~\ref{appendix:background_Bregman_regularization})\quad
Alternatively, we can regularize our equilibrium layer by perturbation: let us consider a maximum capacity $\bfu$ and the corresponding aggregated flow polyhedra
$$ \bar \calY_\bfu = \bar \calY \cap \big\{\bar \bfy\colon 0 \leq \bar\bfy \leq \bfu \big\}. $$
Let us now introduce a standard Gaussian vector $Z$ on $\bbR^A$. Then, we can use the convex conjugate of the expectation of the perturbed maximum flow as regularization $\Omega$,
\begin{equation}
    \Omega = F^* \quad \text{where} \quad F(\bftheta) = \bbE\Big[\max_{\bfy \in \bar\calY_\bfu}(\bftheta + Z)^\top  \bfy\Big]  .
\label{eq:RegularizationConjugate}
\end{equation}
\citet{BerthetEtAl2020} show that this regularization yields several desirable properties:
first $F$ is a proper convex continuous function, which gives $\Omega^* = (F^*)^* = F$.
Second, $\dom(\Omega) = \bar \calY_\bfu$,
which explains why we directly defined $\Omega$ as $F^*$ and not $\psi$ in Section~\ref{sec:learningWardrop}.
Third, applying Danskin's lemma \citep[Proposition A.3.2]{bertsekasConvexOptimizationTheory2009}, this second property, gives the differentiability of $\Omega^* = F$ and 
$$ \hat {\bar \bfy}_\Omega(\bftheta) = \nabla \Omega^*(\bftheta) = \bbE \Big[\argmax_{\bfy \in \bar\calY_u}(\bftheta + Z)^\top \bfy \Big].  $$
We note that the $\argmax$ is unique on a sampled realization of~$Z$.
This last property is critical from a practical point of view: The exact computation of $F(\bftheta)$, $\hat {\bar \bfy}_\Omega(\bftheta)$, and $\nabla_\bftheta \calL_\Omega(\bar\bfy)$ require to evaluate intractable integrals. However, $F(\bftheta)$, $\hat {\bar \bfy}_\Omega(\bftheta)$, and $\nabla_\bftheta \calL_\Omega(\bar\bfy)$ all include expectations such that we can use Monte-Carlo estimations by solving the maximum flow problems for a few sampling realizations of $Z$ instead of relying on its exact computation. 
Lastly, we note that when using this regularization, the Bregman divergence and the Fenchel-Young loss coincide~$\calL_\Omega(\bftheta,\bar\bfy) = D_\Omega \big(\bar\bfy,\hat {\bar \bfy}_\Omega(\bftheta)\big)$.

\paragraph{Piecewise constant latencies and extended network}
(Details in Appendix~\ref{appendix:extendedNetwork})\quad
In practice, observed latencies often exhibit threshold effects when the traffic on an arc reaches its capacity and congestion appears, i.e., the latency increases stepwise \citep{Vickrey1969}. Such effects are poorly captured by our potential $\Phi_\bftheta$ due to the smooth latency functions~$\ell_a$. 
However, the introduced methodology can naturally be extended to piecewise constant functions of $\bar y_a$:
let us suppose that we want to use the non-regularized latency function
$$ \tilde \ell_a(\bar y_a) = \begin{cases}
    \tilde\theta_{am} & \text{if } \tau_{m-1} \leq \bar y_a < \tau_m \text{ for }m\in \{1,\ldots,M\} \\
    +\infty & \text{if } \bar y_a > \tau_M, 
\end{cases}
$$
where $0=\tau_0 <\tau_1<\ldots<\tau_M$.
We derive a \gls{acr:WE} considering piecewise constant latency functions by solving the non-regularized minimum cost flow~$\min_{\bfy \in \bar\calY^{\mathrm{ex}}} \bftheta^\top \bfy $ on an extended network $G^{\mathrm{exp}} = (V,A^{\mathrm{exp}})$. The extended network~$G^{\mathrm{exp}}$ allows to encode the piecewise-constant functions in a multi-graph representation with parallel arcs such that each arc~$a_m$ in a set of parallel arcs with~$m \in \{1, \dots M\}$ comprises a threshold cost~$\tilde\theta_{am}$ of the original arc~$a$. We can then use the regularization by perturbation approach for learning. 

\paragraph{Polynomial latencies regularized by perturbation}
(Details in Appendix~\ref{appendix:polynomialRegularized})\quad
Lastly, we introduce monotonously increasing latency functions~$\ell^{\bftheta}_a(\bar y_a)$ that depend on the respective traffic flow~$\bar y_a$. To do so, we introduce a feature mapping~$\bfsigma$ that maps any component $y \in \bbR_+$ to a feature vector $\bfsigma(y) = (\sigma_k(y))_{k \in \{k,\ldots,K\}} \in \bbR^K$ such that $y\mapsto \sigma_k(y)$ is convex for each $k$, $\bfsigma : \bfy \in \calY \mapsto \bfsigma(\bfy) \in \bbR^{A\times K}.$ Then, we consider polynomial latency functions generated by the mapping $(\bfsigma_k)_k$,
\begin{equation}
    \ell^{\theta}_a(\bar y_a) = \sum_k \theta_{k, a}  \sigma_k(\bar y_a) \quad \text{with} \quad \theta_{k,a} \geq 0 \quad \text{for all $k$ and $a$}.
    \label{eq:latency_function_polynomial}
\end{equation}
Here $\theta_{k, a}$ is the weight coefficient of the $k$-th basis function in $\ell_a$, and $\sigma_k$ is the respective basis function.
The overall equilibrium is therefore parametrized by $\bftheta = (\theta_{k,a})_{k \in \{1,\ldots,K\}}^{a\in A}$ with $k$ indexing the mapping~$(\sigma_k)_k$.
In this work, we restrict ourselves to the polynomial family~$\bfsigma$ using $k=\{0,1\}$, and $\sigma_k (\bar y) = \bar y^k$. Then, the latency function for arc~$a$ becomes
$\ell^{\theta}_a(\bar y_a) = \theta_{0, a} + \theta_{1, a} * \bar y_a,$
with~$\bar y_a$ denoting the aggregated flow on arc~$a$.
In general, we can use arbitrary complex polynomials to describe the latency function.
The resulting latency function~\ref{eq:latency_function_polynomial} is a convex function, as the potentials~$(\sigma_k)_k$ are strictly convex for each $k$. Accordingly, we can extend the regularization by pertubation to this setting as detailed in Appendix~\ref{appendix:polynomialRegularized}.

\section{Numerical experiments}
\label{sec:numericalExperiments}
In the following, we provide a concise discussion of our numerical studies. For details, we refer to Appendix~\ref{appendix:traffic_scenarios} regarding the considered traffic scenarios, to Appendix~$\ref{appendix:benchmarks}$ regarding our \gls{acr:COAML} pipelines and baselines, and to Appendix~\ref{app:detailedResults} for an in-depth and complementary results discussions. The code for reproducing the results presented in this section is available at \url{https://github.com/tumBAIS/ML-CO-pipeline-TrafficPrediction}.

\begin{figure}[b]
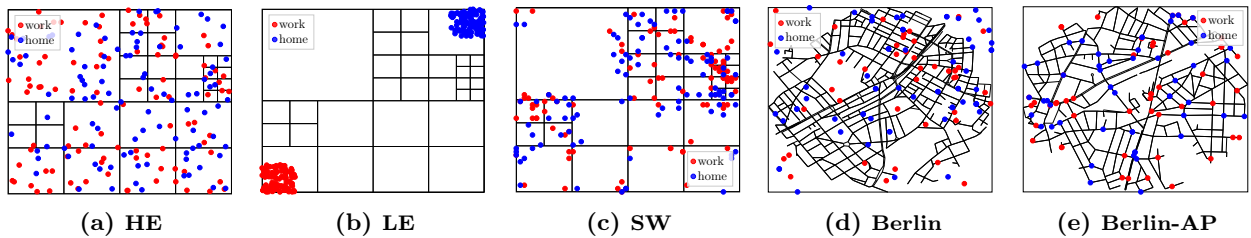

     \centering
     \begin{subfigure}[t]{0.19\textwidth}
         \centering
        \centering
        \resizebox{1\textwidth}{!}{
        \input{./figures/Input_Scenario_highEntropy.tex}
        }
        \caption{HE}
        \label{fig:scenarioHighEntropy}
     \end{subfigure}
     \hfill
     \begin{subfigure}[t]{0.19\textwidth}
         \centering
        \centering
        \resizebox{1\textwidth}{!}{
        \input{./figures/Input_Scenario_lowEntropy.tex}
        }
        \caption{LE}
        \label{fig:scenarioLowEntropy}
     \end{subfigure}
     \hfill
     \begin{subfigure}[t]{0.19\textwidth}
         \centering
        \centering
        \resizebox{1\textwidth}{!}{
        \input{./figures/Input_squareWorlds_short.tex}
        }
        \caption{SW}
        \label{fig:scenarioTraffic}
     \end{subfigure}
     \hfill
     \begin{subfigure}[t]{0.19\textwidth}
         \centering
        \resizebox{1\textwidth}{!}{
        \input{./figures/Input_cutoutWorld.tex}
        }
        \caption{Berlin}
        \label{fig:scenarioBerlin}
     \end{subfigure}
     \hfill
     \begin{subfigure}[t]{0.19\textwidth}
         \centering
        \resizebox{1\textwidth}{!}{
        \input{./figures/Input_Art.tex}
        }
        \caption{Berlin-AP}
        \label{fig:scenarioArt}
     \end{subfigure}
    \caption{Illustration of the considered traffic scenarios.}
    \label{fig:environments}
\end{figure}

\paragraph{Traffic Scenarios}
We consider four stylized scenarios that allow to isolate structural effects as well as two realistic scenarios to analyze the practical value of our approach. 
For the first two out of these six scenarios, we generate traffic samples $(x_i,y_i)_{i=1}^n$ as solutions to a static traffic equilibrium problem in the form of a \gls{acr:WE}. Instead, for the last four scenarios, we produce these samples by running the widely-used dynamic traffic microsimulator \emph{MATSim} \citep{Matsim2016}.
We base our stylized scenarios on the artificial road network proposed by \cite{eisenstat2011random} and consider different 
o-d pair distributions: the \textit{high-entropy} (HE) scenario (Figure~\ref{fig:scenarioHighEntropy}) comprises o-d pairs homogeneously distributed across the network, while the \textit{low-entropy} (LE) scenario (Figure~\ref{fig:scenarioLowEntropy}) comprises aggregated o-d pairs in the lower left and upper right corner. For both scenarios, we compute a \gls{acr:WE} based on simple latency functions~$\ell_a(\bar y_a)=d_a + \bar y_a$ with $d_a$ representing the length of arc~$a$ and $\bar y_a$ denoting the aggregated flow on arc~$a$. By so doing, we obtain rather distributed traffic flows in the HE scenario and rather polarized, combinatorial flows in the LE scenario. We expect the HE scenario to be easier predictable than the LE scenario for a pure \gls{acr:ML} baseline. We consider a square world (SW) scenario (Figure~\ref{fig:scenarioTraffic}), that keeps the artificial road network but comprises o-d pairs with a distribution that follows the network's distribution of roads. We also consider a time-variant version of this scenario (SW-TV). Moreover, we consider a realistic \textit{Berlin} scenario (Figure~\ref{fig:scenarioBerlin}) that bases on a district from a realistic MATSim scenario for the city of Berlin, as well as a \textit{Berlin-AP} scenario (Figure~\ref{fig:scenarioArt}), where we enforce all trips to artificially start and end in the respective district. For the latter four scenarios, we generate realistic traffic flows by running MATSim simulations.

For all scenarios, we create 9 training instances, 5 validation instances, and 6 test instances.
We run the training for a maximum of 20 hours, or 100 training epochs. We run the experiments on a computing cluster using 28-way Haswell-EP nodes with Infiniband FDR14 interconnect and 2 hardware threads per physical core.

\paragraph{Pipelines \& Baselines}
We consider two pure \gls{acr:ML} baselines, a \textit{FNN} and a \textit{GNN} architecture as detailed in Appendix~\ref{appendix:statistical_model}, which we trained via supervised learning directly imitating the target traffic flows~$\bar \bfy \in \mathbb{R}_+^{|A|}$ in the output layer. For our \gls{acr:COAML} pipelines, we leverage the very same \gls{acr:FNN} as statistical model to predict the latency parameterization~$\bftheta$, but consider three variants for equilibrium layer as specified in Section~\ref{sec:pipeline}: the \textit{CL}-pipeline uses Constant Latencies regularized by perturbation, the \textit{PL}-pipeline uses Polynomial Latencies regularized by perturbation, and the \textit{ER}-pipeline uses latencies with an Euclidean Regularization.

Our motivation for choosing the same FNN in both the pure ML baselines and the COAML pipelines stems from the fact that we wish to compare these approaches on equal footing in a way that clearly demonstrates the impact of the additional combinatorial layer. 
Moreover, this choice allows us to minimize the effect that \emph{parameter tuning} may have on the overall performance.

\paragraph{Performance Analyses} 
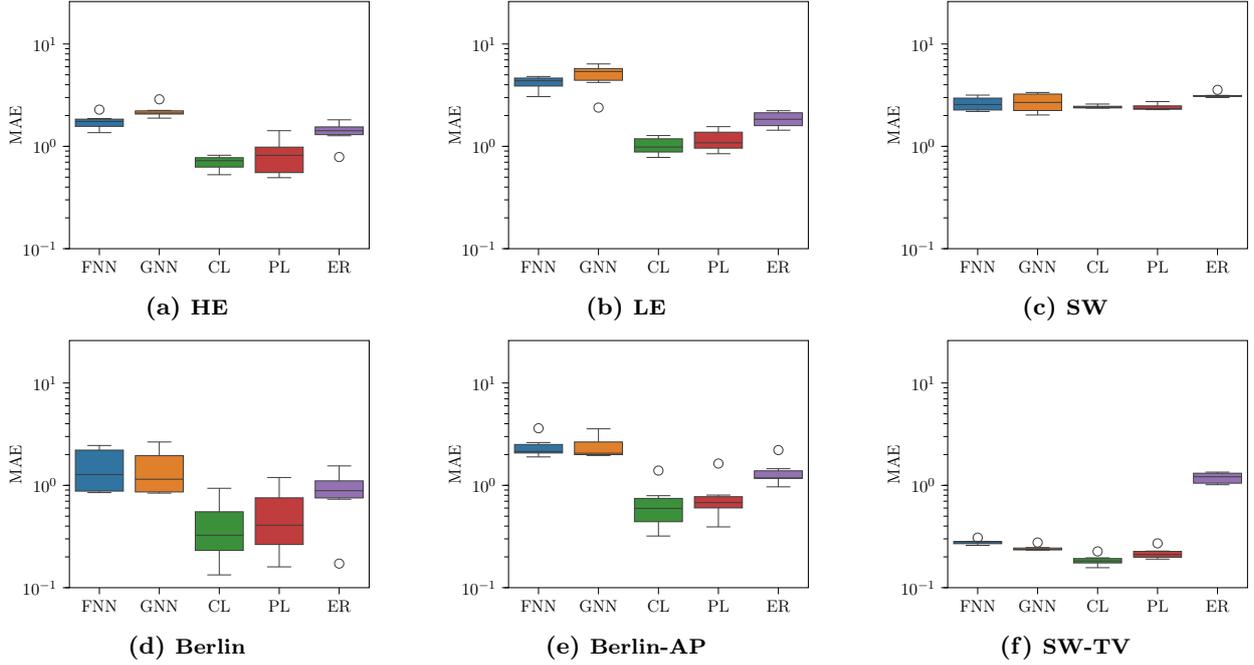
\begin{figure}[t]
     \centering
     \begin{subfigure}[t]{0.3\textwidth}
        \centering
        \resizebox{1\textwidth}{!}{        
\begin{tikzpicture}

\definecolor{brown1926061}{RGB}{192,60,61}
\definecolor{darkgray176}{RGB}{176,176,176}
\definecolor{darkslategray61}{RGB}{61,61,61}
\definecolor{lightgray204}{RGB}{204,204,204}
\definecolor{mediumpurple147113178}{RGB}{147,113,178}
\definecolor{peru22412844}{RGB}{224,128,44}
\definecolor{seagreen5814558}{RGB}{58,145,58}
\definecolor{steelblue49115161}{RGB}{49,115,161}

\begin{axis}[
legend style={fill opacity=0.8, draw opacity=1, text opacity=1, draw=lightgray204},
log basis y={10},
tick align=outside,
tick pos=left,
x grid style={darkgray176},
xmin=-0.5, xmax=4.5,
xtick style={color=black},
xtick={0,1,2,3,4},
xticklabels={FNN,GNN,CL,PL,ER},
y grid style={darkgray176},
ylabel style={align=center},
ylabel={MAE},
ymin=0.1, ymax=26,
ymode=log,
ytick style={color=black}
]
\path [draw=darkslategray61, fill=steelblue49115161]
(axis cs:-0.4,1.56779485358226)
--(axis cs:0.4,1.56779485358226)
--(axis cs:0.4,1.84412889584975)
--(axis cs:-0.4,1.84412889584975)
--(axis cs:-0.4,1.56779485358226)
--cycle;
\addplot [darkslategray61, forget plot]
table {%
0 1.56779485358226
0 1.36083438012698
};
\addplot [darkslategray61, forget plot]
table {%
0 1.84412889584975
0 1.87048888746007
};
\addplot [darkslategray61, forget plot]
table {%
-0.2 1.36083438012698
0.2 1.36083438012698
};
\addplot [darkslategray61, forget plot]
table {%
-0.2 1.87048888746007
0.2 1.87048888746007
};
\addplot [black, mark=o, mark size=3, mark options={solid,fill opacity=0,draw=darkslategray61}, only marks, forget plot]
table {%
0 2.28432240244001
};
\path [draw=darkslategray61, fill=peru22412844]
(axis cs:0.6,2.06933176759316)
--(axis cs:1.4,2.06933176759316)
--(axis cs:1.4,2.22189723292406)
--(axis cs:0.6,2.22189723292406)
--(axis cs:0.6,2.06933176759316)
--cycle;
\addplot [darkslategray61, forget plot]
table {%
1 2.06933176759316
1 1.8872582633582
};
\addplot [darkslategray61, forget plot]
table {%
1 2.22189723292406
1 2.24538152733418
};
\addplot [darkslategray61, forget plot]
table {%
0.8 1.8872582633582
1.2 1.8872582633582
};
\addplot [darkslategray61, forget plot]
table {%
0.8 2.24538152733418
1.2 2.24538152733418
};
\addplot [black, mark=o, mark size=3, mark options={solid,fill opacity=0,draw=darkslategray61}, only marks, forget plot]
table {%
1 2.88261204583034
};
\path [draw=darkslategray61, fill=seagreen5814558]
(axis cs:1.6,0.627307541804902)
--(axis cs:2.4,0.627307541804902)
--(axis cs:2.4,0.774914737762506)
--(axis cs:1.6,0.774914737762506)
--(axis cs:1.6,0.627307541804902)
--cycle;
\addplot [darkslategray61, forget plot]
table {%
2 0.627307541804902
2 0.527946170470486
};
\addplot [darkslategray61, forget plot]
table {%
2 0.774914737762506
2 0.818386216551123
};
\addplot [darkslategray61, forget plot]
table {%
1.8 0.527946170470486
2.2 0.527946170470486
};
\addplot [darkslategray61, forget plot]
table {%
1.8 0.818386216551123
2.2 0.818386216551123
};
\path [draw=darkslategray61, fill=brown1926061]
(axis cs:2.6,0.554056629694295)
--(axis cs:3.4,0.554056629694295)
--(axis cs:3.4,0.982384982742925)
--(axis cs:2.6,0.982384982742925)
--(axis cs:2.6,0.554056629694295)
--cycle;
\addplot [darkslategray61, forget plot]
table {%
3 0.554056629694295
3 0.493992227281728
};
\addplot [darkslategray61, forget plot]
table {%
3 0.982384982742925
3 1.42256799145572
};
\addplot [darkslategray61, forget plot]
table {%
2.8 0.493992227281728
3.2 0.493992227281728
};
\addplot [darkslategray61, forget plot]
table {%
2.8 1.42256799145572
3.2 1.42256799145572
};
\path [draw=darkslategray61, fill=mediumpurple147113178]
(axis cs:3.6,1.30167581384878)
--(axis cs:4.4,1.30167581384878)
--(axis cs:4.4,1.54893287461699)
--(axis cs:3.6,1.54893287461699)
--(axis cs:3.6,1.30167581384878)
--cycle;
\addplot [darkslategray61, forget plot]
table {%
4 1.30167581384878
4 1.2701883330458
};
\addplot [darkslategray61, forget plot]
table {%
4 1.54893287461699
4 1.81640737682287
};
\addplot [darkslategray61, forget plot]
table {%
3.8 1.2701883330458
4.2 1.2701883330458
};
\addplot [darkslategray61, forget plot]
table {%
3.8 1.81640737682287
4.2 1.81640737682287
};
\addplot [black, mark=o, mark size=3, mark options={solid,fill opacity=0,draw=darkslategray61}, only marks, forget plot]
table {%
4 0.786637067887213
};
\addplot [darkslategray61, forget plot]
table {%
-0.4 1.7500879595414
0.4 1.7500879595414
};
\addplot [darkslategray61, forget plot]
table {%
0.6 2.11483584203195
1.4 2.11483584203195
};
\addplot [darkslategray61, forget plot]
table {%
1.6 0.724479869417833
2.4 0.724479869417833
};
\addplot [darkslategray61, forget plot]
table {%
2.6 0.815782392500781
3.4 0.815782392500781
};
\addplot [darkslategray61, forget plot]
table {%
3.6 1.41928783766476
4.4 1.41928783766476
};
\end{axis}

\end{tikzpicture}}
        \caption{HE}
        \label{fig:resultHighEntropy}
    \end{subfigure}
     \hfill
     \begin{subfigure}[t]{0.3\textwidth}
        \centering
        \resizebox{1\textwidth}{!}{
\begin{tikzpicture}

\definecolor{brown1926061}{RGB}{192,60,61}
\definecolor{darkgray176}{RGB}{176,176,176}
\definecolor{darkslategray61}{RGB}{61,61,61}
\definecolor{lightgray204}{RGB}{204,204,204}
\definecolor{mediumpurple147113178}{RGB}{147,113,178}
\definecolor{peru22412844}{RGB}{224,128,44}
\definecolor{seagreen5814558}{RGB}{58,145,58}
\definecolor{steelblue49115161}{RGB}{49,115,161}

\begin{axis}[
legend style={fill opacity=0.8, draw opacity=1, text opacity=1, draw=lightgray204},
log basis y={10},
tick align=outside,
tick pos=left,
x grid style={darkgray176},
xmin=-0.5, xmax=4.5,
xtick style={color=black},
xtick={0,1,2,3,4},
xticklabels={FNN,GNN,CL,PL,ER},
y grid style={darkgray176},
ylabel style={align=center},
ylabel={MAE},
ymin=0.1, ymax=26,
ymode=log,
ytick style={color=black}
]
\path [draw=darkslategray61, fill=steelblue49115161]
(axis cs:-0.4,3.88233433857908)
--(axis cs:0.4,3.88233433857908)
--(axis cs:0.4,4.64443674221889)
--(axis cs:-0.4,4.64443674221889)
--(axis cs:-0.4,3.88233433857908)
--cycle;
\addplot [darkslategray61, forget plot]
table {%
0 3.88233433857908
0 3.06079220381798
};
\addplot [darkslategray61, forget plot]
table {%
0 4.64443674221889
0 4.81364744142383
};
\addplot [darkslategray61, forget plot]
table {%
-0.2 3.06079220381798
0.2 3.06079220381798
};
\addplot [darkslategray61, forget plot]
table {%
-0.2 4.81364744142383
0.2 4.81364744142383
};
\path [draw=darkslategray61, fill=peru22412844]
(axis cs:0.6,4.42255603945375)
--(axis cs:1.4,4.42255603945375)
--(axis cs:1.4,5.75636149830142)
--(axis cs:0.6,5.75636149830142)
--(axis cs:0.6,4.42255603945375)
--cycle;
\addplot [darkslategray61, forget plot]
table {%
1 4.42255603945375
1 4.2099908007308
};
\addplot [darkslategray61, forget plot]
table {%
1 5.75636149830142
1 6.38413217221228
};
\addplot [darkslategray61, forget plot]
table {%
0.8 4.2099908007308
1.2 4.2099908007308
};
\addplot [darkslategray61, forget plot]
table {%
0.8 6.38413217221228
1.2 6.38413217221228
};
\addplot [black, mark=o, mark size=3, mark options={solid,fill opacity=0,draw=darkslategray61}, only marks, forget plot]
table {%
1 2.39123265808662
};
\path [draw=darkslategray61, fill=seagreen5814558]
(axis cs:1.6,0.880770728308566)
--(axis cs:2.4,0.880770728308566)
--(axis cs:2.4,1.18491687362901)
--(axis cs:1.6,1.18491687362901)
--(axis cs:1.6,0.880770728308566)
--cycle;
\addplot [darkslategray61, forget plot]
table {%
2 0.880770728308566
2 0.779526387735298
};
\addplot [darkslategray61, forget plot]
table {%
2 1.18491687362901
2 1.27530904126614
};
\addplot [darkslategray61, forget plot]
table {%
1.8 0.779526387735298
2.2 0.779526387735298
};
\addplot [darkslategray61, forget plot]
table {%
1.8 1.27530904126614
2.2 1.27530904126614
};
\path [draw=darkslategray61, fill=brown1926061]
(axis cs:2.6,0.960744349910603)
--(axis cs:3.4,0.960744349910603)
--(axis cs:3.4,1.37494181742253)
--(axis cs:2.6,1.37494181742253)
--(axis cs:2.6,0.960744349910603)
--cycle;
\addplot [darkslategray61, forget plot]
table {%
3 0.960744349910603
3 0.847115969271841
};
\addplot [darkslategray61, forget plot]
table {%
3 1.37494181742253
3 1.5581565272775
};
\addplot [darkslategray61, forget plot]
table {%
2.8 0.847115969271841
3.2 0.847115969271841
};
\addplot [darkslategray61, forget plot]
table {%
2.8 1.5581565272775
3.2 1.5581565272775
};
\path [draw=darkslategray61, fill=mediumpurple147113178]
(axis cs:3.6,1.5936241541159)
--(axis cs:4.4,1.5936241541159)
--(axis cs:4.4,2.1270800470881)
--(axis cs:3.6,2.1270800470881)
--(axis cs:3.6,1.5936241541159)
--cycle;
\addplot [darkslategray61, forget plot]
table {%
4 1.5936241541159
4 1.43852460711505
};
\addplot [darkslategray61, forget plot]
table {%
4 2.1270800470881
4 2.22786630874731
};
\addplot [darkslategray61, forget plot]
table {%
3.8 1.43852460711505
4.2 1.43852460711505
};
\addplot [darkslategray61, forget plot]
table {%
3.8 2.22786630874731
4.2 2.22786630874731
};
\addplot [darkslategray61, forget plot]
table {%
-0.4 4.37157034726663
0.4 4.37157034726663
};
\addplot [darkslategray61, forget plot]
table {%
0.6 5.38671696798738
1.4 5.38671696798738
};
\addplot [darkslategray61, forget plot]
table {%
1.6 0.983533780183224
2.4 0.983533780183224
};
\addplot [darkslategray61, forget plot]
table {%
2.6 1.0839779853039
3.4 1.0839779853039
};
\addplot [darkslategray61, forget plot]
table {%
3.6 1.83957807453704
4.4 1.83957807453704
};
\end{axis}

\end{tikzpicture}}
        \caption{LE}
        \label{fig:resultLowEntropy}
    \end{subfigure}
    \hfill
     \begin{subfigure}[t]{0.3\textwidth}
        \centering
        \resizebox{1\textwidth}{!}{
\begin{tikzpicture}

\definecolor{brown1926061}{RGB}{192,60,61}
\definecolor{darkgray176}{RGB}{176,176,176}
\definecolor{darkslategray61}{RGB}{61,61,61}
\definecolor{lightgray204}{RGB}{204,204,204}
\definecolor{mediumpurple147113178}{RGB}{147,113,178}
\definecolor{peru22412844}{RGB}{224,128,44}
\definecolor{seagreen5814558}{RGB}{58,145,58}
\definecolor{steelblue49115161}{RGB}{49,115,161}

\begin{axis}[
legend style={fill opacity=0.8, draw opacity=1, text opacity=1, draw=lightgray204},
log basis y={10},
tick align=outside,
tick pos=left,
x grid style={darkgray176},
xmin=-0.5, xmax=4.5,
xtick style={color=black},
xtick={0,1,2,3,4},
xticklabels={FNN,GNN,CL,PL,ER},
y grid style={darkgray176},
ylabel style={align=center},
ylabel={MAE},
ymin=0.1, ymax=26,
ymode=log,
ytick style={color=black}
]
\path [draw=darkslategray61, fill=steelblue49115161]
(axis cs:-0.4,2.26561275318626)
--(axis cs:0.4,2.26561275318626)
--(axis cs:0.4,2.95977396250799)
--(axis cs:-0.4,2.95977396250799)
--(axis cs:-0.4,2.26561275318626)
--cycle;
\addplot [darkslategray61, forget plot]
table {%
0 2.26561275318626
0 2.19078951515257
};
\addplot [darkslategray61, forget plot]
table {%
0 2.95977396250799
0 3.16795926988125
};
\addplot [darkslategray61, forget plot]
table {%
-0.2 2.19078951515257
0.2 2.19078951515257
};
\addplot [darkslategray61, forget plot]
table {%
-0.2 3.16795926988125
0.2 3.16795926988125
};
\path [draw=darkslategray61, fill=peru22412844]
(axis cs:0.6,2.23819573672981)
--(axis cs:1.4,2.23819573672981)
--(axis cs:1.4,3.24261608925359)
--(axis cs:0.6,3.24261608925359)
--(axis cs:0.6,2.23819573672981)
--cycle;
\addplot [darkslategray61, forget plot]
table {%
1 2.23819573672981
1 2.02587104216218
};
\addplot [darkslategray61, forget plot]
table {%
1 3.24261608925359
1 3.36064590513706
};
\addplot [darkslategray61, forget plot]
table {%
0.8 2.02587104216218
1.2 2.02587104216218
};
\addplot [darkslategray61, forget plot]
table {%
0.8 3.36064590513706
1.2 3.36064590513706
};
\path [draw=darkslategray61, fill=seagreen5814558]
(axis cs:1.6,2.37442528735632)
--(axis cs:2.4,2.37442528735632)
--(axis cs:2.4,2.462109375)
--(axis cs:1.6,2.462109375)
--(axis cs:1.6,2.37442528735632)
--cycle;
\addplot [darkslategray61, forget plot]
table {%
2 2.37442528735632
2 2.36301369863014
};
\addplot [darkslategray61, forget plot]
table {%
2 2.462109375
2 2.59166666666667
};
\addplot [darkslategray61, forget plot]
table {%
1.8 2.36301369863014
2.2 2.36301369863014
};
\addplot [darkslategray61, forget plot]
table {%
1.8 2.59166666666667
2.2 2.59166666666667
};
\path [draw=darkslategray61, fill=brown1926061]
(axis cs:2.6,2.30716258446113)
--(axis cs:3.4,2.30716258446113)
--(axis cs:3.4,2.48556764198902)
--(axis cs:2.6,2.48556764198902)
--(axis cs:2.6,2.30716258446113)
--cycle;
\addplot [darkslategray61, forget plot]
table {%
3 2.30716258446113
3 2.28481424611849
};
\addplot [darkslategray61, forget plot]
table {%
3 2.48556764198902
3 2.7364452486176
};
\addplot [darkslategray61, forget plot]
table {%
2.8 2.28481424611849
3.2 2.28481424611849
};
\addplot [darkslategray61, forget plot]
table {%
2.8 2.7364452486176
3.2 2.7364452486176
};
\path [draw=darkslategray61, fill=mediumpurple147113178]
(axis cs:3.6,3.07664563216088)
--(axis cs:4.4,3.07664563216088)
--(axis cs:4.4,3.13422668661382)
--(axis cs:3.6,3.13422668661382)
--(axis cs:3.6,3.07664563216088)
--cycle;
\addplot [darkslategray61, forget plot]
table {%
4 3.07664563216088
4 2.99960494108389
};
\addplot [darkslategray61, forget plot]
table {%
4 3.13422668661382
4 3.14724525079176
};
\addplot [darkslategray61, forget plot]
table {%
3.8 2.99960494108389
4.2 2.99960494108389
};
\addplot [darkslategray61, forget plot]
table {%
3.8 3.14724525079176
4.2 3.14724525079176
};
\addplot [black, mark=o, mark size=3, mark options={solid,fill opacity=0,draw=darkslategray61}, only marks, forget plot]
table {%
4 3.56342322143574
};
\addplot [darkslategray61, forget plot]
table {%
-0.4 2.56235181726427
0.4 2.56235181726427
};
\addplot [darkslategray61, forget plot]
table {%
0.6 2.68682514317598
1.4 2.68682514317598
};
\addplot [darkslategray61, forget plot]
table {%
1.6 2.409375
2.4 2.409375
};
\addplot [darkslategray61, forget plot]
table {%
2.6 2.3323844347674
3.4 2.3323844347674
};
\addplot [darkslategray61, forget plot]
table {%
3.6 3.09191337618607
4.4 3.09191337618607
};
\end{axis}

\end{tikzpicture}}
        \caption{SW}
        \label{fig:result_squareWorld}
    \end{subfigure}
     \hfill
     \begin{subfigure}[t]{0.3\textwidth}
        \centering
        \resizebox{1\textwidth}{!}{

\begin{tikzpicture}

\definecolor{brown1926061}{RGB}{192,60,61}
\definecolor{darkgray176}{RGB}{176,176,176}
\definecolor{darkslategray61}{RGB}{61,61,61}
\definecolor{lightgray204}{RGB}{204,204,204}
\definecolor{mediumpurple147113178}{RGB}{147,113,178}
\definecolor{peru22412844}{RGB}{224,128,44}
\definecolor{seagreen5814558}{RGB}{58,145,58}
\definecolor{steelblue49115161}{RGB}{49,115,161}

\begin{axis}[
legend style={fill opacity=0.8, draw opacity=1, text opacity=1, draw=lightgray204},
log basis y={10},
tick align=outside,
tick pos=left,
x grid style={darkgray176},
xmin=-0.5, xmax=4.5,
xtick style={color=black},
xtick={0,1,2,3,4},
xticklabels={FNN,GNN,CL,PL,ER},
y grid style={darkgray176},
ylabel style={align=center},
ylabel={MAE},
ymin=0.1, ymax=26,
ymode=log,
ytick style={color=black}
]
\path [draw=darkslategray61, fill=steelblue49115161]
(axis cs:-0.4,0.878295018572399)
--(axis cs:0.4,0.878295018572399)
--(axis cs:0.4,2.20954416569466)
--(axis cs:-0.4,2.20954416569466)
--(axis cs:-0.4,0.878295018572399)
--cycle;
\addplot [darkslategray61, forget plot]
table {%
0 0.878295018572399
0 0.849400759122975
};
\addplot [darkslategray61, forget plot]
table {%
0 2.20954416569466
0 2.45209041377327
};
\addplot [darkslategray61, forget plot]
table {%
-0.2 0.849400759122975
0.2 0.849400759122975
};
\addplot [darkslategray61, forget plot]
table {%
-0.2 2.45209041377327
0.2 2.45209041377327
};
\path [draw=darkslategray61, fill=peru22412844]
(axis cs:0.6,0.86230854141312)
--(axis cs:1.4,0.86230854141312)
--(axis cs:1.4,1.94898745612164)
--(axis cs:0.6,1.94898745612164)
--(axis cs:0.6,0.86230854141312)
--cycle;
\addplot [darkslategray61, forget plot]
table {%
1 0.86230854141312
1 0.839496102583875
};
\addplot [darkslategray61, forget plot]
table {%
1 1.94898745612164
1 2.65268633710369
};
\addplot [darkslategray61, forget plot]
table {%
0.8 0.839496102583875
1.2 0.839496102583875
};
\addplot [darkslategray61, forget plot]
table {%
0.8 2.65268633710369
1.2 2.65268633710369
};
\path [draw=darkslategray61, fill=seagreen5814558]
(axis cs:1.6,0.231316198800089)
--(axis cs:2.4,0.231316198800089)
--(axis cs:2.4,0.551658807289302)
--(axis cs:1.6,0.551658807289302)
--(axis cs:1.6,0.231316198800089)
--cycle;
\addplot [darkslategray61, forget plot]
table {%
2 0.231316198800089
2 0.133333333333333
};
\addplot [darkslategray61, forget plot]
table {%
2 0.551658807289302
2 0.936139332365747
};
\addplot [darkslategray61, forget plot]
table {%
1.8 0.133333333333333
2.2 0.133333333333333
};
\addplot [darkslategray61, forget plot]
table {%
1.8 0.936139332365747
2.2 0.936139332365747
};
\path [draw=darkslategray61, fill=brown1926061]
(axis cs:2.6,0.264355297284526)
--(axis cs:3.4,0.264355297284526)
--(axis cs:3.4,0.755176563071732)
--(axis cs:2.6,0.755176563071732)
--(axis cs:2.6,0.264355297284526)
--cycle;
\addplot [darkslategray61, forget plot]
table {%
3 0.264355297284526
3 0.159788578502061
};
\addplot [darkslategray61, forget plot]
table {%
3 0.755176563071732
3 1.19340059364039
};
\addplot [darkslategray61, forget plot]
table {%
2.8 0.159788578502061
3.2 0.159788578502061
};
\addplot [darkslategray61, forget plot]
table {%
2.8 1.19340059364039
3.2 1.19340059364039
};
\path [draw=darkslategray61, fill=mediumpurple147113178]
(axis cs:3.6,0.753164922622012)
--(axis cs:4.4,0.753164922622012)
--(axis cs:4.4,1.10573926936779)
--(axis cs:3.6,1.10573926936779)
--(axis cs:3.6,0.753164922622012)
--cycle;
\addplot [darkslategray61, forget plot]
table {%
4 0.753164922622012
4 0.732019771593183
};
\addplot [darkslategray61, forget plot]
table {%
4 1.10573926936779
4 1.54852498946018
};
\addplot [darkslategray61, forget plot]
table {%
3.8 0.732019771593183
4.2 0.732019771593183
};
\addplot [darkslategray61, forget plot]
table {%
3.8 1.54852498946018
4.2 1.54852498946018
};
\addplot [black, mark=o, mark size=3, mark options={solid,fill opacity=0,draw=darkslategray61}, only marks, forget plot]
table {%
4 0.171936870655977
};
\addplot [darkslategray61, forget plot]
table {%
-0.4 1.27364095459584
0.4 1.27364095459584
};
\addplot [darkslategray61, forget plot]
table {%
0.6 1.1478945511938
1.4 1.1478945511938
};
\addplot [darkslategray61, forget plot]
table {%
1.6 0.325725641334457
2.4 0.325725641334457
};
\addplot [darkslategray61, forget plot]
table {%
2.6 0.407894543180346
3.4 0.407894543180346
};
\addplot [darkslategray61, forget plot]
table {%
3.6 0.886331149771988
4.4 0.886331149771988
};
\end{axis}

\end{tikzpicture}}
        \caption{Berlin}
        \label{fig:result_cutoutWorld}
    \end{subfigure}
    \hfill
    \begin{subfigure}[t]{0.3\textwidth}
        \centering
        \resizebox{1\textwidth}{!}{
\begin{tikzpicture}

\definecolor{brown1926061}{RGB}{192,60,61}
\definecolor{darkgray176}{RGB}{176,176,176}
\definecolor{darkslategray61}{RGB}{61,61,61}
\definecolor{lightgray204}{RGB}{204,204,204}
\definecolor{mediumpurple147113178}{RGB}{147,113,178}
\definecolor{peru22412844}{RGB}{224,128,44}
\definecolor{seagreen5814558}{RGB}{58,145,58}
\definecolor{steelblue49115161}{RGB}{49,115,161}

\begin{axis}[
legend style={fill opacity=0.8, draw opacity=1, text opacity=1, draw=lightgray204},
log basis y={10},
tick align=outside,
tick pos=left,
x grid style={darkgray176},
xmin=-0.5, xmax=4.5,
xtick style={color=black},
xtick={0,1,2,3,4},
xticklabels={FNN,GNN,CL,PL,ER},
y grid style={darkgray176},
ylabel style={align=center},
ylabel={MAE},
ymin=0.1, ymax=26,
ymode=log,
ytick style={color=black}
]
\path [draw=darkslategray61, fill=steelblue49115161]
(axis cs:-0.4,2.07044170581643)
--(axis cs:0.4,2.07044170581643)
--(axis cs:0.4,2.50866466809288)
--(axis cs:-0.4,2.50866466809288)
--(axis cs:-0.4,2.07044170581643)
--cycle;
\addplot [darkslategray61, forget plot]
table {%
0 2.07044170581643
0 1.89583155091281
};
\addplot [darkslategray61, forget plot]
table {%
0 2.50866466809288
0 2.62078077309165
};
\addplot [darkslategray61, forget plot]
table {%
-0.2 1.89583155091281
0.2 1.89583155091281
};
\addplot [darkslategray61, forget plot]
table {%
-0.2 2.62078077309165
0.2 2.62078077309165
};
\addplot [black, mark=o, mark size=3, mark options={solid,fill opacity=0,draw=darkslategray61}, only marks, forget plot]
table {%
0 3.61284809097409
};
\path [draw=darkslategray61, fill=peru22412844]
(axis cs:0.6,1.99581459190335)
--(axis cs:1.4,1.99581459190335)
--(axis cs:1.4,2.65477032039871)
--(axis cs:0.6,2.65477032039871)
--(axis cs:0.6,1.99581459190335)
--cycle;
\addplot [darkslategray61, forget plot]
table {%
1 1.99581459190335
1 1.95712610786277
};
\addplot [darkslategray61, forget plot]
table {%
1 2.65477032039871
1 3.5677911511717
};
\addplot [darkslategray61, forget plot]
table {%
0.8 1.95712610786277
1.2 1.95712610786277
};
\addplot [darkslategray61, forget plot]
table {%
0.8 3.5677911511717
1.2 3.5677911511717
};
\path [draw=darkslategray61, fill=seagreen5814558]
(axis cs:1.6,0.441304525096589)
--(axis cs:2.4,0.441304525096589)
--(axis cs:2.4,0.745608869575812)
--(axis cs:1.6,0.745608869575812)
--(axis cs:1.6,0.441304525096589)
--cycle;
\addplot [darkslategray61, forget plot]
table {%
2 0.441304525096589
2 0.319734345351044
};
\addplot [darkslategray61, forget plot]
table {%
2 0.745608869575812
2 0.792792792792793
};
\addplot [darkslategray61, forget plot]
table {%
1.8 0.319734345351044
2.2 0.319734345351044
};
\addplot [darkslategray61, forget plot]
table {%
1.8 0.792792792792793
2.2 0.792792792792793
};
\addplot [black, mark=o, mark size=3, mark options={solid,fill opacity=0,draw=darkslategray61}, only marks, forget plot]
table {%
2 1.39438502673797
};
\path [draw=darkslategray61, fill=brown1926061]
(axis cs:2.6,0.601950638255744)
--(axis cs:3.4,0.601950638255744)
--(axis cs:3.4,0.77456778695325)
--(axis cs:2.6,0.77456778695325)
--(axis cs:2.6,0.601950638255744)
--cycle;
\addplot [darkslategray61, forget plot]
table {%
3 0.601950638255744
3 0.392630272654599
};
\addplot [darkslategray61, forget plot]
table {%
3 0.77456778695325
3 0.803416574512694
};
\addplot [darkslategray61, forget plot]
table {%
2.8 0.392630272654599
3.2 0.392630272654599
};
\addplot [darkslategray61, forget plot]
table {%
2.8 0.803416574512694
3.2 0.803416574512694
};
\addplot [black, mark=o, mark size=3, mark options={solid,fill opacity=0,draw=darkslategray61}, only marks, forget plot]
table {%
3 1.63397420772557
};
\path [draw=darkslategray61, fill=mediumpurple147113178]
(axis cs:3.6,1.1703078151595)
--(axis cs:4.4,1.1703078151595)
--(axis cs:4.4,1.38606744693699)
--(axis cs:3.6,1.38606744693699)
--(axis cs:3.6,1.1703078151595)
--cycle;
\addplot [darkslategray61, forget plot]
table {%
4 1.1703078151595
4 0.966693886293088
};
\addplot [darkslategray61, forget plot]
table {%
4 1.38606744693699
4 1.45419943672203
};
\addplot [darkslategray61, forget plot]
table {%
3.8 0.966693886293088
4.2 0.966693886293088
};
\addplot [darkslategray61, forget plot]
table {%
3.8 1.45419943672203
4.2 1.45419943672203
};
\addplot [black, mark=o, mark size=3, mark options={solid,fill opacity=0,draw=darkslategray61}, only marks, forget plot]
table {%
4 2.21074000403689
};
\addplot [darkslategray61, forget plot]
table {%
-0.4 2.14021893305354
0.4 2.14021893305354
};
\addplot [darkslategray61, forget plot]
table {%
0.6 2.05986056654007
1.4 2.05986056654007
};
\addplot [darkslategray61, forget plot]
table {%
1.6 0.595692238020923
2.4 0.595692238020923
};
\addplot [darkslategray61, forget plot]
table {%
2.6 0.677780043746762
3.4 0.677780043746762
};
\addplot [darkslategray61, forget plot]
table {%
3.6 1.18104348328784
4.4 1.18104348328784
};
\end{axis}

\end{tikzpicture}}
        \caption{Berlin-AP}
        \label{fig:result_districtWorldArt}
    \end{subfigure}
    \hfill
    \begin{subfigure}[t]{0.3\textwidth}
        \centering
        \resizebox{1\textwidth}{!}{
\begin{tikzpicture}

\definecolor{brown1926061}{RGB}{192,60,61}
\definecolor{darkgray176}{RGB}{176,176,176}
\definecolor{darkslategray61}{RGB}{61,61,61}
\definecolor{lightgray204}{RGB}{204,204,204}
\definecolor{mediumpurple147113178}{RGB}{147,113,178}
\definecolor{peru22412844}{RGB}{224,128,44}
\definecolor{seagreen5814558}{RGB}{58,145,58}
\definecolor{steelblue49115161}{RGB}{49,115,161}

\begin{axis}[
legend style={fill opacity=0.8, draw opacity=1, text opacity=1, draw=lightgray204},
log basis y={10},
tick align=outside,
tick pos=left,
x grid style={darkgray176},
xmin=-0.5, xmax=4.5,
xtick style={color=black},
xtick={0,1,2,3,4},
xticklabels={FNN,GNN,CL,PL,ER},
y grid style={darkgray176},
ylabel style={align=center},
ylabel={MAE},
ymin=0.1, ymax=26,
ymode=log,
ytick style={color=black}
]
\path [draw=darkslategray61, fill=steelblue49115161]
(axis cs:-0.4,0.269110718930211)
--(axis cs:0.4,0.269110718930211)
--(axis cs:0.4,0.2834422875458)
--(axis cs:-0.4,0.2834422875458)
--(axis cs:-0.4,0.269110718930211)
--cycle;
\addplot [darkslategray61, forget plot]
table {%
0 0.269110718930211
0 0.258990821487214
};
\addplot [darkslategray61, forget plot]
table {%
0 0.2834422875458
0 0.283889155417497
};
\addplot [darkslategray61, forget plot]
table {%
-0.2 0.258990821487214
0.2 0.258990821487214
};
\addplot [darkslategray61, forget plot]
table {%
-0.2 0.283889155417497
0.2 0.283889155417497
};
\addplot [black, mark=o, mark size=3, mark options={solid,fill opacity=0,draw=darkslategray61}, only marks, forget plot]
table {%
0 0.308273662426112
};
\path [draw=darkslategray61, fill=peru22412844]
(axis cs:0.6,0.234221536476272)
--(axis cs:1.4,0.234221536476272)
--(axis cs:1.4,0.244190301895499)
--(axis cs:0.6,0.244190301895499)
--(axis cs:0.6,0.234221536476272)
--cycle;
\addplot [darkslategray61, forget plot]
table {%
1 0.234221536476272
1 0.23314465537258
};
\addplot [darkslategray61, forget plot]
table {%
1 0.244190301895499
1 0.24667572141019
};
\addplot [darkslategray61, forget plot]
table {%
0.8 0.23314465537258
1.2 0.23314465537258
};
\addplot [darkslategray61, forget plot]
table {%
0.8 0.24667572141019
1.2 0.24667572141019
};
\addplot [black, mark=o, mark size=3, mark options={solid,fill opacity=0,draw=darkslategray61}, only marks, forget plot]
table {%
1 0.27614746970091
};
\path [draw=darkslategray61, fill=seagreen5814558]
(axis cs:1.6,0.174543199082635)
--(axis cs:2.4,0.174543199082635)
--(axis cs:2.4,0.19267461447897)
--(axis cs:1.6,0.19267461447897)
--(axis cs:1.6,0.174543199082635)
--cycle;
\addplot [darkslategray61, forget plot]
table {%
2 0.174543199082635
2 0.156734366648851
};
\addplot [darkslategray61, forget plot]
table {%
2 0.19267461447897
2 0.195699975568043
};
\addplot [darkslategray61, forget plot]
table {%
1.8 0.156734366648851
2.2 0.156734366648851
};
\addplot [darkslategray61, forget plot]
table {%
1.8 0.195699975568043
2.2 0.195699975568043
};
\addplot [black, mark=o, mark size=3, mark options={solid,fill opacity=0,draw=darkslategray61}, only marks, forget plot]
table {%
2 0.226354082457559
};
\path [draw=darkslategray61, fill=brown1926061]
(axis cs:2.6,0.198549889203886)
--(axis cs:3.4,0.198549889203886)
--(axis cs:3.4,0.226308403806544)
--(axis cs:2.6,0.226308403806544)
--(axis cs:2.6,0.198549889203886)
--cycle;
\addplot [darkslategray61, forget plot]
table {%
3 0.198549889203886
3 0.190347448048733
};
\addplot [darkslategray61, forget plot]
table {%
3 0.226308403806544
3 0.227877740898971
};
\addplot [darkslategray61, forget plot]
table {%
2.8 0.190347448048733
3.2 0.190347448048733
};
\addplot [darkslategray61, forget plot]
table {%
2.8 0.227877740898971
3.2 0.227877740898971
};
\addplot [black, mark=o, mark size=3, mark options={solid,fill opacity=0,draw=darkslategray61}, only marks, forget plot]
table {%
3 0.271002714911451
};
\path [draw=darkslategray61, fill=mediumpurple147113178]
(axis cs:3.6,1.050038105193)
--(axis cs:4.4,1.050038105193)
--(axis cs:4.4,1.31327359604256)
--(axis cs:3.6,1.31327359604256)
--(axis cs:3.6,1.050038105193)
--cycle;
\addplot [darkslategray61, forget plot]
table {%
4 1.050038105193
4 1.01394957622829
};
\addplot [darkslategray61, forget plot]
table {%
4 1.31327359604256
4 1.34616971182614
};
\addplot [darkslategray61, forget plot]
table {%
3.8 1.01394957622829
4.2 1.01394957622829
};
\addplot [darkslategray61, forget plot]
table {%
3.8 1.34616971182614
4.2 1.34616971182614
};
\addplot [darkslategray61, forget plot]
table {%
-0.4 0.28034991000699
0.4 0.28034991000699
};
\addplot [darkslategray61, forget plot]
table {%
0.6 0.236610257277181
1.4 0.236610257277181
};
\addplot [darkslategray61, forget plot]
table {%
1.6 0.182503575562325
2.4 0.182503575562325
};
\addplot [darkslategray61, forget plot]
table {%
2.6 0.210649007599615
3.4 0.210649007599615
};
\addplot [darkslategray61, forget plot]
table {%
3.6 1.21268938367274
4.4 1.21268938367274
};
\end{axis}

\end{tikzpicture}}
        \caption{SW-TV}
        \label{fig:result_time}
    \end{subfigure}
    \caption{Benchmark performances on various stylized and realistic traffic scenarios.}
    \label{fig:results_real_traffic}
\end{figure}

\begin{figure}[b]
     \centering
     \begin{subfigure}[t]{0.22\textwidth}
        \centering 
        \includegraphics[width=\textwidth]{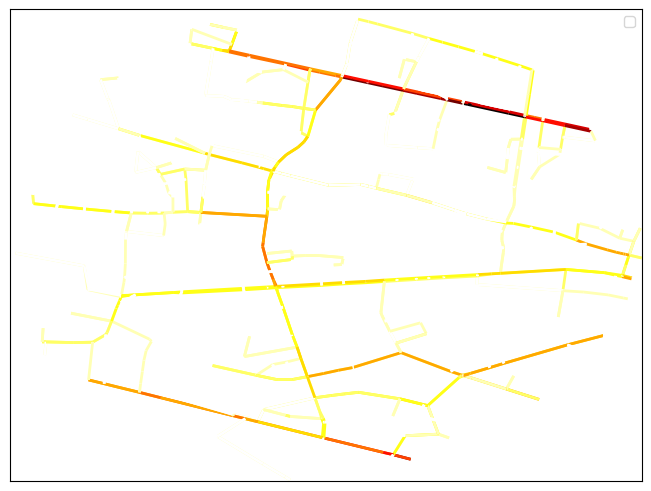}
        \caption{Target}
        \label{fig:TrueFlow}
     \end{subfigure}
     \hfill
     \begin{subfigure}[t]{0.22\textwidth}
        \centering
        \includegraphics[width=\textwidth]{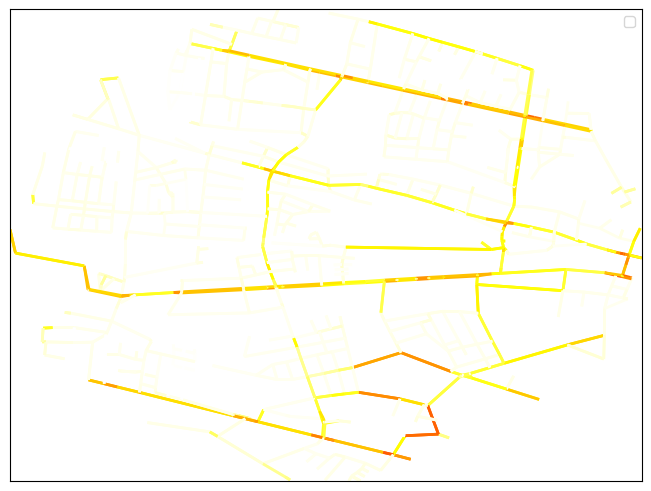}
        \caption{\textit{FNN}}
        \label{fig:SupervisedFlow}
     \end{subfigure}
     \hfill
     \begin{subfigure}[t]{0.22\textwidth}
        \centering
        \includegraphics[width=\textwidth]{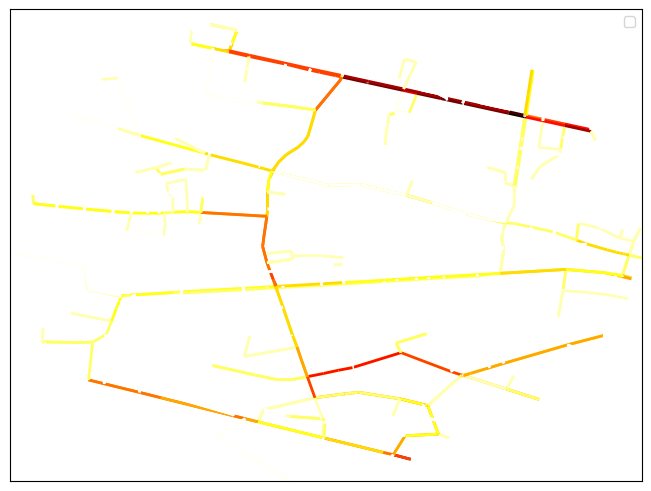}
        \caption{\textit{CL}}
        \label{fig:CLFlow}
     \end{subfigure}
     \hfill
     \begin{subfigure}[t]{0.274\textwidth} 
        \centering
        \includegraphics[width=\textwidth]{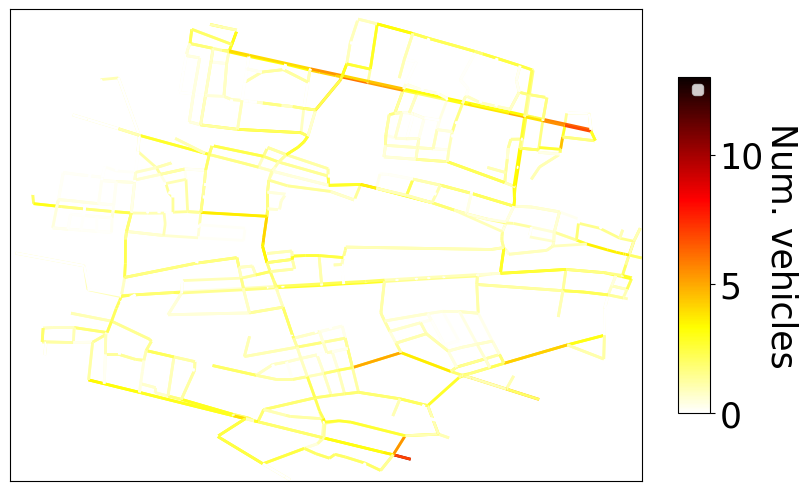}
        \caption{\textit{ER}}
        \label{fig:ERFlow}
     \end{subfigure}
    \caption{Visualization of time-invariant traffic flows.}
    \label{fig:flow}
\end{figure}

Figure~\ref{fig:results_real_traffic} shows the performance of our \gls{acr:COAML} pipeline with different equilibrium layers as well as the pure \gls{acr:ML} baselines for all traffic scenarios. Focusing on our \gls{acr:COAML} pipelines, we observe that the \textit{CL}-pipeline using piecewise constant latencies obtains the highest accuracy. 
The \textit{PL}-pipeline using polynomial latencies obtains a good accuracy that is slightly worse compared to the \textit{CL}-pipeline. The \textit{ER}-pipeline however, falls short in performance compared to the other two variants, which might be caused by the rather simple latency functions using an Euclidean regularization. Focusing on the pure \gls{acr:ML} baselines, we observe that the \textit{FNN} baseline slightly outperforms the \textit{GNN} baseline in all time-invariant scenarios (Figures~\ref{fig:resultHighEntropy}-\ref{fig:result_districtWorldArt}), while the \textit{GNN} baseline slightly outperforms the \textit{FNN} baseline in a time-variant scenario (Figure~\ref{fig:result_time}). 

For conciseness, we focus the remaining discussion of Figure~\ref{fig:results_real_traffic} on comparing the best-performing \gls{acr:COAML} pipeline with the best performing \gls{acr:ML} baseline. Focusing on our stylized scenarios (Figures~\ref{fig:resultHighEntropy}-\ref{fig:result_squareWorld}), the \textit{CL}-pipeline outperforms the \textit{FNN} on average by 60\%, 75\%, and 7\% respectively. We observe that the \gls{acr:MAE} for the \textit{FNN} increases from 1.75 in the high-entropy scenario to 4.18 in the low-entropy scenario, which indicates that the high-entropy scenario is indeed easier to predict for pure \gls{acr:ML} baselines as it has a high correlation between the traffic flow and each roads context. Remarkably, our \textit{CL}-pipeline shows a 60\% improvement over the \textit{FNN} even in the scenario that is easier to predict for a pure \gls{acr:ML} baseline, but also shows a stable prediction error over both the high and low entropy scenario. In the square world scenario, which is again favorable for pure \gls{acr:ML} baselines due to correlation between flows and the arc features, all approaches yield a good performance. Still, the \textit{CL}-pipeline outperforms the \textit{FNN} baseline by 7\%.
Focusing on the realistic, time-invariant scenarios (Figure~\ref{fig:result_cutoutWorld}\&\ref{fig:result_districtWorldArt}), the \textit{CL}-pipeline outperforms the \textit{FNN} baseline by 72\% and 71\% respectively. In both cases, the \gls{acr:FNN} baseline fails to learn the combinatorial structure of the respective traffic equilibrium that utilizes main roads between high demand areas stronger than small roads, while the \textit{CL}-pipeline succeeds in encoding this structure by combining the learned latencies with the structure of the \gls{acr:CO}-layer. In the time-variant scenario (Figure~\ref{fig:result_time}), the \textit{CL}-pipeline outperforms the \gls{acr:GNN} benchmark by~23\%. Note that we generally observe a lower \gls{acr:MAE} in the time-variant scenario due to an increased amount of zero-valued flows that result from the time expansion.

\begin{figure}[t]
     \centering
     \begin{subfigure}[t]{0.19\textwidth}
        \centering
        \includegraphics[width=\textwidth]{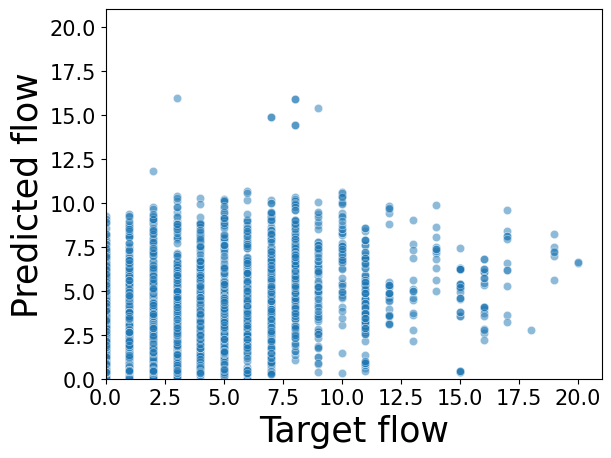}
        \caption{\textit{FNN}}
        \label{fig:deviationFNN}
     \end{subfigure}
     \hfill
     \begin{subfigure}[t]{0.19\textwidth}
        \centering
        \includegraphics[width=\textwidth]{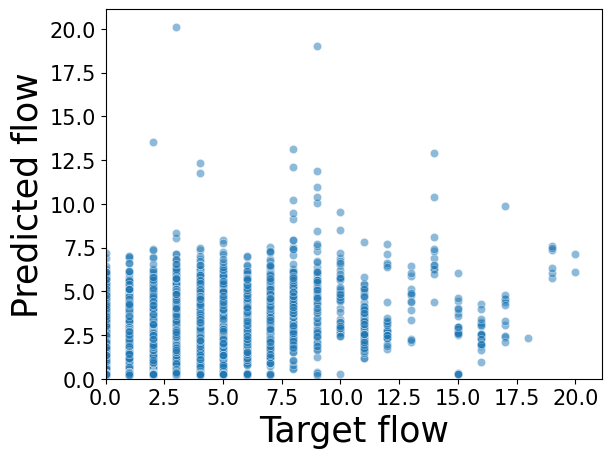}
        \caption{\textit{GNN}}
        \label{fig:deviationGNN}
     \end{subfigure}
     \hfill
     \begin{subfigure}[t]{0.19\textwidth}
        \centering
        \includegraphics[width=\textwidth]{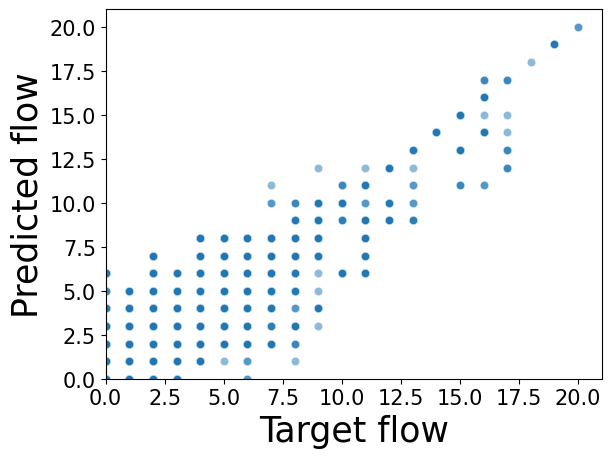}
        \caption{\textit{CL}}
        \label{fig:deviationCL}
     \end{subfigure}
     \hfill
     \begin{subfigure}[t]{0.19\textwidth}
        \centering
        \includegraphics[width=\textwidth]{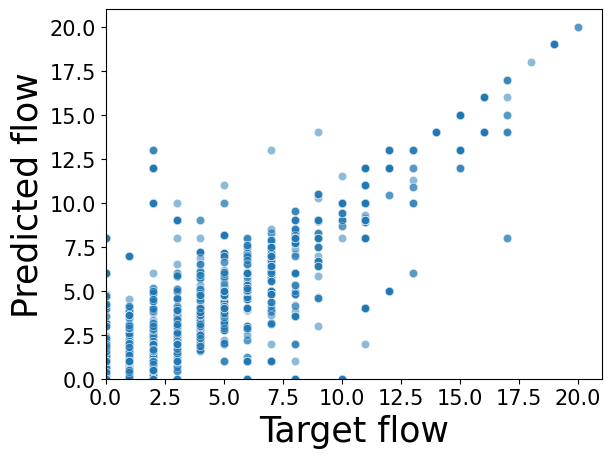}
        \caption{\textit{PL}}
        \label{fig:deviationPL}
     \end{subfigure}
     \hfill
     \begin{subfigure}[t]{0.19\textwidth}
        \centering
        \includegraphics[width=\textwidth]{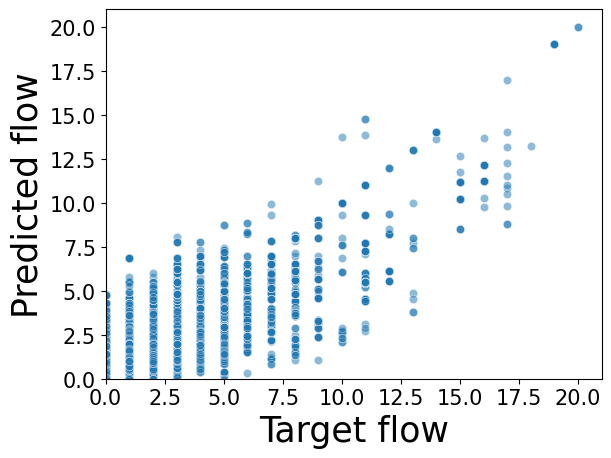}
        \caption{\textit{ER}}
        \label{fig:deviationER}
     \end{subfigure}
    \caption{Comparison of target and predicted traffic flows per arc for the Berlin scenario.}
    \label{fig:flowComparison}
\end{figure}

\begin{figure}[b]
     \centering
     \begin{subfigure}[t]{0.23\textwidth}
         \centering
         \includegraphics[width=\textwidth]{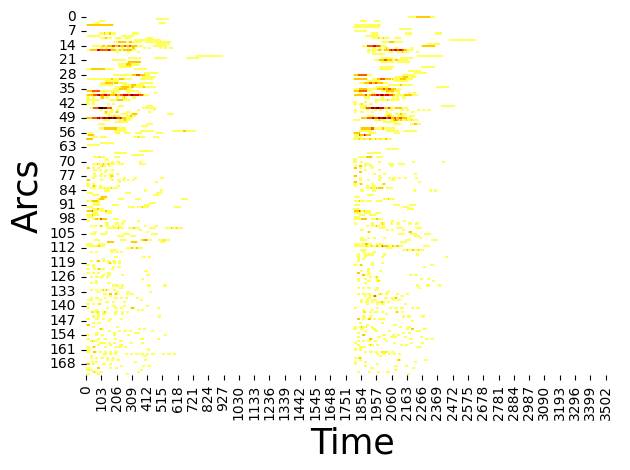}
         \caption{Target}
         \label{fig:timeOriginal}
     \end{subfigure}
     \hfill
     \begin{subfigure}[t]{0.23\textwidth}
         \centering
         \includegraphics[width=\textwidth]{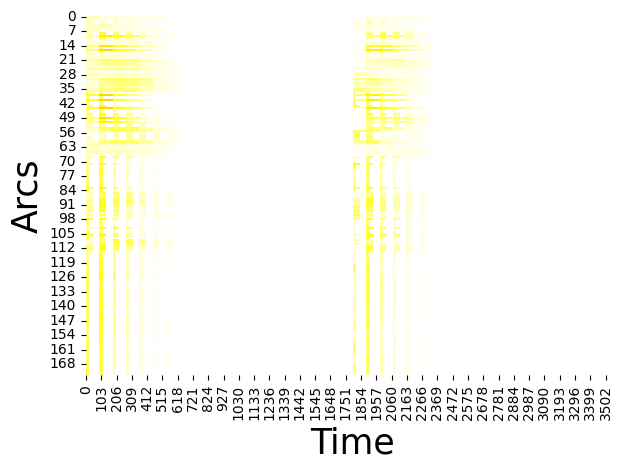}
         \caption{\textit{GNN}}
         \label{fig:timeSupervised}
     \end{subfigure}
     \hfill
     \begin{subfigure}[t]{0.23\textwidth}
         \centering
         \includegraphics[width=\textwidth]{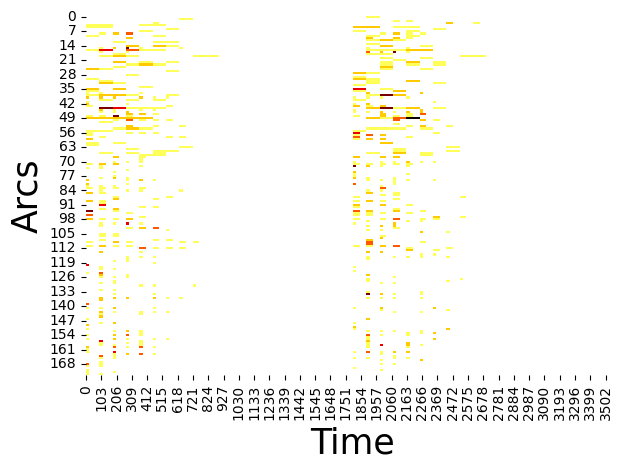}
         \caption{\textit{CL}}
         \label{fig:timeCL}
     \end{subfigure}
     \hfill
     \begin{subfigure}[t]{0.272\textwidth} 
         \centering
         \includegraphics[width=\textwidth]{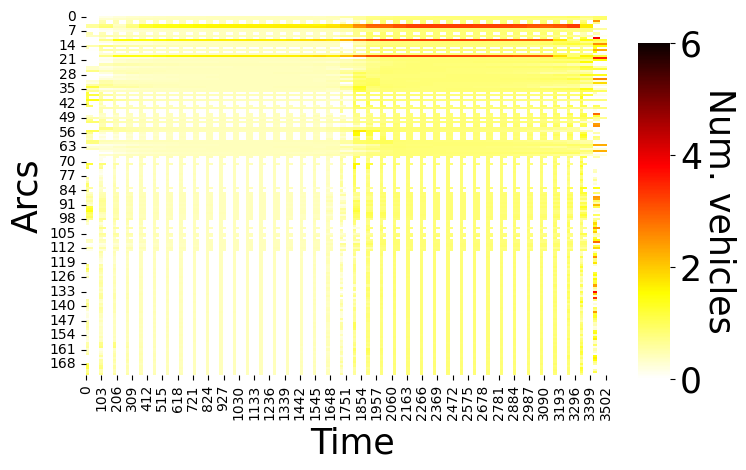}
         \caption{\textit{ER}}
         \label{fig:timeER}
     \end{subfigure}
    \caption{Visualization of time-variant traffic flows.}
    \label{fig:time}
\end{figure}

\paragraph{Structural Analyses}
Figure~\ref{fig:flow} shows an example of the structure of the predicted flows for the different algorithmic approaches for the Berlin scenario. For brevity, we exclude the visualization for the \gls{acr:GNN} and the \textit{PL}-pipeline as they exhibit a similar structure to the \textit{FNN} and \textit{CL} flows, and refer to Appendix~\ref{app:detailedResults} for a complete visualization of all flows. As can be seen, the \textit{FNN} predicts good mean values of the traffic flow but fails on predicting the true structure of the flows. In contrast, the \textit{CL}-pipeline succeeds in predicting a realistic traffic flow with high volumes on main roads and reduced flows on smaller roads. The \textit{ER}-pipeline fails on predicting realistic flows as the latency function representation is too limited to inform the equilibrium layer correctly. Figure~\ref{fig:flowComparison} supports our analyses and the findings from Figure~\ref{fig:flow} by showing the distribution of the predicted and target traffic flows, with the target traffic flows as a reference. As can be seen, the \textit{CL} and \textit{PL} pipelines show a higher correlation between the predicted and target traffic flows than the pure \gls{acr:ML} baselines and the \textit{ER}-pipeline. Finally, Figure~\ref{fig:time} shows an example of the structure of the predicted flows for the time-variant scenario. Again, we omit the \textit{FNN} and \textit{PL} visualizations for brevity and refer to Appendix~\ref{app:detailedResults} for a full visualization. As can be seen, the \textit{CL}-pipeline succeeds in predicting the right traffic flow structure, while the \textit{GNN} succeeds in predicting the right structure but underestimates the respective absolute flow volumes. The \textit{ER}-pipeline fails in predicting both the structure and the absolute values of the respective flows.

\section{Conclusion}
\label{sec:conclusion}
In this paper we introduced WardropNet, a novel end-to-end learning paradigm that augments a neural network with an equilibrium layer. We present a novel \gls{acr:COAML} pipeline that uses a neural network to learn the parameterization of latency functions and a \gls{acr:CO}-layer to compute the resulting equilibrium problem. We train this pipeline via imitation learning, intuitively minimizing the Bregman divergence between target and predicted traffic flows, and showed how to train this pipeline in an end-to-end fashion by leveraging a Fenchel-Young loss that allows to determine a meaningful gradient although the pipelines last layer is combinatorial. Finally, we showed with a comprehensive numerical study that our \gls{acr:COAML} pipelines outperform pure \gls{acr:ML} pipelines in various realistic scenarios by up to 72\% on average. We further investigated the prediction accuracy of our \gls{acr:COAML} pipelines on time-variant traffic flows and showed that our \gls{acr:COAML} pipelines outperform pure \gls{acr:ML} baselines by up to~$23\%$ on average.

This work lays the foundation for several promising follow up works on integrating (combinatorial) equilibrium layers into neural networks for traffic flow prediction. As all of our work is open source, one can easily built upon it to account for further families of latency functions or more complex statistical models. Additionally scaling the proposed \gls{acr:COAML} pipeline to larger networks remains an interesting avenue for future research.

\subsubsection*{Acknowledgments}
This project has been funded by the Federal Ministry of Education and Research (BMBF) as part of the M-Cube Cluster 4 Future within the project DatSim, grant no. 03ZU1105LA.

\newpage
\clearpage

%
\singlespacing{
\bibliographystyle{model5-names}
\bibliography{references}} 
\newpage
\onehalfspacing
\begin{appendices}
\normalsize

\section{Generalized Wardrop Equilibria}\label{app:WEs}
In the following, we introduce a generalized notion of a \gls{acr:WE}, where we allow for the travel time on each individual arc to depend on the traffic flow of other arcs. This allows us to model spill-over effects where the flow on one arc is influenced by that of  neighbouring ones. Moreover, it allows us to introduce a suitable regularization which is key for enabling end-to-end learning.

We consider a transportation network, represented as a directed graph~$G = (V, A)$ with arc-set~$A$ and vertex-set~$V$. Considering a vertex~$v \in V$, we denote the set of outgoing and ingoing arcs with~$\delta^+(v)$, and~$\delta^-(v)$. The transportation network is utilized by a set of agents $j \in J$ that travel from their origin node $o^{(j)}\in V$ to their destination node $d^{(j)}\in V$, having a demand~$D^{(j)}$ of 1 in its origin and -1 in its destination node. The traveling agents induce a traffic flow equilibrium~$\bfy=(\bar y_a)_{a\in A}$, formed by the emergent traffic flow $\bar y_a$ on all arcs $a \in A$. 
To formally describe this equilibrium, we denote by $\calY \subset \mathbb{R}^{A\times J}$ the multi-flow polytope over~$D$,
\begin{equation}
     \calY = \Big\{\bfy = (\bfy^{(j)})_j \colon \bfy^{(j)} \in \calY^{(j)} \Big\}.
\end{equation}
This multi-flow polytope~$\mathcal{Y}$ is decomposable into flow polytope~$\calY^{(j)} = \calF(\bfb^{(j)})$ for each agent $j\in J$ with $\calY^{(j)} \subset \mathbb{R}^A$, and $\bfb^{(j)} \subset \mathbb{R}^V$. To link the respective per-agent flows to per-arc aggregated flows, we define 

\begin{equation}
    \begin{aligned}
    &\calF(\bfb) = \Big\{\bfy = (y_{a})_{a \in A}  \colon \sum_{a \in \delta^{+}(v)}y_{a} - \sum_{a \in \delta^{-}(v)}y_a = b_v\text{ for all }v \in V\Big\} \\
    & \text{where} \quad \bfb^{(j)} = \big(b_{v}^{(j)}\big)_{v \in V} \quad \text{with} \quad b_{v}^{(j)} = \begin{cases}
        D^{(j)} & \text{if }v=o^{(j)} \\
        -D^{(j)} & \text{if }v=d^{(j)} \\
        0 & \text{otherwise.}
    \end{cases} \quad \text{for all }v \in V.        
    \end{aligned}
\end{equation}

With these definitions, we can describe solutions in the multi-flow polytope~$\calY$ that describe the traffic flow for each agent $j \in J$. Accordingly, we can define an \emph{aggregated flow polytope}
\begin{equation}
    \bar \calY = \Big\{\bar \bfy = \sum_{j \in J} \bfy^{(j)} \colon \bfy^{(j)} \in \calY^{(j)} \text{ for all $j$ in $J$} \Big\}
\end{equation}
that describes the traffic we observe on each arc in a city's street network. Note that in this context, $\bar \calY$ is again a polytope as it results from the intersection of a sum of finitely many polytopes. 
In this context, it is worthwhile to remark that the inclusion 
$$\bar \calY \subset \calF\Big(\sum_{j \in J} \bfb^{(j)}\Big) $$
is, in general, a strict inclusion because the flow polytope may contain flows on paths from one agent's origin to another agent's destination.
Independent of this ambiguity, it is well known that any flow in $\calY^{(j)}$ admits the following decomposition 
\begin{equation}\label{eq:appendix:flowDecomposition}
\bfy^{(j)} = \sum_{P \in \calP^{(j)}}\alpha_{P}^{(j)} e_P + \sum_{C \in \calC} \beta_{C}^{(j)}e_C,
\end{equation}
where $\calP^{(j)}$ is the set of elementary $o^{(j)}$-$d^{(j)}$ paths in $G$, and $\calC$ is the set of cycles in $G$, while $e_P$ and $e_C$ in $\{0,1\}^A$ are the indicator vectors of $P$ and $C$, with $e_{Pa}$ (resp.~$e_{Ca}$) being equal to one if $a$ belongs to $P$ (resp.~$C$) and zero otherwise. Here, we observe that $\beta= 0$ in any meaningful solution as a positive $\beta$ amount to agents cycling, and w.l.o.g.~ignore the $\beta$ term for the remainder of this paper. Moreover, we note that the decomposition introduced above may not be unique, i.e., one can always find a decomposition from a flow in $\calY^{(j)}$ to $\bfy^{(j)}$ but there exists no bijection that allows mapping from $\bfy^{(j)}$ to $\calY^{(j)}$. The non-existence of this bijection holds even in the absence of the $\beta$ terms.

\paragraph{Wardrop Equilibrium}
To find a \gls{acr:WE} in this context, we are seeking a set of flows in $\calY$ such that each agent travels on a path in $\calY^{(j)}$ where unilaterally deviating from this path would always incur a longer travel time for the agent. In this context, the decision of one agent affects the decision of the other agents as travel times on each arc depend on its aggregated flow. To describe this dependency, we introduce arc-specific latency functions $\ell_a : \mathbb{R}_+ \rightarrow \mathbb{R}_+$ for each arc $a\in A$, which describe the travel time $\ell_a(\bar \bfy)$ on $a$ for an aggregated flow $\bar \bfy$. Then, we can define a~\gls{acr:WE} as follows.

\begin{definition}[Wardrop Equilibrium]
\label{def:appendix:WE}
Given a directed graph $D = (V, A)$, origin-destination pairs $(o^{(j)},d^{(j)})_{j\in J}$, demands $(D^{(j)})_{j \in J}$, and a vector of latency functions $\bfell = \{\ell_a\}_{a\in A}$, a multiflow $\bfy = (\bfy^{(j)})_{j \in J} \in \calY$ is a \gls{acr:WE} if there exists a path decomposition $\alpha^{(j)}=(\alpha^{(j)}_P)_{P\in \calP^{(j)}},\beta^{(j)}=(\beta^{(j)}_C)_{C\in \calC^{(j)}}$ for each $j$ in $J$ such that
\begin{equation}
     \forall P \in \calP^{(j)}, \ \alpha_{P}^{(j)} = 0 \quad \text{ if } \quad P \notin \argmin_{P \in \calP^{(j)}} \sum_{a \in P} \ell_a(\bar \bfy) \qquad \text{and} \qquad \forall C \in \calC, \ \beta^{(j)}_{C} = 0\\
\label{eq:appendix:we}
\end{equation}
where $\bar \bfy = \sum_{j \in J} \bfy^{(j)}$ is the aggregated flow corresponding to $\bfy$ and $\alpha_P$ is a path decomposition. 
\end{definition}
Our definition of a \gls{acr:WE} is a generalization of the usual notion of \gls{acr:WE}, where the latency on arc $a$ depends only on the aggregated traffic on the respective arc, i.e.,  
$$\ell_a(\bar\bfy) = \tilde \ell_a(\bar y_a) \quad\text{for some } \tilde \ell_a : \bbR_+ \rightarrow \bbR_+.$$
This generalization is meaningful for the practical application of traffic equilibrium approximators as the flow on an arc can have a spill-over effect on the flow of neighboring arcs. Thus, considering the flow of neighboring arcs for the approximation of the traffic flow on a specific arc might ease accurate predictions. From a theoretical perspective, we will later show that this generalized formulation of \glspl{acr:WE} allows us to equip latency functions with regularization terms which is crucial to derive an end-to-end learning framework.

\textbf{Variational characterization:} To establish our learning-based approximation, we are interested in obtaining a more aggregated notion of a \gls{acr:WE}. To this end, we show that one can define the condition for a Wardrop equilibrium without using the path decomposition of $y_j$.
\begin{proposition}[Wardrop Equilibrium (Variational)]
\label{def:appendix:WEvar}
Given a directed graph $D = (V, A)$, origin-destination pairs $(o_j,d_j)_{j\in J}$, and a vector of latency functions $\bfell = \{\ell_a\}_{a\in A}$, a multiflow $\bfy = (\bfy^{(j)})_{j \in J} \in \calY$ is a \gls{acr:WE} if and only if 
\begin{equation}
\label{eq:appendix:WEdefVar}
\text{for all } j \in J \text{ and } \bfy'^{(j)} \in \calY^{(j)}, \quad \ell(\bar \bfy)^\top \bfy^{(j)} \leq \ell(\bar\bfy)^\top \bfy'^{(j)}
\end{equation}
where $\mathcal{Y}^{(j)}$ represents the flow polytope of Agent $j$.   
\end{proposition}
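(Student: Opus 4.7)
The plan is to use the path--cycle decomposition~\eqref{eq:appendix:flowDecomposition} as the bridge between Definition~\ref{def:appendix:WE} and the variational characterization in Proposition~\ref{def:appendix:WEvar}. The key observation is that for any $\bfy'^{(j)} \in \calY^{(j)}$ with decomposition $\bfy'^{(j)} = \sum_{P} \alpha'^{(j)}_P e_P + \sum_C \beta'^{(j)}_C e_C$, the latency cost can be written as
\[
\ell(\bar\bfy)^\top \bfy'^{(j)} = \sum_{P \in \calP^{(j)}} \alpha'^{(j)}_P L_P + \sum_{C \in \calC} \beta'^{(j)}_C L_C,
\]
where $L_P := \sum_{a\in P}\ell_a(\bar\bfy)$ and $L_C := \sum_{a\in C}\ell_a(\bar\bfy)\ge 0$. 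Writing $L^*_j := \min_{P\in\calP^{(j)}}L_P$, and noting that $\sum_P \alpha'^{(j)}_P = D^{(j)}$ for any path--cycle decomposition of a flow with the prescribed source--sink balance (cycles contribute nothing to the node balance), I obtain the universal lower bound $\ell(\bar\bfy)^\top \bfy'^{(j)} \ge D^{(j)} L^*_j$.

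For the ($\Rightarrow$) direction, I would start from a decomposition of $\bfy^{(j)}$ witnessing Definition~\ref{def:appendix:WE}: by hypothesis every positive $\alpha^{(j)}_P$ sits on a shortest path and every $\beta^{(j)}_C$ vanishes, so $\ell(\bar\bfy)^\top \bfy^{(j)} = D^{(j)} L^*_j$. Combined with the lower bound above, this gives exactly the variational inequality~\eqref{eq:appendix:WEdefVar} for every $\bfy'^{(j)}\in\calY^{(j)}$.

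For the ($\Leftarrow$) direction, I would first instantiate the variational inequality with $\bfy'^{(j)} = D^{(j)} e_{P^*_j}$ for some shortest path $P^*_j \in \calP^{(j)}$, yielding $\ell(\bar\bfy)^\top \bfy^{(j)} \le D^{(j)} L^*_j$. Applying the decomposition identity to $\bfy^{(j)}$ itself produces the matching lower bound, so the inequality must be an equality:
\[
\sum_{P} \alpha^{(j)}_P (L_P - L^*_j) + \sum_C \beta^{(j)}_C L_C = 0.
\]
Since every summand is non-negative, each vanishes individually, which immediately yields $\alpha^{(j)}_P = 0$ whenever $L_P > L^*_j$ and $\beta^{(j)}_C L_C = 0$ for each cycle~$C$.

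The main obstacle I anticipate is the last condition: Definition~\ref{def:appendix:WE} demands $\beta^{(j)}_C = 0$ for \emph{every} cycle, whereas the variational inequality only forces $\beta^{(j)}_C L_C = 0$, leaving room for positive flow on zero-latency cycles. I would address this by invoking the paper's convention (stated just after~\eqref{eq:appendix:flowDecomposition}) that $\beta = 0$ in any meaningful solution, and more formally by exhibiting, for any flow whose arc values admit only cyclic decompositions, an alternative decomposition that pushes the zero-cost cycle mass into path-only terms (using that a cycle of zero latency contributes no cost, so absorbing it into existing paths keeps the cost equal to $D^{(j)} L^*_j$). With this reduction, the conditions of Definition~\ref{def:appendix:WE} are satisfied, closing the equivalence.
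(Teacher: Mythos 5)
Your proof is correct and follows essentially the same route as the paper's: both directions hinge on the path decomposition and the bound $\ell(\bar\bfy)^\top\bfy'^{(j)} \ge D^{(j)}L^*_j$, with the converse obtained by testing against a single shortest path (the paper phrases that step as a contraposition, you as a complementary-slackness identity, but the content is the same). Your explicit tracking of the cycle terms is in fact more careful than the paper's, which simply omits them by appealing to its blanket ``$\beta=0$ w.l.o.g.'' convention; your observation that the variational inequality only forces $\beta^{(j)}_C L_C = 0$ rather than $\beta^{(j)}_C = 0$ flags a genuine (if degenerate) zero-latency-cycle edge case that the paper glosses over.
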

This flow polytope~$\mathcal{Y}^{(j)}\subseteq \mathbb{R}^{|A|}$ ensures that any feasible solution $y_j$ is non-negative and sends one unit of flow from origin $o_j$ to destination $d_j$, while ensuring mass conservation at the nodes of the network $G(V,A)$.

\begin{proof}
~\\
i) Forward direction: We first prove that~\ref{eq:appendix:we} implies~\ref{eq:appendix:WEdefVar}.

Let $A=\min_{P \in \calP^{(j)}} \sum_{a \in P} \ell_a(\bar y)$.
Then,~\ref{eq:appendix:flowDecomposition} gives the decompositions~$y'^{(j)}=\sum_{P \in \mathcal{P}^{j}} \alpha'_P e_P$ and~$y^{(j)}=\sum_{P\in \mathcal{P}^{j}} \alpha_P e_P$. Taking the dot product with $\ell(\bar y)$ gives 
$$
\begin{aligned}
\ell(\bar y)^\top (y'^{(j)}) &= \ell(\bar y)^\top \bigl(\sum_{P \in \mathcal{P}^{(j)}} \alpha'_P e_P \bigr) = \sum_{P\in \mathcal{P}^{(j)}} \alpha'_P e_P^\top \ell(\bar y) \geq D^{(j)}A \\
\ell(\bar y)^\top (y^{(j)}) &= \ell(\bar y)^\top \bigl(\sum_{P\in \mathcal{P}^{(j)}} \alpha_P e_P \bigr) = \sum_{P\in \mathcal{P}^{(j)}} \alpha_P \underbrace{e^\top_P \ell(\bar y)}_{=A} = D^{(j)}A 
\end{aligned}
$$
which gives $ \ell(\bar y)^\top (y^{(j)}) \leq  \ell(\bar y)^\top (y'^{(j)}) $.

ii) Backward direction: Let us now prove that~\ref{eq:appendix:WEdefVar} implies~\ref{eq:appendix:we}. 

To that purpose, consider a $\bfy$ such that Equation~\eqref{eq:appendix:we} is not satisfied. We prove the result by exhibiting $\bfy'^{(j)}$ such that $ \bfell(\bar \bfy)^\top \bfy^{(j)} > \bfell(\bar\bfy)^\top \bfy'^{(j)}$.
Let $\calP^{\rmo} = \argmin_{P\in\calP^{(j)}}\bfell(\bar\bfy)^\top \bfe_P$ and $A = \min_{P\in\calP^{(j)}}\bfell(\bar\bfy)^\top \bfe_P$.
We have
$$ \ell(\bar y)^\top (y^{(j)}) = \sum_{P \in \calP^{\rmo}}  \alpha_P\underbrace{ \bfell(\bar\bfy)^\top \bfe_P}_{=A} + \sum_{P \in \calP^{(j)}\backslash \calP^{\rmo}} \alpha_P\underbrace{\bfell(\bar\bfy)^\top \bfe_P}_{> A} > D^{(j)}A, $$
with the last inequality being true because~$\sum_{P \in \calP^{(j)}\backslash \calP^{\rmo}} \alpha_P > 0$ by hypothesis and $\sum_{P \in \calP^{(j)}} \alpha_P = D^{(j)}$ because $\bfy^{(j)} \in \calY^{(j)}$.
Let us now pick a $P$ in $\calP^{\rmo}$, and set $\bfy'^{(j)} = D^{(j)}\bfe_P$. We have $\bfy'^{(j)} \in \calP^{(j)}$ and $ \bfell(\bar \bfy)^\top \bfy^{(j)} > D^{(j)} A = \bfell(\bar\bfy)^\top \bfy'^{(j)}$, which concludes the proof.
\end{proof}

Here, the latency function~$\ell_a:\mathbb{R}_{\ge0}\rightarrow\mathbb{R}_{\ge0}$ describes the travel time $t_a = \ell_a(\bar y_a)$ on arc~$a$ as a function of its aggregated flow~$\bar y_a = \sum_{j\in J}y_a^{(j)}$. Clearly, a \gls{acr:WE} in line with Proposition~1 is sensitive to the respective latency functions in $\bfell$.  

Let us recall that our goal is to approximate an equilibrium by a surrogate, which, in our case, is a learning-enriched \gls{acr:WE}. In this case, it appears natural to introduce a family of latency functions and seek the best approximation among the equilibria generated by this family. Accordingly, we aim to establish a learning algorithm that allows us to learn the parameterization of the respective latency functions. To do so, it will prove useful if our equilibrium problem remains convex.

\paragraph{Convex characterization}
In the decomposable case, it is well known that the \gls{acr:WE} exists, is unique, and is characterized when all $\tilde \ell_a$ are non-decreasing.
We now generalize this result to the non-decomposable case. To do so, we consider latency functions $(\ell_a)_a$ that \emph{derive from a potential} $\Phi : \bbR_+^a \rightarrow \bbR$ if
\begin{equation}
    \label{eq:appendix:potentialDefinition}
     \ell_a = \frac{\partial \Phi}{\partial y_a}.
\end{equation}

\setcounter{theorem}{0}
\begin{theorem}[Convex characterization]
\label{theo:appendix:convex}
    Consider a directed graph $D = (V, A)$, origin-destination pairs $(o_j,d_j)_{j\in J}$, and a vector of latency functions $\bfell = \{\ell_a\}_{a\in A}$ that derive from a potential $\Phi$.
    \begin{enumerate}
        \item A multiflow $\bfy = (\bfy^{(j)})_{j \in J} \in \calY$ is a \gls{acr:WE} if and only if it is an optimal solution to 
        \begin{equation}
            \min_{\bfy \in \calY} \Phi(\bar \bfy) \quad \text{s.t.} \quad \bar \bfy = \sum_j y_j.
        \end{equation} 
        \item A vector $\bar \bfy \in \bar \calY$ is the aggregated flow of a \gls{acr:WE} if and only if it is an optimal solution to 
        \begin{equation}
            \label{eq:appendix:WEaggregatedFLow_optDef}
            \min_{\bar \bfy \in \bar\calY} \Phi(\bar \bfy)
        \end{equation}
        \item If $\Phi$ is strictly convex, then a \gls{acr:WE} $\bar\bfy$ exists and is unique.
    \end{enumerate}
\end{theorem}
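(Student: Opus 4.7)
The plan rests on combining the variational characterization of \glspl{acr:WE} from Proposition~1 in the appendix with first-order optimality conditions for convex minimization over a polytope, together with the product structure $\calY = \prod_{j\in J}\calY^{(j)}$ and the surjection $\bfy\mapsto\bar\bfy=\sum_j \bfy^{(j)}$ from $\calY$ onto $\bar\calY$. Throughout I treat $\Phi$ as (implicitly) convex so that the condition $\nabla\Phi(\bar\bfy)^\top(\bar\bfy'-\bar\bfy)\geq 0$ is both necessary and sufficient for optimality; strict convexity is only invoked for Part~3.

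For Part~1, I first handle the direction \emph{\gls{acr:WE} implies optimal}. Using~\eqref{eq:appendix:WEdefVar}, I sum the inequalities $\ell(\bar\bfy)^\top \bfy^{(j)} \leq \ell(\bar\bfy)^\top \bfy'^{(j)}$ over $j\in J$; since $\ell_a = \partial_{y_a}\Phi$ by~\eqref{eq:appendix:potentialDefinition}, this yields $\nabla\Phi(\bar\bfy)^\top(\bar\bfy'-\bar\bfy)\geq 0$ for every feasible aggregated flow $\bar\bfy'=\sum_j \bfy'^{(j)}$, which by convexity of $\Phi$ makes $\bfy$ a global minimizer of $\bfy'\mapsto\Phi(\bar\bfy')$ over $\calY$. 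The converse direction \emph{optimal implies \gls{acr:WE}} exploits the product structure: given any optimal $\bfy$, for each fixed $j$ and each $\bfy'^{(j)}\in\calY^{(j)}$, the multiflow obtained by replacing only the $j$-th component with $\bfy'^{(j)}$ still lies in $\calY$, so the first-order necessary condition at $\bfy$ specializes to $\ell(\bar\bfy)^\top(\bfy'^{(j)}-\bfy^{(j)})\geq 0$, which is precisely the variational form of the \gls{acr:WE} condition.

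Part~2 then follows by transporting Part~1 along the aggregation map. Since the objective of~\eqref{eq:wardrop_equilibrium_convex_opt} depends on $\bfy$ only through $\bar\bfy$, we have $\min_{\bfy\in\calY}\Phi(\bar\bfy) = \min_{\bar\bfy\in\bar\calY}\Phi(\bar\bfy)$. Hence any $\bar\bfy^\star$ optimal for~\eqref{eq:appendix:WEaggregatedFLow_optDef} admits a preimage $\bfy^\star\in\calY$ with $\sum_j \bfy^{\star(j)} = \bar\bfy^\star$, and Part~1 guarantees that $\bfy^\star$ is a \gls{acr:WE}; the converse is symmetric. For Part~3, existence is immediate as $\Phi$ is continuous and $\bar\calY$ is a non-empty compact polytope (the demands $D^{(j)}$ being finite), so the minimum is attained; uniqueness of the optimal $\bar\bfy$ follows from strict convexity, which precludes two distinct minimizers on the convex set $\bar\calY$. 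The underlying multiflow decomposition $(\bfy^{(j)})_j$ nevertheless remains non-unique in general.

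The main obstacle will be the reverse direction of Part~1: extracting per-commodity variational inequalities from a single optimality condition on the product polytope. The argument hinges on the fact that $\calY$ is literally a Cartesian product, which allows varying one commodity's flow while freezing the others; without this structure the equivalence with the aggregated formulation would break. A secondary delicacy is keeping track of the two flow objects $\bfy$ and $\bar\bfy$ throughout, and ensuring that the non-injectivity of the aggregation map (a given $\bar\bfy\in\bar\calY$ may arise from many multiflows) never interferes with the statements, all of which are formulated either purely at the aggregated level or at the multiflow level.
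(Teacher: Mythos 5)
Your proof is correct, but it takes a genuinely different route from the paper's. The paper works at the level of path-flow variables: it rewrites $\min_{\bfy\in\calY}\Phi(\bar\bfy)$ in terms of the path coefficients $\alpha_P$, forms the Lagrangian, and shows that the KKT conditions are exactly the path-based equilibrium condition of Definition~\ref{def:appendix:WE} (used paths all attain the minimal cost $\lambda_j$, unused paths cost at least $\lambda_j$). You instead start from the arc-level variational characterization of Proposition~\ref{def:appendix:WEvar} and exploit the product structure $\calY=\prod_j\calY^{(j)}$: summing the per-commodity variational inequalities gives the aggregated first-order condition $\nabla\Phi(\bar\bfy)^\top(\bar\bfy'-\bar\bfy)\ge 0$, and conversely freezing all but one commodity recovers the per-commodity inequalities from optimality. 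Your route is shorter and avoids path enumeration and constraint qualification entirely, at the price of leaning on the separately proved variational characterization; the paper's KKT derivation is more self-contained and exhibits the multipliers $\lambda_j$ as shortest-path costs, which is the classical Beckmann picture. One point to flag: the implication ``first-order condition $\Rightarrow$ global optimum'' (your forward direction, the paper's sufficiency of KKT) genuinely requires $\Phi$ to be convex, an assumption absent from the theorem statement in both cases; you at least state it explicitly, whereas the paper only remarks that the \emph{stationarity} equivalence holds ``even if it is not convex.'' Your treatment of Parts~2 and~3 (surjectivity of the aggregation map onto $\bar\calY$, compactness for existence, strict convexity for uniqueness of $\bar\bfy$ but not of the multiflow decomposition) matches the paper's.
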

\begin{proof}
    First, we prove that a flow is a user equilibrium if and only if it satisfies the first-order conditions of the optimization problem, even if it is not convex.
    Using the path variables $\alpha_P$ of Equation~\ref{eq:appendix:flowDecomposition}, we can rewrite the convex optimization problem,
    \begin{equation}
        \min_{\bfy \in \calY} \Phi(\bar \bfy) \quad \text{s.t.} \quad \bar \bfy = \sum_j y_j.
    \end{equation}
    as follows: 
    \begin{subequations}\label{eq:appendix:convexOptPathVariables}
    \begin{align}
        \min_{\alpha} \,& \Phi\Big(\sum_{P \in \calP} \alpha_P \bfe_P\Big) &\\
        s.t. \,& D^{(j)} = \sum_{P \in \mathcal{P}^{(j)}} \alpha_P \quad &\forall j \in J \label{eq:appendix:demand}\\
        &\alpha_P \geq 0 \quad &\forall P \in \mathcal{P} \label{eq:appendix:pos}
    \end{align}
    \end{subequations}
    with~$\calP$ being the union of the disjoint~$\calP^{(j)}$. Note that the agent-specific paths~$\calP^{(j)}$ remain disjoint, as we model each agent with a unique origin-destination pair.
    
    Denoting by $\lambda_j$ and $\mu_P$ the dual variables of constraints~\ref{eq:appendix:demand} and~\ref{eq:appendix:pos}, we obtain the Lagrangian
    $$\mathcal{L}(\bfy, \bflambda, \bfmu) = \Phi\Big(\sum_{P\in \calP^{(j)}} \alpha_P \bfe_P\Big) + \sum_{j=1}^J \lambda_j\Big(D^{(j)} - \sum_{P \in \mathcal{P}^{(j)}}\alpha_P\Big) + \sum_{P \in \mathcal{P}} \mu_P \alpha_P  $$
    We have
    $$ \frac{\partial \Phi(\bar\bfy)}{\partial \alpha_P} = \sum_{a \in P} \frac{\partial \Phi(\bar\bfy)}{\partial y_a} = \sum_{a \in P}\ell_a(\bar\bfy) = \ell_P(\bar \bfy) \quad \text{which gives}\quad \frac{\partial \mathcal{L}(\bfy, \bflambda, \bfmu)}{\partial \alpha_P} = \ell_P(\bar \bfy) - \lambda_j + \mu_P. $$
    
    The KKT conditions for~\ref{eq:appendix:convexOptPathVariables} are therefore
    \begin{align*}
        &\frac{\partial \mathcal{L}(\bfy, \bflambda, \bfmu)}{\partial \alpha_P} = \ell_P(\bar \bfy) - \lambda_j + \mu_P = 0 \quad &\forall P \in \mathcal{P}^{(j)}, \forall j \in J \\
        &\frac{\partial \mathcal{L}(\bfy, \bflambda, \bfmu)}{\partial \mu_j} = D^{(j)} - \sum_{P \in \mathcal{P}^{(j)}} \alpha_P = 0 \quad &\forall j \in J \\
        &\mu_P = 0\text{ or } \alpha_P = 0 \quad &\forall P \in \mathcal{P} \\
        &\mu_P \leq 0, \alpha_P \geq 0 \quad &\forall p \in \mathcal{P}
    \end{align*}
    Then a flow~$\bar \bfy$ satisfies the KKT conditions if it satisfies for all origin-destination pairs~$j \in J$ and all paths~$P \in \mathcal{P}^{(j)}$
    \begin{align*}
        \ell_P(\bar\bfy) = \lambda_j \quad &\text{ if }\alpha_P > 0 \\
        \ell_P(\bar\bfy) \geq \lambda_j \quad &\text{ if }\alpha_P = 0
    \end{align*}
    If the path~$P \in \mathcal{P}^{(j)}$ is used, then its cost is $\lambda_j$, and all other paths~$P' \in \mathcal{P}^{(j)}$ have a greater or equal cost. This is just the characterization of a \gls{acr:WE} in Definition~\ref{def:appendix:WE}.

    The characterization of aggregate flows in the second point of the theorem is a direct consequence of the first and of the definition of $\bar \calY$.

    Existence and uniqueness of the solution~$\bar \bfy$ come from the existence (and uniqueness) of the optimum of a (strictly) convex optimization problem on a convex polytope (which is also compact).
\end{proof}
Theorem~\ref{theo:appendix:convex} allows us to switch between the decomposable notion of latency functions and a non-decomposable notion of latency functions as shown in Example~\ref{ex:appendix:1}.
\begin{example}
    \label{ex:appendix:1}
    If $\bfell$ is decomposable, $\ell_a(\bar\bfy) = \tilde \ell_a(\bar y_a)$, then defining 
    $$ L_a(\bar y_a) = \int_{0}^{y_a}\tilde \ell_a(u) \, du$$
    we get that $\bfell$ derives from the potential
    $$ \Phi(\bar\bfy) = \sum_a L_a(\bar y_a). $$
    And Theorem~\ref{theo:appendix:convex} becomes the classic characterization of \gls{acr:WE} as a convex optimiziation problem.
\end{example}

\section{Statistical models}
\label{appendix:statistical_model}

\begin{wrapfigure}{R}{0.4\textwidth}
        \centering
        \fontsize{6}{6}\selectfont
        \def\svgwidth{1\textwidth}
        \subimport*{./figures/}{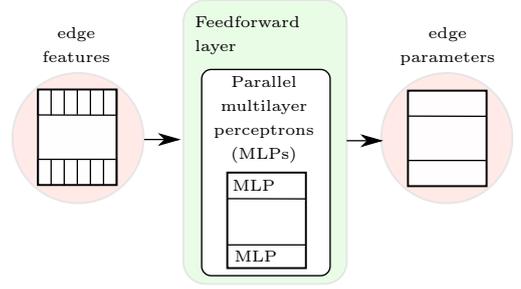}
        \caption{FNN architecture.}
        \label{fig:fnn}
\end{wrapfigure}

\paragraph{FNN:} 
Our \gls{acr:FNN} architecture (Figure \ref{fig:fnn}) comprises a set of parallel \glspl{acr:MLP}. Each \gls{acr:MLP} receives as input a vector of features~$x_a$ describing the attributes of arc~$a$ and outputs an embedding vector. Each \gls{acr:MLP} architecture consists of 5 layers, with [100, 500, 100, 10, 5] nodes. We consider \textit{relu} activation functions between the layers. If the \gls{acr:MLP} is the last module in the pipeline there is an output layer of 1 node convoluting the embedding layer to the output dimension. The activation function in the output layer is pipeline dependent: In the case, when the \gls{acr:FNN} directly outputs~$\bfy$ the last dense layer does not consider any activation function. In the case, when the \gls{acr:FNN} outputs the latency parameters $(\varphi_{\bfw}^{\text{FNN}}(x_a) = (\theta_{k,a})_{k \in \{1,\ldots,K\}})_{a \in A} = (\theta_{k,a})_{k \in \{1,\ldots,K\}}^{a\in A}$, then the last dense layer considers a \textit{softplus} activation function.
We specify the considered features in Appendix \ref{sec:appendix:features}.

\begin{figure*}[b]
        \centering
        \fontsize{6}{6}\selectfont
        \def\svgwidth{0.9\textwidth}
        \subimport*{./figures/}{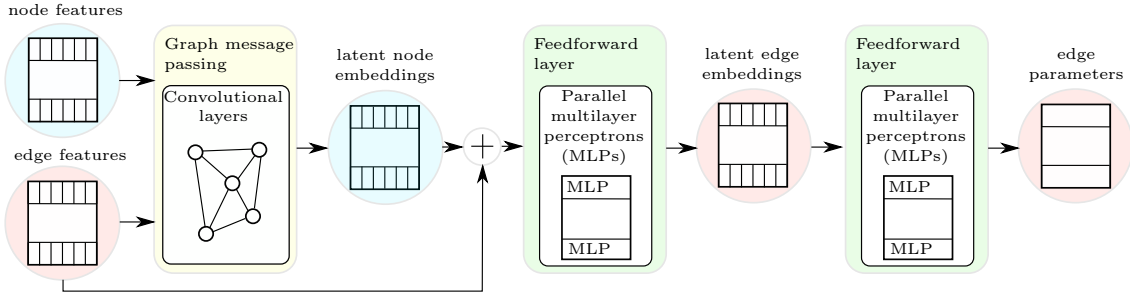}
        \caption{GNN architecture.}
        \label{fig:gnn}
\end{figure*}

\paragraph{GNN:} Our \gls{acr:GNN} architecture considers a graph message passing module and two subsequent \gls{acr:FNN} modules (Figure~$\ref{fig:gnn}$). The message passing module receives as input the set of arc feature vectors~$(x_a)_{a \in A}$ and the set of vertex feature vectors~$(x_v)_{v \in V}$. The \gls{acr:GNN} applies 3 message passing convolutions and outputs the latent node embeddings. Here, each convolution considers embeddings of length 20 and feedforward layers with [20, 20] nodes with \textit{relu} activation functions. Then, the \gls{acr:GNN} inputs the combined feature sets of latent vertex embeddings from arc-starting nodes, and original arc features into an \gls{acr:FNN} architecture that outputs the arc embeddings. Subsequently, the \gls{acr:GNN} inputs the arc embeddings into a \gls{acr:FNN} architecture to output the arc parameters.
We specify the considered features in Appendix 
\ref{sec:appendix:features}.

\subsection{Features}
\label{sec:appendix:features}
In Table \ref{tab:appendix:features} we summarize the features considered in the \gls{acr:COAML} pipeline benchmarks and the \textit{FNN} benchmark. In Table \ref{tab:appendix:featuresGNN} we summarize the features considered in the \textit{GNN} benchmark. The area factor describes the radius considered for computing the features:~$\frac{\text{Max distance between nodes in network}}{\text{Area Factor}}$.
We standardize all features with its mean values. \\

\begin{minipage}[t]{0.45\textwidth}
\centering
  \captionof{table}{Features considered in the FNN.}
    \label{tab:appendix:features}
  \tiny
  \begin{tabular}{l}
    \toprule
    \multicolumn{1}{c}{Arc features}                   \\
    \midrule
    number of home locations (area factors: 1; 2; 5; 10; 15)    \\
    number of work locations (area factors: 1; 2; 5; 10; 15)    \\
    number of nodes (area factors: 1; 2; 5; 10; 15)    \\
    number of arcs (area factors: 1; 2; 5; 10; 15)   \\
    number of arcs with capacity > 1000 (area factors: 1; 2; 5; 10; 15)    \\
    arc length \\
    arc speed  \\
    arc capacity \\
    arc number of lanes  \\
    arc number of lanes  \\
    arc transit time  \\
    number of starting arcs at starting node   \\
    number of ending arcs at starting node  \\
    number of starting arcs at ending node  \\
    number of ending arcs at ending node \\
    \midrule
    \multicolumn{1}{c}{Arc features when learning time-expanded flow}                   \\
    \midrule
    distance from time morning rush hour in seconds  \\
    (distance from time morning rush hour in seconds)$^2$  \\
    (distance from time morning rush hour in seconds)$^3$ \\
    remaining time in seconds \\
    simulation time in seconds   \\
    distance time from start evening rush hour in seconds  \\
    (distance time from start evening rush hour in seconds)$^2$  \\
    (distance time from start evening rush hour in seconds)$^3$   \\
    \midrule
    \multicolumn{1}{c}{Arc features when learning capacity-expanded flow}  \\
    \midrule
    capacity index~$\zeta$ of arc \\
    \bottomrule
  \end{tabular}
\end{minipage}
\hfill
\begin{minipage}[t]{0.45\textwidth}
\centering
    \captionof{table}{Features considered in the GNN.}
    \label{tab:appendix:featuresGNN}
  \tiny
  \begin{tabular}{l}
    \toprule
    \multicolumn{1}{c}{Arc features}                   \\
    \midrule
    arc length \\
    arc speed  \\
    arc capacity \\
    arc number of lanes  \\
    arc number of lanes  \\
    arc transit time  \\
    number of starting arcs at starting node   \\
    number of ending arcs at starting node  \\
    number of starting arcs at ending node  \\
    number of ending arcs at ending node \\
    \midrule
    \multicolumn{1}{c}{Arc features when learning time-expanded flow}                   \\
    \midrule
    distance from time morning rush hour in seconds  \\
    (distance from time morning rush hour in seconds)$^2$  \\
    (distance from time morning rush hour in seconds)$^3$ \\
    remaining time in seconds \\
    simulation time in seconds   \\
    distance time from start evening rush hour in seconds  \\
    (distance time from start evening rush hour in seconds)$^2$  \\
    (distance time from start evening rush hour in seconds)$^3$   \\
    \midrule
    \multicolumn{1}{c}{Node features}  \\
    \midrule
    number of home locations (area factors: 1; 2; 5; 10; 15)    \\
    number of work locations (area factors: 1; 2; 5; 10; 15)    \\
    number of nodes (area factors: 1; 2; 5; 10; 15)    \\
    number of arcs (area factors: 1; 2; 5; 10; 15)   \\
    number of arcs with capacity > 1000 (area factors: 1; 2; 5; 10; 15)    \\
    \midrule
    \multicolumn{1}{c}{Node features when learning time-expanded flow}                   \\
    \midrule
    simulation time in seconds   \\
    \bottomrule
  \end{tabular}
\end{minipage}

\section{Equilibrium Layers}\label{app:EQlayers}
In this section, we provide background on the equilibrium layers introduced in Section~\ref{sec:pipeline}.

\subsection{Background on latencies with Euclidean regularization}
\label{appendix:euclideanRegularization}

We can obtain a differentiable equilibrium layer by utilizing a rather simple Euclidean regularization, considering the respective squared Euclidean loss
$$\psi(\bar \bfy) = \frac{1}{2}\|\bar \bfy\|^2.$$
In this case, we retrieve decomposable affine latency functions of the form
\begin{equation}
    \ell^{\bftheta}_a(\bar \bfy)= \frac{\partial}{\partial \bar y_a} \Phi_\bftheta(\bar \bfy) = -\theta_a + \bar y_a = \tilde \ell^{\bftheta}_a(\bar y_a),
    \label{eq:appendix:euclideanRegularization}
\end{equation}
and the vector~$\bftheta$ corresponds to the y-intercept of the latency functions. 
In this case, the equilibrium problem~\ref{eq:regularizedPrediction} becomes
\begin{equation}
    \hat {\bar \bfy}_\Omega(\bftheta)=\argmax_{\bfy \in \bar \calY} \bftheta^\top \bfy - \frac{1}{2}\| \bfy\|^2 = \argmax_{\bfy \in \bar\calY}\frac12\|\bftheta-\bfy\|^2.
\end{equation}
In other words, the equilibrium is the orthogonal projection of $\bftheta$ on $\bar \calY$ and
can be efficiently computed using any convex quadratic optimization solver.  In the learning problem~\ref{eq:learning_problem}, we need the gradient~$\nabla \Omega^*(\bftheta)$ of
$
    \Omega^*(\bftheta) = \max_{\bfy \in \calY} \bftheta^\top \bfy - \frac{1}{2}\|\bfy\|^2 .
$
Danskin's lemma ensures that $$\nabla \Omega^*(\bftheta) = \argmax_{\bfy \in \bar\calY} \bftheta^\top \bfy - \frac{1}{2}\|\bfy\|^2 = \hat {\bar \bfy}_\Omega(\bftheta).$$

\subsection{Background on regularization by perturbation}
\label{appendix:background_Bregman_regularization}

Alternatively, we can regularize our equilibrium layer by perturbation: let us a consider a maximum capacity $\bfu$ and the corresponding aggregated flow polyhedra
$$ \bar \calY_\bfu = \bar \calY \cap \big\{\bar \bfy\colon 0 \leq \bar\bfy \leq \bfu \big\}. $$
Let us now introduce a standard Gaussian vector $Z$ on $\bbR^A$. Then, we can use the convex conjugate of the expectation of the perturbed maximum flow as regularization $\Omega$,
\begin{equation}
    \Omega = F^* \quad \text{where} \quad F(\bftheta) = \bbE\Big[\max_{\bfy \in \bar\calY_u}(\bftheta + Z)^\top  \bfy\Big]  .
\label{eq:appendix:RegularizationConjugate}
\end{equation}
\citet{BerthetEtAl2020} show that this regularization yields several desirable properties:
first $F$ is a proper convex continuous function, which gives $\Omega^* = (F^*)^* = F$, second, $\dom(\Omega) = \bar \calY_\bfu$.
Third, applying Danskin's lemma, this second property, gives the differentiability of $\Omega^* = F$ and 
$$ \hat {\bar \bfy}_\Omega(\bftheta) = \nabla \Omega^*(\bftheta) = \bbE \Big[\argmax_{\bfy \in \bar\calY_u}(\bftheta + Z)^\top \bfy \Big].  $$
We note that the $\argmax$ is unique with probability $1$ on the sampling of $Z$.
This last property is critical from a practical point of view. Indeed, while the exact computation of $F(\bftheta)$, $\hat {\bar \bfy}_\Omega(\bftheta)$, and $\nabla_\bftheta \calL_\Omega(\bar\bfy)$ require to evaluate intractable integrals, they are all expectations for which Monte-Carlo estimations can be computed by sampling a few realizations of $Z$ and computing the corresponding maximum flow problem.

Finally, Bregman divergence and Fenchel-Young loss coincide for this regularization $\calL_\Omega(\bftheta,\bar\bfy) = D_\Omega(\bar\bfy,\hat {\bar \bfy}_\Omega(\bftheta)\big)$.
This follows from Equation~\eqref{eq:BregmanFenchelEquality}, even though $\Omega$ is not Legendre type as a function on $\bbR^A$, because it is not even differentiable since its domain $\bar \calY_u$ is not full dimensional.
Indeed, if we consider the restriction of $F$ and $\Omega$ to the direction of the affine hull of $\bar\calY_\bfu$, which is equal to the linear span of $\bar\calY_\bfu$ since $0 \in \bar\calY_\bfu$, then $\Omega$ is Legendre type. Equation~\eqref{eq:BregmanFenchelEquality} gives the equality on the linear span, which can be extended to $\bbR^A$ since the component of $\bftheta$ in the orthogonal of the linear span does not impact a role in the $\argmax$.

\subsection{Background on piecewise constant latencies and extended network}
\label{appendix:extendedNetwork}

In practice, observed latencies often exhibit threshold effects when the traffic on an arc reaches its capacity and congestion appears, i.e., the latency increases stepwise \citep{Vickrey1969}. Such effects are poorly captured by our potential $\Phi_\bftheta$ due to the smooth latency functions~$\ell_a$.

However, the methodology that we introduced can naturally be extended to the case where such a latency function can be modeled by piecewise constant functions of $\bar \bfy_a$: let us suppose that we want to use the non-regularized latency function
$$ \tilde \ell_a(\bar y_a) = \begin{cases}
    \tilde\theta_{a,m} & \text{if } \tau_{m-1} \leq \bar y_a < \tau_m \text{ for }m\in \{1,\ldots,M\} \\
    +\infty & \text{if } \bar y_a > \tau_M, 
\end{cases}
$$
where $0=\tau_0 <\tau_1<\ldots<\tau_M$.
Such latency functions are in line with traffic physics: an arc might allow free floating traffic till a certain threshold capacity is reached. Then the latency increases stepwise, e.g., with the traffic experiencing reduced speed, stop-and-go, and finally, congestion \citep{Vickrey1969}.

We derive piecewise constant latency functions by solving the non-regularized \gls{acr:WE} as a minimum cost flow $$ \min_{\bfy \in \bar\calY^{\mathrm{ex}}} \bftheta^\top \bfy $$
on an extended network $G^{\mathrm{exp}} = (V,A^{\mathrm{exp}})$ that allows to encode the piecewise-constant functions in a multi-graph representation with parallel arcs. 

\begin{figure}[b]
        \centering
        \fontsize{8}{8}\selectfont
        \def\svgwidth{0.3\textwidth}
        \subimport*{./figures/}{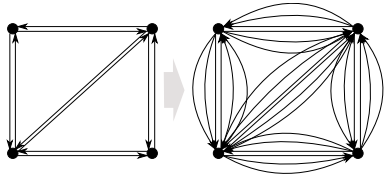}
        \caption{Evolution from $G$ to expanded $G^{\text{exp}}$.}
        \label{fig:appendix:capacityexpansion}
\end{figure}

Specifically, we construct the extended network~$G^{\mathrm{exp}} = (V,A^{\mathrm{exp}})$ based on the transporation network $G=(V,A)$ as follows:
Vertices are the same, but for each arc $a$ in $A$, there are $M$ copies $(\tilde a_m)_{m \in \{1,\ldots,M\}}$ of $a$ in $A^{\mathrm{exp}}$ (see Figure \ref{fig:appendix:capacityexpansion}).
We then define the capacity $u_{\tilde a,m} = \tau_{m}-\tau_{m-1}$. 
Note that we can only consider equal capacities~$u_{\tilde a,m} = u_{\tilde a} \; \forall m \in \{1, \dots M\}$.
The equivalence in the non-regularized case is straightforward.
For example, in a setting with a capacity~$u_{\tilde a,m}=3$: The first three agents traversing arc~$a$ experience the cost~$\theta_{a, 1}$, the second three agents traversing arc~$a$ experience the cost~$\theta_{a, 2}$, etc.
We can then use the two previously introduced regularization. Squared Euclidean regularization now involves an orthogonal projection of $\bftheta$ on $\bar\calY^{\mathrm{exp}}_\bfu$, while Monte-Carlo approximations in the perturbation case involve solving a maximum cost flow on $\bar\calY^{\mathrm{exp}}_\bfu$.

\subsection{Background on polynomial latencies regularized by perturbation}
\label{appendix:polynomialRegularized}

Lastly, we introduce monotonously increasing latency functions~$\ell^{\bftheta}_a(\bar y_a)$ that depend on the respective traffic flow~$\bar y_a$. To do so, we introduce a feature mapping $\bfsigma$ that maps any component $y \in \bbR_+$ to a feature vector $\bfsigma(y) = (\sigma_k(y))_{k \in \{k,\ldots,K\}} \in \bbR^K$ such that $y\mapsto \sigma_k(y)$ is convex for each $k$,
\begin{equation*}
    \bfsigma : \bfy \in \calY \mapsto \bfsigma(\bfy) \in \bbR^{A\times K}.
\end{equation*}

Then, we consider polynomial latency functions generated by the mapping $(\sigma_k)_k$,
\begin{equation}
    \ell^{\theta}_a(\bar y_a) = \sum_k \theta_{k, a}  \sigma_k(\bar y_a) \quad \text{with} \quad \theta_{k,a} \geq 0 \quad \text{for all $k$ and $a$}.
    \label{eq:appendix:latency_function_polynomial}
\end{equation}

Here $\theta_{k, a}$ is the weight coefficient of the $k$-th basis function in $\ell_a$, and $\sigma_k$ is the respective basis function.
The overall equilibrium is therefore parametrized by $\bftheta = (\theta_{k,a})_{k \in \{1,\ldots,K\}}^{a\in A}$ with $k$ indexing the mapping~$(\sigma_k)_k$.
Clearly, in this context, choosing the right family of function $\bfsigma$ is a crucial design decision:
it should be simple enough to lead to tractable \glspl{acr:WE} for any $\bftheta \geq 0$, and at the same time sufficiently expressive to allow for good approximations.

In this work, we restrict ourselves to the polynomial family~$\bfsigma$ using $k=\{0,1\}$, and $\sigma_k (\bar y) = \bar y^k$. Then, the latency function for arc~$a$ becomes
\begin{equation*}
    \ell^{\theta}_a(\bar y_a) = \theta_{0, a} + \theta_{1, a} * \bar y_a,
    \label{eq:appendix:polynomial_latency}
\end{equation*}
with~$\bar y_a$ denoting the aggregated flow on arc~$a$.
We can interpret this latency function as follows: the intercept~$\theta_{0, a}$ represents the traffic-free latency, and the slope~$\theta_{1, a}$ represents the induced latency per agent on arc~$a$. 
In general, we can use arbitrary complex polynomials to describe the latency function.
The resulting latency function~\ref{eq:appendix:latency_function_polynomial} is a convex function, as the basis functions~$(\sigma_k)_k$ are strictly convex for each $k$. 

We can extend the regularization by perturbation to that setting.
Let us define $\calC = \conv(\sigma(\bar\calY))$.
Remark that $\calC$ is closed and convex but no more a polyhedron. We can now define the generalizations
$$ F(\bftheta) = \bbE\Big[\max_{\bfmu \in \calC}(\bftheta + Z)^\top \bfmu \Big] =  \bbE\Big[\max_{\bfy \in \bar \calY} (\bftheta + Z)^\top \bfsigma(\bfy)\Big], \quad \text{and} \quad  \Omega(\bfmu) = F^*. $$
We still have $F=\Omega^*$. 
The prediction is however now on $\calC$
\begin{equation}
    \label{eq:appendix:non_linear_prediction}
 \hat \bfmu_\Omega(\bftheta) = \bbE\Big[\argmax_{\bfmu \in \calC}(\bftheta+Z)^\top \bfmu\Big] = \argmax_{\bfmu \in \calC}\bftheta^\top \bfmu - \Omega(\bfmu) \neq \argmax_{\bfy\in\bar\calY}\bftheta^\top\bfsigma(\bfy) - \Omega(\bfsigma(\bfy)).
\end{equation}
We retrieve a prediction $\hat{\bar\bfy}$ by taking the projection on $\bar\calY$
$$
 \hat {\bar \bfy}(\bftheta) = \bbE\Big[\argmax_{\bfy \in \bar\calY} (\bftheta + Z)^\top \bfsigma(\bfy)\Big].
$$
It leads to the learning problem and Fenchel-Young loss
$$ \min_\bfw\frac1N\sum_{i=1}^N\calL_\Omega\big(\varphi_w(\bfx_i),\sigma(\bar\bfy_i)\big) \quad \text{where} \quad \calL_\Omega(\bftheta,\bar\bfmu) = \Omega^*(\bftheta) + \Omega\big(\bar\bfmu) - \bftheta^\top\bfmu. $$
Again, we can obtain Monte-Carlo evaluation of the gradient
$$ \nabla_\bftheta \calL_\Omega(\bftheta,\bar\bfmu) = \bbE\Big[\argmax_{\bfmu \in \calC}(\bftheta + Z)^\top \bfmu\Big] - \bar \bfmu = \bbE\Big[\argmax_{\bfy \in \bar\calY}(\bftheta + Z)^\top \sigma(\bfy)\Big] - \bar \bfmu$$
which is quite convenient as we can make the computations on $\bar\calY$.
Using a squared Euclidean regularization is not as convenient computationally as we would need to optimize over $\bfmu$ instead of $\bar\bfy$ due to the inequality in Equation~\eqref{eq:appendix:non_linear_prediction}.

\section{Traffic scenarios}
\label{appendix:traffic_scenarios}

We consider four stylized and two realistic scenarios for our experiments. While the stylized scenarios allow us to isolate effects that highlight the differences between our \gls{acr:COAML} pipelines and pure \gls{acr:ML} approaches, the realistic scenarios allow us to highlight the efficiency of our \gls{acr:COAML} pipelines in practice.

The stylized scenarios consider randomly generated networks using the model from \cite{eisenstat2011random} similar to Figures~\ref{fig:scenarioHighEntropy}, ~\ref{fig:scenarioLowEntropy}, and~\ref{fig:scenarioTraffic}. All raods in these networks have the same characteristics with respect to capacity. We further consider 100 home and 100 work locations in the network and a set of 200 planned trips in the form of origin-destination pairs. Precisely, we consider 100 planned trips starting at the home locations and driving to the work locations and 100 planned trips starting at the work locations and driving to the home locations.\\

In the first two scenarios, we derive the target traffic equilibrium via solving a time-expanded~\gls{acr:WE} with 20 discrete time steps. The~\gls{acr:WE} considers a latency function~$\ell_a(\bar y_a)=d_a + \bar y_a$ with $d_a$ representing the length of arc~$a$ and $\bar y_a$ denoting the aggregated flow on arc~$a$.

\paragraph{High-entropy (HE) scenario:}
The \textit{high-entropy scenario} comprises instances that each consist of a randomly generated network and uniformly distributed home and work locations across the respective network~(see Figure~\ref{fig:scenarioHighEntropy}). This uniform distribution of home and work locations leads to an evenly distributed traffic flow with increasing traffic flow in the center of the network and a lower traffic flow in the outer area of the network. This scenario provides a traffic equilibrium that highly correlates with the context of the network. Precisely, the traffic flow of an arc correlates with the location of the respective arc in the network. Thus, context features like the coordinates of an arc and the location of the arc within the network have a high correlation with the traffic flow. From a theoretical perspective, this scenario is good for learning traffic equilibrium prediction with pure \gls{acr:ML} approaches: the high correlation between context features and the target traffic flow allows us to learn a direct mapping from features to traffic flow independent of the combinatorial nature of traffic flows. 

\paragraph{Low-entropy (LE) scenario:}
In the \textit{low-entropy scenario} each instance consists of a randomly generated network with home locations located in the upper right corner and work locations in the bottom left corner (Figure \ref{fig:scenarioLowEntropy}). In such a scenario, the resulting traffic flow strongly depends on the combinatorial trip information: Trips start at the home locations in the upper right corner and end at the work locations in the bottom left corner and vice versa. In comparison to the high-entropy scenario, the traffic equilibrium resulting in the low-entropy scenario is less context-sensitive but more sensitive to the combinatorial relation between home and work locations. From a theoretical perspective, pure \gls{acr:ML} approaches are not expected to yield good solutions in this scenario.\\

For the remaining scenarios, we derive the target traffic equilibrium by running the agent-based transport simulation MATSim. We refer to Appendix~\ref{appendix:oracleSimulator} for detailed information on the MATSim simulation.

\paragraph{Square-world (SW) scenario:}
The \textit{square-world scenario} utilizes the road network of the stylized scenario, but the origin and destination locations are distributed according to the distribution of roads in the respective network. We simulate a one-hour epoch, with all agents starting their trips from their home locations to their work locations at minute zero to mimic the morning rush hour, and their trips from their work locations to their home locations after 30 minutes to mimic the evening rush hour. This scenario complements our set of stylized scenarios: it does not mimic a real-world setting but allows to generate insights on the performance of \gls{acr:COAML} pipelines in comparison to pure \gls{acr:ML} pipelines when it comes to predicting combinatorial information.

\paragraph{Berlin scenario:}
The \textit{Berlin scenario} comprises instances that consist of district networks~(Figure \ref{fig:scenarioBerlin}) extracted from a realistic road network of the city of Berlin. The origin-destination pairs are extracted from a calibrated real-world MATSim scenario.\footnote{https://github.com/matsim-scenarios/matsim-berlin?tab=readme-ov-file} If a trip passes the district but has an origin/destination outside of the district, we set the origin/destination of the respective trip to the border of the district. Thus, this scenario serves to generate insights into how well our benchmarks can predict traffic equilibria for certain districts when contextual information is only available for the respective district, but the contextual information for the outer area of the district is unavailable, e.g., investigating a traffic management effect on a certain district, when census data of the outer area of the district is unavailable.

\paragraph{Berlin artificial population (Berlin-AP) scenario:}
The \textit{Berlin artificial population scenario} comprises instances that consist of district networks extracted from a realistic road network of the city of Berlin, but the population plans are artificially generated so that all trips start and end within the district network and there are no trips passing through the district network~(Figure \ref{fig:scenarioArt}). The home and work locations are distributed according to the density of the street networks, with more home and work locations in areas with a dense arc network and fewer home and work locations in areas with a sparse arc network.

\paragraph{Square-world time-variant (SW-TV) scenario:}
In the aforementioned scenarios, we only focused on the aggregated traffic flow over time. However, in many traffic applications, traffic flow needs to be predicted over time. Therefore, we consider a \textit{square-world time-variant scenario}, which investigates the performance of our benchmarks when predicting traffic flow over time. 
The challenge is that MATSim yields a traffic flow over constant time, but our benchmarks predict a traffic flow over discrete time. To circumvent this problem, we discretize the target traffic flow derived from the MATSim simulation over time. 
To do so, we apply a time expansion on the transport networks~$G$ of the \textit{square-world scenario} and measure the \gls{acr:MAE} on the time-expanded transport networks. Specifically, a time-expansion on the transport network~$G$ refers to discretizing time into disjoint epochs~$T$; then each arc~$a \in A^{|A| \times |T|}$ in the time-expanded network represents an arc with a specific time index~$t \in T$. On the spatial dimension, the length of an arc specifies the spatial distance from the start location to the end location of the arc, and from a temporal perspective, the length of an arc specifies the time it takes to traverse the respective arc. 
Note that in such a setting, the latency on arc~$a$ at time~$t_1 \in T$ does only depend on the time-variant traffic flow~$\bar y_{a, t_1}$ on the respective arc, but not from the flow~$y_{a, t_2}$ at another point in time~$t_2 \in T$. Intuitively, we assume that the traffic flow on an arc is independent of previous traffic flows on the respective arc. \\

For all scenarios, we create 9 training instances, 5 validation instances, and 6 test instances.
We run the training for a maximum of 20 hours, or 100 training epochs. We run the experiments on a computing cluster using 28-way Haswell-EP nodes with Infiniband FDR14 interconnect and 2 hardware threads per physical core. We report the \gls{acr:MAE} on the test instances.

\subsection{Traffic equilibrium generation via MATSim}
\label{appendix:oracleSimulator}
In this section, we specify the traffic simulation that yields the realistic target traffic equilibrium~$\bar \bfy^*$.
In this work, we consider the traffic simulation MATSim \citep{Matsim2016}. MATSim is an agent-based transport simulation. Specifically, each agent in the simulation follows an agent-specific plan. This agent plan details the agent's daily trips, including starting times and the respective routes. The simulation follows a queue-based approach such that when a vehicle enters a network link from an intersection, the simulation adds the vehicle to the tail of vehicles waiting to traverse the link \citep{Vickrey1969}. The waiting time depends on storage capacity and the flow capacity of the respective link. The resulting traffic flow~$\bar \bfy^*$ results from a co-evolutionary optimization approach with each agent representing a single species, meaning that each agent optimizes its respective plan individually. Specifically, in each evolution step, each agent considers a plan. Then, the simulation simulates all plans in the system and distributes scores for each individual plan depending on waiting times and deviations from initial schedules. Subsequently, the co-evolutionary approach updates each agent's plan via inheriting from previous plans from the same agent, mutation, and recombination actions. Thus, this co-evolutionary process improves each agent plan individually, dependent on all other plans in the system. Finally, this co-evolutionary approach leads to a stochastic user equilibrium~$\bar \bfy^*$. In this equilibrium, no agent can unilaterally improve its plan.

\section{WardropNet pipelines and ML baselines}
\label{appendix:benchmarks}

This section details all the benchmarks that we consider in our numerical study. While the first two benchmarks are pure deep learning benchmarks, the last three benchmarks relate to the \gls{acr:COAML} architectures introduced in Section~$\ref{sec:pipeline}$.

\begin{description}

\item[FNN:] we consider a pure \gls{acr:ML} pipeline~$f(\bfx)=\bar \bfy$ with an \gls{acr:FNN} architecture that was trained via supervised learning. To this end, we consider the \gls{acr:FNN} architecture and the related features presented in Appendix~$\ref{appendix:statistical_model}$.

\item[GNN:] we consider a pure \gls{acr:ML} pipeline~$f(\bfx)=\bar \bfy$ with a \gls{acr:GNN} architecture that was trained via supervised learning. To this end, we consider the \gls{acr:GNN} architecture and the related features presented in Appendix~$\ref{appendix:statistical_model}$.

\item[Constant latencies (CL):] we consider a \gls{acr:COAML} pipeline using \textit{constant latencies regularized by perturbation}. The \gls{acr:COAML} pipeline solves a~\gls{acr:MCFP} with piecewise decomposed latency functions. We enable the piecewise decomposition by applying the network expansion trick as detailed in Appendix~\ref{appendix:extendedNetwork}: we multiply each arc $M$-times and set arc-specific capacities on each arc. We further consider a statistical model~$\varphi_{\bfw}$ similar to the \gls{acr:FNN} architecture used in the \textit{FNN} benchmark. We only add a last layer with a \textit{softplus} function which additionally negates the output. 
Recall that we need to enforce the statistical model to predict negative latencies as we solve the equilibrium problem, which normally minimizes the latencies for all agents, as a maximization problem (cf. Equations~\ref{eq:regularized_linear_potential}-\ref{eq:regularizedPredictionOnAggregatedPolyhedron}) to allow the formulation of the Fenchel-Young loss (cf. Equation \ref{eq:fenchel_young_loss}).

\item[Polynomial latencies (PL):] we consider a \gls{acr:COAML} pipeline using \textit{polynomial latencies regularized by perturbation}. The \gls{acr:COAML} pipeline solves a~\gls{acr:WE}. We assume a polynomial latency function with $k=\{0,1\}$, and $\sigma_k (y) = y^k$.
In this setting, we consider two branches of \gls{acr:FNN} architectures~$(\varphi_{\bfw_0}, \varphi_{\bfw_1})$ to predict the intercept~$\bftheta_0$ and the slope parameter~$\bftheta_1$ of the latency function. Each \gls{acr:FNN} branch equals the \gls{acr:FNN} architecture used in the \textit{FNN} benchmark.

\item[Euclidean regularization (ER):] we consider a \gls{acr:COAML} pipeline using \textit{latencies with Euclidean regularization}. The \gls{acr:COAML} pipeline solves a~\gls{acr:WE}. We assume a polynomial regularization term~$\Omega(\bfy)=\frac{1}{2}||\bar \bfy||^2$. This enables us to learn the pipeline considering the \gls{acr:WE} as a regularized minimum cost flow problem. The statistical model is a \gls{acr:FNN} architecture similar to the statistical model considered in the \textit{FNN} benchmark.
\end{description}

\textbf{Discussion:} The pure \gls{acr:ML} pipeline benchmarks, namely the \textit{FNN} and the \textit{GNN} benchmarks are not state-of-the-art deep learning benchmarks for predicting traffic equilibria. However, using the same deep learning architectures in the pure \gls{acr:ML} pipeline benchmarks and the \gls{acr:COAML} pipelines, namely the \textit{CL}, \textit{PL}, and \textit{ER} benchmarks, allows meaningful comparison with respect to the impact of structured learning in comparison to purely supervised learning. 
A comprehensive tuning of deep learning benchmarks would lead to a standalone paper that focuses on deep learning architectures in the realm of traffic equilibrium prediction. However, the focus of this paper is to introduce the effectiveness of combining \gls{acr:ML} and \gls{acr:CO} in \gls{acr:COAML} pipelines.

\section{In-depth Results Discussion}\label{app:detailedResults}

This section presents our numerical results.
In this section, we first focus on the results of the \gls{acr:COAML} pipelines and the pure \gls{acr:ML} pipelines on the stylized and realistic scenarios. Second, we present a structural analysis to generate insights on the contribution of the benchmarks by visualizing the predictions of the \gls{acr:COAML} pipelines and the pure \gls{acr:ML} pipelines. Finally, we present the results of the benchmarks for the time-variant predictions.

\begin{figure}[bth]
     \centering
     \begin{subfigure}[t]{0.32\textwidth}
        \centering
        \resizebox{1\textwidth}{!}{        
\begin{tikzpicture}

\definecolor{brown1926061}{RGB}{192,60,61}
\definecolor{darkgray176}{RGB}{176,176,176}
\definecolor{darkslategray61}{RGB}{61,61,61}
\definecolor{lightgray204}{RGB}{204,204,204}
\definecolor{mediumpurple147113178}{RGB}{147,113,178}
\definecolor{peru22412844}{RGB}{224,128,44}
\definecolor{seagreen5814558}{RGB}{58,145,58}
\definecolor{steelblue49115161}{RGB}{49,115,161}

\begin{axis}[
legend style={fill opacity=0.8, draw opacity=1, text opacity=1, draw=lightgray204},
log basis y={10},
tick align=outside,
tick pos=left,
x grid style={darkgray176},
xmin=-0.5, xmax=4.5,
xtick style={color=black},
xtick={0,1,2,3,4},
xticklabels={FNN,GNN,CL,PL,ER},
y grid style={darkgray176},
ylabel style={align=center},
ylabel={MAE},
ymin=0.1, ymax=26,
ymode=log,
ytick style={color=black}
]
\path [draw=darkslategray61, fill=steelblue49115161]
(axis cs:-0.4,1.56779485358226)
--(axis cs:0.4,1.56779485358226)
--(axis cs:0.4,1.84412889584975)
--(axis cs:-0.4,1.84412889584975)
--(axis cs:-0.4,1.56779485358226)
--cycle;
\addplot [darkslategray61, forget plot]
table {%
0 1.56779485358226
0 1.36083438012698
};
\addplot [darkslategray61, forget plot]
table {%
0 1.84412889584975
0 1.87048888746007
};
\addplot [darkslategray61, forget plot]
table {%
-0.2 1.36083438012698
0.2 1.36083438012698
};
\addplot [darkslategray61, forget plot]
table {%
-0.2 1.87048888746007
0.2 1.87048888746007
};
\addplot [black, mark=o, mark size=3, mark options={solid,fill opacity=0,draw=darkslategray61}, only marks, forget plot]
table {%
0 2.28432240244001
};
\path [draw=darkslategray61, fill=peru22412844]
(axis cs:0.6,2.06933176759316)
--(axis cs:1.4,2.06933176759316)
--(axis cs:1.4,2.22189723292406)
--(axis cs:0.6,2.22189723292406)
--(axis cs:0.6,2.06933176759316)
--cycle;
\addplot [darkslategray61, forget plot]
table {%
1 2.06933176759316
1 1.8872582633582
};
\addplot [darkslategray61, forget plot]
table {%
1 2.22189723292406
1 2.24538152733418
};
\addplot [darkslategray61, forget plot]
table {%
0.8 1.8872582633582
1.2 1.8872582633582
};
\addplot [darkslategray61, forget plot]
table {%
0.8 2.24538152733418
1.2 2.24538152733418
};
\addplot [black, mark=o, mark size=3, mark options={solid,fill opacity=0,draw=darkslategray61}, only marks, forget plot]
table {%
1 2.88261204583034
};
\path [draw=darkslategray61, fill=seagreen5814558]
(axis cs:1.6,0.627307541804902)
--(axis cs:2.4,0.627307541804902)
--(axis cs:2.4,0.774914737762506)
--(axis cs:1.6,0.774914737762506)
--(axis cs:1.6,0.627307541804902)
--cycle;
\addplot [darkslategray61, forget plot]
table {%
2 0.627307541804902
2 0.527946170470486
};
\addplot [darkslategray61, forget plot]
table {%
2 0.774914737762506
2 0.818386216551123
};
\addplot [darkslategray61, forget plot]
table {%
1.8 0.527946170470486
2.2 0.527946170470486
};
\addplot [darkslategray61, forget plot]
table {%
1.8 0.818386216551123
2.2 0.818386216551123
};
\path [draw=darkslategray61, fill=brown1926061]
(axis cs:2.6,0.554056629694295)
--(axis cs:3.4,0.554056629694295)
--(axis cs:3.4,0.982384982742925)
--(axis cs:2.6,0.982384982742925)
--(axis cs:2.6,0.554056629694295)
--cycle;
\addplot [darkslategray61, forget plot]
table {%
3 0.554056629694295
3 0.493992227281728
};
\addplot [darkslategray61, forget plot]
table {%
3 0.982384982742925
3 1.42256799145572
};
\addplot [darkslategray61, forget plot]
table {%
2.8 0.493992227281728
3.2 0.493992227281728
};
\addplot [darkslategray61, forget plot]
table {%
2.8 1.42256799145572
3.2 1.42256799145572
};
\path [draw=darkslategray61, fill=mediumpurple147113178]
(axis cs:3.6,1.30167581384878)
--(axis cs:4.4,1.30167581384878)
--(axis cs:4.4,1.54893287461699)
--(axis cs:3.6,1.54893287461699)
--(axis cs:3.6,1.30167581384878)
--cycle;
\addplot [darkslategray61, forget plot]
table {%
4 1.30167581384878
4 1.2701883330458
};
\addplot [darkslategray61, forget plot]
table {%
4 1.54893287461699
4 1.81640737682287
};
\addplot [darkslategray61, forget plot]
table {%
3.8 1.2701883330458
4.2 1.2701883330458
};
\addplot [darkslategray61, forget plot]
table {%
3.8 1.81640737682287
4.2 1.81640737682287
};
\addplot [black, mark=o, mark size=3, mark options={solid,fill opacity=0,draw=darkslategray61}, only marks, forget plot]
table {%
4 0.786637067887213
};
\addplot [darkslategray61, forget plot]
table {%
-0.4 1.7500879595414
0.4 1.7500879595414
};
\addplot [darkslategray61, forget plot]
table {%
0.6 2.11483584203195
1.4 2.11483584203195
};
\addplot [darkslategray61, forget plot]
table {%
1.6 0.724479869417833
2.4 0.724479869417833
};
\addplot [darkslategray61, forget plot]
table {%
2.6 0.815782392500781
3.4 0.815782392500781
};
\addplot [darkslategray61, forget plot]
table {%
3.6 1.41928783766476
4.4 1.41928783766476
};
\end{axis}

\end{tikzpicture}}
        \caption{HE}
        \label{fig:appendix:resultHighEntropy}
    \end{subfigure}
     \hfill
     \begin{subfigure}[t]{0.32\textwidth}
        \centering
        \resizebox{1\textwidth}{!}{
\begin{tikzpicture}

\definecolor{brown1926061}{RGB}{192,60,61}
\definecolor{darkgray176}{RGB}{176,176,176}
\definecolor{darkslategray61}{RGB}{61,61,61}
\definecolor{lightgray204}{RGB}{204,204,204}
\definecolor{mediumpurple147113178}{RGB}{147,113,178}
\definecolor{peru22412844}{RGB}{224,128,44}
\definecolor{seagreen5814558}{RGB}{58,145,58}
\definecolor{steelblue49115161}{RGB}{49,115,161}

\begin{axis}[
legend style={fill opacity=0.8, draw opacity=1, text opacity=1, draw=lightgray204},
log basis y={10},
tick align=outside,
tick pos=left,
x grid style={darkgray176},
xmin=-0.5, xmax=4.5,
xtick style={color=black},
xtick={0,1,2,3,4},
xticklabels={FNN,GNN,CL,PL,ER},
y grid style={darkgray176},
ylabel style={align=center},
ylabel={MAE},
ymin=0.1, ymax=26,
ymode=log,
ytick style={color=black}
]
\path [draw=darkslategray61, fill=steelblue49115161]
(axis cs:-0.4,3.88233433857908)
--(axis cs:0.4,3.88233433857908)
--(axis cs:0.4,4.64443674221889)
--(axis cs:-0.4,4.64443674221889)
--(axis cs:-0.4,3.88233433857908)
--cycle;
\addplot [darkslategray61, forget plot]
table {%
0 3.88233433857908
0 3.06079220381798
};
\addplot [darkslategray61, forget plot]
table {%
0 4.64443674221889
0 4.81364744142383
};
\addplot [darkslategray61, forget plot]
table {%
-0.2 3.06079220381798
0.2 3.06079220381798
};
\addplot [darkslategray61, forget plot]
table {%
-0.2 4.81364744142383
0.2 4.81364744142383
};
\path [draw=darkslategray61, fill=peru22412844]
(axis cs:0.6,4.42255603945375)
--(axis cs:1.4,4.42255603945375)
--(axis cs:1.4,5.75636149830142)
--(axis cs:0.6,5.75636149830142)
--(axis cs:0.6,4.42255603945375)
--cycle;
\addplot [darkslategray61, forget plot]
table {%
1 4.42255603945375
1 4.2099908007308
};
\addplot [darkslategray61, forget plot]
table {%
1 5.75636149830142
1 6.38413217221228
};
\addplot [darkslategray61, forget plot]
table {%
0.8 4.2099908007308
1.2 4.2099908007308
};
\addplot [darkslategray61, forget plot]
table {%
0.8 6.38413217221228
1.2 6.38413217221228
};
\addplot [black, mark=o, mark size=3, mark options={solid,fill opacity=0,draw=darkslategray61}, only marks, forget plot]
table {%
1 2.39123265808662
};
\path [draw=darkslategray61, fill=seagreen5814558]
(axis cs:1.6,0.880770728308566)
--(axis cs:2.4,0.880770728308566)
--(axis cs:2.4,1.18491687362901)
--(axis cs:1.6,1.18491687362901)
--(axis cs:1.6,0.880770728308566)
--cycle;
\addplot [darkslategray61, forget plot]
table {%
2 0.880770728308566
2 0.779526387735298
};
\addplot [darkslategray61, forget plot]
table {%
2 1.18491687362901
2 1.27530904126614
};
\addplot [darkslategray61, forget plot]
table {%
1.8 0.779526387735298
2.2 0.779526387735298
};
\addplot [darkslategray61, forget plot]
table {%
1.8 1.27530904126614
2.2 1.27530904126614
};
\path [draw=darkslategray61, fill=brown1926061]
(axis cs:2.6,0.960744349910603)
--(axis cs:3.4,0.960744349910603)
--(axis cs:3.4,1.37494181742253)
--(axis cs:2.6,1.37494181742253)
--(axis cs:2.6,0.960744349910603)
--cycle;
\addplot [darkslategray61, forget plot]
table {%
3 0.960744349910603
3 0.847115969271841
};
\addplot [darkslategray61, forget plot]
table {%
3 1.37494181742253
3 1.5581565272775
};
\addplot [darkslategray61, forget plot]
table {%
2.8 0.847115969271841
3.2 0.847115969271841
};
\addplot [darkslategray61, forget plot]
table {%
2.8 1.5581565272775
3.2 1.5581565272775
};
\path [draw=darkslategray61, fill=mediumpurple147113178]
(axis cs:3.6,1.5936241541159)
--(axis cs:4.4,1.5936241541159)
--(axis cs:4.4,2.1270800470881)
--(axis cs:3.6,2.1270800470881)
--(axis cs:3.6,1.5936241541159)
--cycle;
\addplot [darkslategray61, forget plot]
table {%
4 1.5936241541159
4 1.43852460711505
};
\addplot [darkslategray61, forget plot]
table {%
4 2.1270800470881
4 2.22786630874731
};
\addplot [darkslategray61, forget plot]
table {%
3.8 1.43852460711505
4.2 1.43852460711505
};
\addplot [darkslategray61, forget plot]
table {%
3.8 2.22786630874731
4.2 2.22786630874731
};
\addplot [darkslategray61, forget plot]
table {%
-0.4 4.37157034726663
0.4 4.37157034726663
};
\addplot [darkslategray61, forget plot]
table {%
0.6 5.38671696798738
1.4 5.38671696798738
};
\addplot [darkslategray61, forget plot]
table {%
1.6 0.983533780183224
2.4 0.983533780183224
};
\addplot [darkslategray61, forget plot]
table {%
2.6 1.0839779853039
3.4 1.0839779853039
};
\addplot [darkslategray61, forget plot]
table {%
3.6 1.83957807453704
4.4 1.83957807453704
};
\end{axis}

\end{tikzpicture}}
        \caption{LE}
        \label{fig:appendix:resultLowEntropy}
    \end{subfigure}
    \hfill
     \begin{subfigure}[t]{0.32\textwidth}
        \centering
        \resizebox{1\textwidth}{!}{
\begin{tikzpicture}

\definecolor{brown1926061}{RGB}{192,60,61}
\definecolor{darkgray176}{RGB}{176,176,176}
\definecolor{darkslategray61}{RGB}{61,61,61}
\definecolor{lightgray204}{RGB}{204,204,204}
\definecolor{mediumpurple147113178}{RGB}{147,113,178}
\definecolor{peru22412844}{RGB}{224,128,44}
\definecolor{seagreen5814558}{RGB}{58,145,58}
\definecolor{steelblue49115161}{RGB}{49,115,161}

\begin{axis}[
legend style={fill opacity=0.8, draw opacity=1, text opacity=1, draw=lightgray204},
log basis y={10},
tick align=outside,
tick pos=left,
x grid style={darkgray176},
xmin=-0.5, xmax=4.5,
xtick style={color=black},
xtick={0,1,2,3,4},
xticklabels={FNN,GNN,CL,PL,ER},
y grid style={darkgray176},
ylabel style={align=center},
ylabel={MAE},
ymin=0.1, ymax=26,
ymode=log,
ytick style={color=black}
]
\path [draw=darkslategray61, fill=steelblue49115161]
(axis cs:-0.4,2.26561275318626)
--(axis cs:0.4,2.26561275318626)
--(axis cs:0.4,2.95977396250799)
--(axis cs:-0.4,2.95977396250799)
--(axis cs:-0.4,2.26561275318626)
--cycle;
\addplot [darkslategray61, forget plot]
table {%
0 2.26561275318626
0 2.19078951515257
};
\addplot [darkslategray61, forget plot]
table {%
0 2.95977396250799
0 3.16795926988125
};
\addplot [darkslategray61, forget plot]
table {%
-0.2 2.19078951515257
0.2 2.19078951515257
};
\addplot [darkslategray61, forget plot]
table {%
-0.2 3.16795926988125
0.2 3.16795926988125
};
\path [draw=darkslategray61, fill=peru22412844]
(axis cs:0.6,2.23819573672981)
--(axis cs:1.4,2.23819573672981)
--(axis cs:1.4,3.24261608925359)
--(axis cs:0.6,3.24261608925359)
--(axis cs:0.6,2.23819573672981)
--cycle;
\addplot [darkslategray61, forget plot]
table {%
1 2.23819573672981
1 2.02587104216218
};
\addplot [darkslategray61, forget plot]
table {%
1 3.24261608925359
1 3.36064590513706
};
\addplot [darkslategray61, forget plot]
table {%
0.8 2.02587104216218
1.2 2.02587104216218
};
\addplot [darkslategray61, forget plot]
table {%
0.8 3.36064590513706
1.2 3.36064590513706
};
\path [draw=darkslategray61, fill=seagreen5814558]
(axis cs:1.6,2.37442528735632)
--(axis cs:2.4,2.37442528735632)
--(axis cs:2.4,2.462109375)
--(axis cs:1.6,2.462109375)
--(axis cs:1.6,2.37442528735632)
--cycle;
\addplot [darkslategray61, forget plot]
table {%
2 2.37442528735632
2 2.36301369863014
};
\addplot [darkslategray61, forget plot]
table {%
2 2.462109375
2 2.59166666666667
};
\addplot [darkslategray61, forget plot]
table {%
1.8 2.36301369863014
2.2 2.36301369863014
};
\addplot [darkslategray61, forget plot]
table {%
1.8 2.59166666666667
2.2 2.59166666666667
};
\path [draw=darkslategray61, fill=brown1926061]
(axis cs:2.6,2.30716258446113)
--(axis cs:3.4,2.30716258446113)
--(axis cs:3.4,2.48556764198902)
--(axis cs:2.6,2.48556764198902)
--(axis cs:2.6,2.30716258446113)
--cycle;
\addplot [darkslategray61, forget plot]
table {%
3 2.30716258446113
3 2.28481424611849
};
\addplot [darkslategray61, forget plot]
table {%
3 2.48556764198902
3 2.7364452486176
};
\addplot [darkslategray61, forget plot]
table {%
2.8 2.28481424611849
3.2 2.28481424611849
};
\addplot [darkslategray61, forget plot]
table {%
2.8 2.7364452486176
3.2 2.7364452486176
};
\path [draw=darkslategray61, fill=mediumpurple147113178]
(axis cs:3.6,3.07664563216088)
--(axis cs:4.4,3.07664563216088)
--(axis cs:4.4,3.13422668661382)
--(axis cs:3.6,3.13422668661382)
--(axis cs:3.6,3.07664563216088)
--cycle;
\addplot [darkslategray61, forget plot]
table {%
4 3.07664563216088
4 2.99960494108389
};
\addplot [darkslategray61, forget plot]
table {%
4 3.13422668661382
4 3.14724525079176
};
\addplot [darkslategray61, forget plot]
table {%
3.8 2.99960494108389
4.2 2.99960494108389
};
\addplot [darkslategray61, forget plot]
table {%
3.8 3.14724525079176
4.2 3.14724525079176
};
\addplot [black, mark=o, mark size=3, mark options={solid,fill opacity=0,draw=darkslategray61}, only marks, forget plot]
table {%
4 3.56342322143574
};
\addplot [darkslategray61, forget plot]
table {%
-0.4 2.56235181726427
0.4 2.56235181726427
};
\addplot [darkslategray61, forget plot]
table {%
0.6 2.68682514317598
1.4 2.68682514317598
};
\addplot [darkslategray61, forget plot]
table {%
1.6 2.409375
2.4 2.409375
};
\addplot [darkslategray61, forget plot]
table {%
2.6 2.3323844347674
3.4 2.3323844347674
};
\addplot [darkslategray61, forget plot]
table {%
3.6 3.09191337618607
4.4 3.09191337618607
};
\end{axis}

\end{tikzpicture}}
        \caption{SW}
        \label{fig:appendix:result_squareWorld}
    \end{subfigure}
     \hfill
     \begin{subfigure}[t]{0.32\textwidth}
        \centering
        \resizebox{1\textwidth}{!}{

\begin{tikzpicture}

\definecolor{brown1926061}{RGB}{192,60,61}
\definecolor{darkgray176}{RGB}{176,176,176}
\definecolor{darkslategray61}{RGB}{61,61,61}
\definecolor{lightgray204}{RGB}{204,204,204}
\definecolor{mediumpurple147113178}{RGB}{147,113,178}
\definecolor{peru22412844}{RGB}{224,128,44}
\definecolor{seagreen5814558}{RGB}{58,145,58}
\definecolor{steelblue49115161}{RGB}{49,115,161}

\begin{axis}[
legend style={fill opacity=0.8, draw opacity=1, text opacity=1, draw=lightgray204},
log basis y={10},
tick align=outside,
tick pos=left,
x grid style={darkgray176},
xmin=-0.5, xmax=4.5,
xtick style={color=black},
xtick={0,1,2,3,4},
xticklabels={FNN,GNN,CL,PL,ER},
y grid style={darkgray176},
ylabel style={align=center},
ylabel={MAE},
ymin=0.1, ymax=26,
ymode=log,
ytick style={color=black}
]
\path [draw=darkslategray61, fill=steelblue49115161]
(axis cs:-0.4,0.878295018572399)
--(axis cs:0.4,0.878295018572399)
--(axis cs:0.4,2.20954416569466)
--(axis cs:-0.4,2.20954416569466)
--(axis cs:-0.4,0.878295018572399)
--cycle;
\addplot [darkslategray61, forget plot]
table {%
0 0.878295018572399
0 0.849400759122975
};
\addplot [darkslategray61, forget plot]
table {%
0 2.20954416569466
0 2.45209041377327
};
\addplot [darkslategray61, forget plot]
table {%
-0.2 0.849400759122975
0.2 0.849400759122975
};
\addplot [darkslategray61, forget plot]
table {%
-0.2 2.45209041377327
0.2 2.45209041377327
};
\path [draw=darkslategray61, fill=peru22412844]
(axis cs:0.6,0.86230854141312)
--(axis cs:1.4,0.86230854141312)
--(axis cs:1.4,1.94898745612164)
--(axis cs:0.6,1.94898745612164)
--(axis cs:0.6,0.86230854141312)
--cycle;
\addplot [darkslategray61, forget plot]
table {%
1 0.86230854141312
1 0.839496102583875
};
\addplot [darkslategray61, forget plot]
table {%
1 1.94898745612164
1 2.65268633710369
};
\addplot [darkslategray61, forget plot]
table {%
0.8 0.839496102583875
1.2 0.839496102583875
};
\addplot [darkslategray61, forget plot]
table {%
0.8 2.65268633710369
1.2 2.65268633710369
};
\path [draw=darkslategray61, fill=seagreen5814558]
(axis cs:1.6,0.231316198800089)
--(axis cs:2.4,0.231316198800089)
--(axis cs:2.4,0.551658807289302)
--(axis cs:1.6,0.551658807289302)
--(axis cs:1.6,0.231316198800089)
--cycle;
\addplot [darkslategray61, forget plot]
table {%
2 0.231316198800089
2 0.133333333333333
};
\addplot [darkslategray61, forget plot]
table {%
2 0.551658807289302
2 0.936139332365747
};
\addplot [darkslategray61, forget plot]
table {%
1.8 0.133333333333333
2.2 0.133333333333333
};
\addplot [darkslategray61, forget plot]
table {%
1.8 0.936139332365747
2.2 0.936139332365747
};
\path [draw=darkslategray61, fill=brown1926061]
(axis cs:2.6,0.264355297284526)
--(axis cs:3.4,0.264355297284526)
--(axis cs:3.4,0.755176563071732)
--(axis cs:2.6,0.755176563071732)
--(axis cs:2.6,0.264355297284526)
--cycle;
\addplot [darkslategray61, forget plot]
table {%
3 0.264355297284526
3 0.159788578502061
};
\addplot [darkslategray61, forget plot]
table {%
3 0.755176563071732
3 1.19340059364039
};
\addplot [darkslategray61, forget plot]
table {%
2.8 0.159788578502061
3.2 0.159788578502061
};
\addplot [darkslategray61, forget plot]
table {%
2.8 1.19340059364039
3.2 1.19340059364039
};
\path [draw=darkslategray61, fill=mediumpurple147113178]
(axis cs:3.6,0.753164922622012)
--(axis cs:4.4,0.753164922622012)
--(axis cs:4.4,1.10573926936779)
--(axis cs:3.6,1.10573926936779)
--(axis cs:3.6,0.753164922622012)
--cycle;
\addplot [darkslategray61, forget plot]
table {%
4 0.753164922622012
4 0.732019771593183
};
\addplot [darkslategray61, forget plot]
table {%
4 1.10573926936779
4 1.54852498946018
};
\addplot [darkslategray61, forget plot]
table {%
3.8 0.732019771593183
4.2 0.732019771593183
};
\addplot [darkslategray61, forget plot]
table {%
3.8 1.54852498946018
4.2 1.54852498946018
};
\addplot [black, mark=o, mark size=3, mark options={solid,fill opacity=0,draw=darkslategray61}, only marks, forget plot]
table {%
4 0.171936870655977
};
\addplot [darkslategray61, forget plot]
table {%
-0.4 1.27364095459584
0.4 1.27364095459584
};
\addplot [darkslategray61, forget plot]
table {%
0.6 1.1478945511938
1.4 1.1478945511938
};
\addplot [darkslategray61, forget plot]
table {%
1.6 0.325725641334457
2.4 0.325725641334457
};
\addplot [darkslategray61, forget plot]
table {%
2.6 0.407894543180346
3.4 0.407894543180346
};
\addplot [darkslategray61, forget plot]
table {%
3.6 0.886331149771988
4.4 0.886331149771988
};
\end{axis}

\end{tikzpicture}}
        \caption{Berlin}
        \label{fig:appendix:result_cutoutWorld}
    \end{subfigure}
    \hfill
    \begin{subfigure}[t]{0.32\textwidth}
        \centering
        \resizebox{1\textwidth}{!}{
\begin{tikzpicture}

\definecolor{brown1926061}{RGB}{192,60,61}
\definecolor{darkgray176}{RGB}{176,176,176}
\definecolor{darkslategray61}{RGB}{61,61,61}
\definecolor{lightgray204}{RGB}{204,204,204}
\definecolor{mediumpurple147113178}{RGB}{147,113,178}
\definecolor{peru22412844}{RGB}{224,128,44}
\definecolor{seagreen5814558}{RGB}{58,145,58}
\definecolor{steelblue49115161}{RGB}{49,115,161}

\begin{axis}[
legend style={fill opacity=0.8, draw opacity=1, text opacity=1, draw=lightgray204},
log basis y={10},
tick align=outside,
tick pos=left,
x grid style={darkgray176},
xmin=-0.5, xmax=4.5,
xtick style={color=black},
xtick={0,1,2,3,4},
xticklabels={FNN,GNN,CL,PL,ER},
y grid style={darkgray176},
ylabel style={align=center},
ylabel={MAE},
ymin=0.1, ymax=26,
ymode=log,
ytick style={color=black}
]
\path [draw=darkslategray61, fill=steelblue49115161]
(axis cs:-0.4,2.07044170581643)
--(axis cs:0.4,2.07044170581643)
--(axis cs:0.4,2.50866466809288)
--(axis cs:-0.4,2.50866466809288)
--(axis cs:-0.4,2.07044170581643)
--cycle;
\addplot [darkslategray61, forget plot]
table {%
0 2.07044170581643
0 1.89583155091281
};
\addplot [darkslategray61, forget plot]
table {%
0 2.50866466809288
0 2.62078077309165
};
\addplot [darkslategray61, forget plot]
table {%
-0.2 1.89583155091281
0.2 1.89583155091281
};
\addplot [darkslategray61, forget plot]
table {%
-0.2 2.62078077309165
0.2 2.62078077309165
};
\addplot [black, mark=o, mark size=3, mark options={solid,fill opacity=0,draw=darkslategray61}, only marks, forget plot]
table {%
0 3.61284809097409
};
\path [draw=darkslategray61, fill=peru22412844]
(axis cs:0.6,1.99581459190335)
--(axis cs:1.4,1.99581459190335)
--(axis cs:1.4,2.65477032039871)
--(axis cs:0.6,2.65477032039871)
--(axis cs:0.6,1.99581459190335)
--cycle;
\addplot [darkslategray61, forget plot]
table {%
1 1.99581459190335
1 1.95712610786277
};
\addplot [darkslategray61, forget plot]
table {%
1 2.65477032039871
1 3.5677911511717
};
\addplot [darkslategray61, forget plot]
table {%
0.8 1.95712610786277
1.2 1.95712610786277
};
\addplot [darkslategray61, forget plot]
table {%
0.8 3.5677911511717
1.2 3.5677911511717
};
\path [draw=darkslategray61, fill=seagreen5814558]
(axis cs:1.6,0.441304525096589)
--(axis cs:2.4,0.441304525096589)
--(axis cs:2.4,0.745608869575812)
--(axis cs:1.6,0.745608869575812)
--(axis cs:1.6,0.441304525096589)
--cycle;
\addplot [darkslategray61, forget plot]
table {%
2 0.441304525096589
2 0.319734345351044
};
\addplot [darkslategray61, forget plot]
table {%
2 0.745608869575812
2 0.792792792792793
};
\addplot [darkslategray61, forget plot]
table {%
1.8 0.319734345351044
2.2 0.319734345351044
};
\addplot [darkslategray61, forget plot]
table {%
1.8 0.792792792792793
2.2 0.792792792792793
};
\addplot [black, mark=o, mark size=3, mark options={solid,fill opacity=0,draw=darkslategray61}, only marks, forget plot]
table {%
2 1.39438502673797
};
\path [draw=darkslategray61, fill=brown1926061]
(axis cs:2.6,0.601950638255744)
--(axis cs:3.4,0.601950638255744)
--(axis cs:3.4,0.77456778695325)
--(axis cs:2.6,0.77456778695325)
--(axis cs:2.6,0.601950638255744)
--cycle;
\addplot [darkslategray61, forget plot]
table {%
3 0.601950638255744
3 0.392630272654599
};
\addplot [darkslategray61, forget plot]
table {%
3 0.77456778695325
3 0.803416574512694
};
\addplot [darkslategray61, forget plot]
table {%
2.8 0.392630272654599
3.2 0.392630272654599
};
\addplot [darkslategray61, forget plot]
table {%
2.8 0.803416574512694
3.2 0.803416574512694
};
\addplot [black, mark=o, mark size=3, mark options={solid,fill opacity=0,draw=darkslategray61}, only marks, forget plot]
table {%
3 1.63397420772557
};
\path [draw=darkslategray61, fill=mediumpurple147113178]
(axis cs:3.6,1.1703078151595)
--(axis cs:4.4,1.1703078151595)
--(axis cs:4.4,1.38606744693699)
--(axis cs:3.6,1.38606744693699)
--(axis cs:3.6,1.1703078151595)
--cycle;
\addplot [darkslategray61, forget plot]
table {%
4 1.1703078151595
4 0.966693886293088
};
\addplot [darkslategray61, forget plot]
table {%
4 1.38606744693699
4 1.45419943672203
};
\addplot [darkslategray61, forget plot]
table {%
3.8 0.966693886293088
4.2 0.966693886293088
};
\addplot [darkslategray61, forget plot]
table {%
3.8 1.45419943672203
4.2 1.45419943672203
};
\addplot [black, mark=o, mark size=3, mark options={solid,fill opacity=0,draw=darkslategray61}, only marks, forget plot]
table {%
4 2.21074000403689
};
\addplot [darkslategray61, forget plot]
table {%
-0.4 2.14021893305354
0.4 2.14021893305354
};
\addplot [darkslategray61, forget plot]
table {%
0.6 2.05986056654007
1.4 2.05986056654007
};
\addplot [darkslategray61, forget plot]
table {%
1.6 0.595692238020923
2.4 0.595692238020923
};
\addplot [darkslategray61, forget plot]
table {%
2.6 0.677780043746762
3.4 0.677780043746762
};
\addplot [darkslategray61, forget plot]
table {%
3.6 1.18104348328784
4.4 1.18104348328784
};
\end{axis}

\end{tikzpicture}}
        \caption{Berlin-AP}
        \label{fig:appendix:result_districtWorldArt}
    \end{subfigure}
    \hfill
    \begin{subfigure}[t]{0.32\textwidth}
        \centering
        \resizebox{1\textwidth}{!}{
\begin{tikzpicture}

\definecolor{brown1926061}{RGB}{192,60,61}
\definecolor{darkgray176}{RGB}{176,176,176}
\definecolor{darkslategray61}{RGB}{61,61,61}
\definecolor{lightgray204}{RGB}{204,204,204}
\definecolor{mediumpurple147113178}{RGB}{147,113,178}
\definecolor{peru22412844}{RGB}{224,128,44}
\definecolor{seagreen5814558}{RGB}{58,145,58}
\definecolor{steelblue49115161}{RGB}{49,115,161}

\begin{axis}[
legend style={fill opacity=0.8, draw opacity=1, text opacity=1, draw=lightgray204},
log basis y={10},
tick align=outside,
tick pos=left,
x grid style={darkgray176},
xmin=-0.5, xmax=4.5,
xtick style={color=black},
xtick={0,1,2,3,4},
xticklabels={FNN,GNN,CL,PL,ER},
y grid style={darkgray176},
ylabel style={align=center},
ylabel={MAE},
ymin=0.1, ymax=26,
ymode=log,
ytick style={color=black}
]
\path [draw=darkslategray61, fill=steelblue49115161]
(axis cs:-0.4,0.269110718930211)
--(axis cs:0.4,0.269110718930211)
--(axis cs:0.4,0.2834422875458)
--(axis cs:-0.4,0.2834422875458)
--(axis cs:-0.4,0.269110718930211)
--cycle;
\addplot [darkslategray61, forget plot]
table {%
0 0.269110718930211
0 0.258990821487214
};
\addplot [darkslategray61, forget plot]
table {%
0 0.2834422875458
0 0.283889155417497
};
\addplot [darkslategray61, forget plot]
table {%
-0.2 0.258990821487214
0.2 0.258990821487214
};
\addplot [darkslategray61, forget plot]
table {%
-0.2 0.283889155417497
0.2 0.283889155417497
};
\addplot [black, mark=o, mark size=3, mark options={solid,fill opacity=0,draw=darkslategray61}, only marks, forget plot]
table {%
0 0.308273662426112
};
\path [draw=darkslategray61, fill=peru22412844]
(axis cs:0.6,0.234221536476272)
--(axis cs:1.4,0.234221536476272)
--(axis cs:1.4,0.244190301895499)
--(axis cs:0.6,0.244190301895499)
--(axis cs:0.6,0.234221536476272)
--cycle;
\addplot [darkslategray61, forget plot]
table {%
1 0.234221536476272
1 0.23314465537258
};
\addplot [darkslategray61, forget plot]
table {%
1 0.244190301895499
1 0.24667572141019
};
\addplot [darkslategray61, forget plot]
table {%
0.8 0.23314465537258
1.2 0.23314465537258
};
\addplot [darkslategray61, forget plot]
table {%
0.8 0.24667572141019
1.2 0.24667572141019
};
\addplot [black, mark=o, mark size=3, mark options={solid,fill opacity=0,draw=darkslategray61}, only marks, forget plot]
table {%
1 0.27614746970091
};
\path [draw=darkslategray61, fill=seagreen5814558]
(axis cs:1.6,0.174543199082635)
--(axis cs:2.4,0.174543199082635)
--(axis cs:2.4,0.19267461447897)
--(axis cs:1.6,0.19267461447897)
--(axis cs:1.6,0.174543199082635)
--cycle;
\addplot [darkslategray61, forget plot]
table {%
2 0.174543199082635
2 0.156734366648851
};
\addplot [darkslategray61, forget plot]
table {%
2 0.19267461447897
2 0.195699975568043
};
\addplot [darkslategray61, forget plot]
table {%
1.8 0.156734366648851
2.2 0.156734366648851
};
\addplot [darkslategray61, forget plot]
table {%
1.8 0.195699975568043
2.2 0.195699975568043
};
\addplot [black, mark=o, mark size=3, mark options={solid,fill opacity=0,draw=darkslategray61}, only marks, forget plot]
table {%
2 0.226354082457559
};
\path [draw=darkslategray61, fill=brown1926061]
(axis cs:2.6,0.198549889203886)
--(axis cs:3.4,0.198549889203886)
--(axis cs:3.4,0.226308403806544)
--(axis cs:2.6,0.226308403806544)
--(axis cs:2.6,0.198549889203886)
--cycle;
\addplot [darkslategray61, forget plot]
table {%
3 0.198549889203886
3 0.190347448048733
};
\addplot [darkslategray61, forget plot]
table {%
3 0.226308403806544
3 0.227877740898971
};
\addplot [darkslategray61, forget plot]
table {%
2.8 0.190347448048733
3.2 0.190347448048733
};
\addplot [darkslategray61, forget plot]
table {%
2.8 0.227877740898971
3.2 0.227877740898971
};
\addplot [black, mark=o, mark size=3, mark options={solid,fill opacity=0,draw=darkslategray61}, only marks, forget plot]
table {%
3 0.271002714911451
};
\path [draw=darkslategray61, fill=mediumpurple147113178]
(axis cs:3.6,1.050038105193)
--(axis cs:4.4,1.050038105193)
--(axis cs:4.4,1.31327359604256)
--(axis cs:3.6,1.31327359604256)
--(axis cs:3.6,1.050038105193)
--cycle;
\addplot [darkslategray61, forget plot]
table {%
4 1.050038105193
4 1.01394957622829
};
\addplot [darkslategray61, forget plot]
table {%
4 1.31327359604256
4 1.34616971182614
};
\addplot [darkslategray61, forget plot]
table {%
3.8 1.01394957622829
4.2 1.01394957622829
};
\addplot [darkslategray61, forget plot]
table {%
3.8 1.34616971182614
4.2 1.34616971182614
};
\addplot [darkslategray61, forget plot]
table {%
-0.4 0.28034991000699
0.4 0.28034991000699
};
\addplot [darkslategray61, forget plot]
table {%
0.6 0.236610257277181
1.4 0.236610257277181
};
\addplot [darkslategray61, forget plot]
table {%
1.6 0.182503575562325
2.4 0.182503575562325
};
\addplot [darkslategray61, forget plot]
table {%
2.6 0.210649007599615
3.4 0.210649007599615
};
\addplot [darkslategray61, forget plot]
table {%
3.6 1.21268938367274
4.4 1.21268938367274
};
\end{axis}

\end{tikzpicture}}
        \caption{SW-TV}
        \label{fig:appendix:result_time}
    \end{subfigure}
    \caption{Benchmark performances on various stylized and realistic traffic scenarios.}
    \label{appendix:fig:results_real_traffic}
\end{figure}
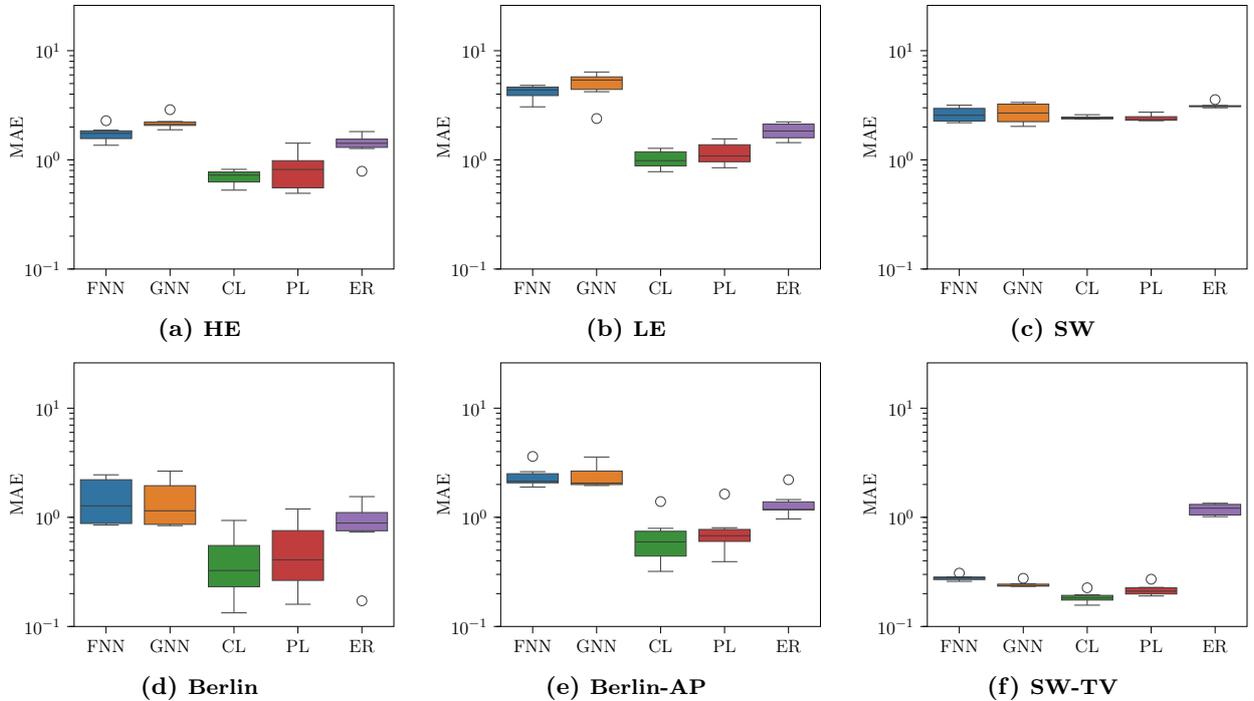

\paragraph{Stylized scenarios}
Figure \ref{fig:appendix:resultHighEntropy} shows the results for the various benchmarks on the \textit{high-entropy scenario}. All the \gls{acr:COAML} benchmarks, namely \textit{CL}, \textit{PL}, and \textit{ER} outperform the pure \gls{acr:ML} benchmarks \textit{FNN} and \textit{GNN}. Precisely, the \textit{CL} benchmark outperforms the pure \gls{acr:ML} benchmarks by around 60\% on average, while the \textit{PL} benchmark outperforms the \textit{ML} benchmark by around 52\% on average. 
Recall that predicting the traffic equilibrium in such a scenario depends less on the combinatorial trip information but rather on the location of an arc in the network, as the traffic flow is evenly distributed with increasing traffic flow to the center. Therefore, from a theoretical perspective, the pure \gls{acr:ML} pipeline should yield good results in this scenario, as there is a high correlation between the traffic flow and the context features describing the location of an arc within the network.
The results show the superiority of \gls{acr:COAML} pipelines over pure \gls{acr:ML} pipelines, even in scenarios that exhibit a structure that is easy to capture with pure \gls{acr:ML} pipelines. 

Figure \ref{fig:appendix:resultLowEntropy} shows the results on the \textit{low-entropy scenario}. The \gls{acr:MAE} for the \textit{FNN} benchmark increased from around 1.75 in the \textit{high-entropy scenario} to around 4.18 in the \textit{low-entropy scenario}. This indicates that the \textit{low-entropy scenario} is indeed more challenging to predict for pure \textit{ML} pipelines in comparison to the \textit{high-entropy scenario} as the correlation between the arc flow and the context features is lower. Comparing the different benchmarks, we observe that the \gls{acr:COAML} pipelines again outperform the pure \gls{acr:ML} pipelines. Remarkably, the prediction error of the \gls{acr:COAML} pipelines in the \textit{low-entropy scenario} equals the prediction error in the \textit{high-entropy scenario}. Specifically, the \textit{CL} benchmark decreases the prediction error of the \textit{FNN} benchmark by around 75\% on average, and the \textit{PL} benchmark decreases the prediction error of the \textit{FNN} benchmark by around 72\% on average.
The experiments on the \textit{high-entropy scenarios} and the \textit{low-entropy scenarios} indicate that the \gls{acr:COAML} pipeline learns to leverage the statistical model and the equilibrium layer depending on the underlying task outperforming pure \gls{acr:ML} pipelines, even in settings when pure \gls{acr:ML} pipelines are expected to perform well: while the contribution of the statistical model is particularly important in the \textit{high-entropy scenario} when it comes to extracting the right context features, the equilibrium layer is particularly important in the \textit{low-entropy scenario} when it comes to processing combinatorial information. Surprisingly, the \textit{GNN} benchmark does not perform well in both scenarios. This can be attributed to two reasons: first, while the \gls{acr:GNN} architectures perform particularly well when predicting temporal traffic forecasts in a receding horizon manner, the \gls{acr:GNN} architecture might be less well-suited when predicting aggregated traffic flows. Second, in comparison to an \gls{acr:FNN} architecture, the \gls{acr:GNN} architecture accounts for irregular structures with many intertwined dependencies of neighboring nodes and therefore is prone to over-smoothing. Thus, a simple \gls{acr:GNN} architecture might not be able to extract notable features. 

Figure \ref{fig:appendix:result_squareWorld} shows the results for the \textit{square-world scenario}. Here, all approaches outperform the \textit{ER} benchmark. Still, the \textit{CL} benchmark and the \textit{PL} benchmark outperform the \textit{FNN} benchmark by around 7\% on average. 
The pure \gls{acr:ML} pipelines reach a similar absolute performance as in the \textit{high-entropy scenario}, and the \gls{acr:COAML} pipelines reach a lower absolute performance in comparison to the \textit{high-entropy scenario} and the \textit{low-entropy scenario}.
Here, a similar effect arises as in the \textit{high-entropy scenario}: Roads located in a higher populated area have a higher traffic flow. Thus, there is a correlation between the arc context, namely the amount of work and home locations in the closer neighborhood, and the resulting traffic flow. So, from a theoretical perspective, the pure \gls{acr:ML} benchmarks should yield good results in this scenario. However, the \gls{acr:COAML} benchmarks still outperform the pure \gls{acr:ML} benchmarks.

\paragraph{Realistic scenarios}
Figure \ref{fig:appendix:result_cutoutWorld} shows the results for the \textit{Berlin scenario}.
Here, the \gls{acr:COAML} pipelines \textit{CL} and \textit{PL} significantly outperform the pure \textit{ML} pipelines. Precisely, the \textit{CL} benchmark reduces the prediction error by around 72\% on average in comparison to the \textit{FNN} benchmark, and the \textit{PL} benchmark reduces the prediction error by around 64\% on average. Note that the results are biased due to many trips starting and ending at the border of the network. Intuitively, the trips starting and ending at the border of the network are long trips passing through the Berlin district network. Thus, most of these trips focus on large roads with large capacity and free speed, mainly highways. From the pure~\gls{acr:ML} perspective, it is difficult to learn the focus of these through-passing trips on single roads, i.e., roads forming the shortest path from one side of the district to the other side of the district, and the high difference in traffic flow between these shortest-path-forming roads and feeder roads. Moreover, the origin locations and destination locations of these trips depend on the outer context that is not included in the scenario context, making it even more difficult for pure \gls{acr:ML} benchmarks to learn the relationship between the scenario context at hand and the focus of the traffic on single main raods. On the other side, the \gls{acr:COAML} benchmarks can leverage the equilibrium layer to learn the combinatorial effects of many trips, focusing on the shortest-path-forming roads.

Figure \ref{fig:appendix:result_districtWorldArt} shows the results of the \textit{Berlin artificial population scenario}. Recall the scenario equals the \textit{Berlin scenario}, but instead of the calibrated Berlin population, the scenario considers an artificially generated population restricted to the respective districts. Still, the results are similar to the results for the \textit{Berlin scenario}: The \textit{CL} and \textit{PL} benchmarks outperform all pure~\gls{acr:ML} benchmarks. Precisely, the \textit{CL} and the \textit{PL} benchmarks outperform the \textit{FNN} benchmark by around 71\% and 67\% on average, respectively. Intuitively, we observe a similar effect as in the \textit{Berlin scenario} with the traffic flow focusing on the main roads forming shortest paths between areas with high densities of work and home locations. While pure \gls{acr:ML} benchmarks fail to account for the combinatorial nature of the resulting traffic flow, the \gls{acr:COAML} pipelines leverage the statistical model and the equilibrium layer to combine the processing of contextual information with the combinatorial nature of traffic equilibria focusing on high-volume roads. Surprisingly, the \textit{CL} benchmark outperforms the \textit{PL} benchmark in all scenarios, although the \textit{CL} benchmark solves a~\gls{acr:MCFP} in the \gls{acr:CO}-layer and the \textit{PL} solves a \gls{acr:WE} in the \gls{acr:CO}-layer. Originally, the \textit{CL} benchmark was introduced as an efficient approximation for \gls{acr:COAML} pipelines solving a complex~\gls{acr:WE} in the \gls{acr:CO}-layer. However, the superior performance of the \textit{CL} benchmark might have a natural reason: According to \cite{Vickrey1969}, we can model flows with a queueing approach -- also MATSim leverages such a queueing approach -- such that latencies of an arc only increase when its respective queue reaches a certain length. We can interpret this stepwise increase of latencies depending on the number of agents in a queue as a piecewise latency function, similar to how we model it in the \textit{CL}-pipeline. This observation would lead to the assumption, that stepwise latency functions as used in the \textit{CL}-pipeline lead to better traffic equilibrium approximations as polynomial latency functions as used in the \textit{PL}-pipeline.

Overall, we can conclude that the \gls{acr:COAML} benchmarks outperform the pure \gls{acr:ML} benchmarks on all tested scenarios, also in scenarios that in particular serve to yield good predictions for pure \gls{acr:ML} benchmarks.
In Appendix~\ref{sec:appendix:stylizedExperiments}, we introduce further stylized experiments and report respective results.

\begin{figure}[tbh]
     \centering
     \begin{subfigure}[t]{0.3\textwidth}
        \centering 
        \includegraphics[width=\textwidth]{figures/Visualization_data_y.png}
        \caption{Target}
        \label{fig:appendix:TrueFlow}
     \end{subfigure}
     \hfill
     \begin{subfigure}[t]{0.3\textwidth}
        \centering
        \includegraphics[width=\textwidth]{figures/Visualization_supervised_NN_data_y_hat.png}
        \caption{\textit{FNN}}
        \label{fig:appendix:SupervisedFlow}
     \end{subfigure}
     \hfill
     \begin{subfigure}[t]{0.373\textwidth}
        \centering
        \includegraphics[width=\textwidth]{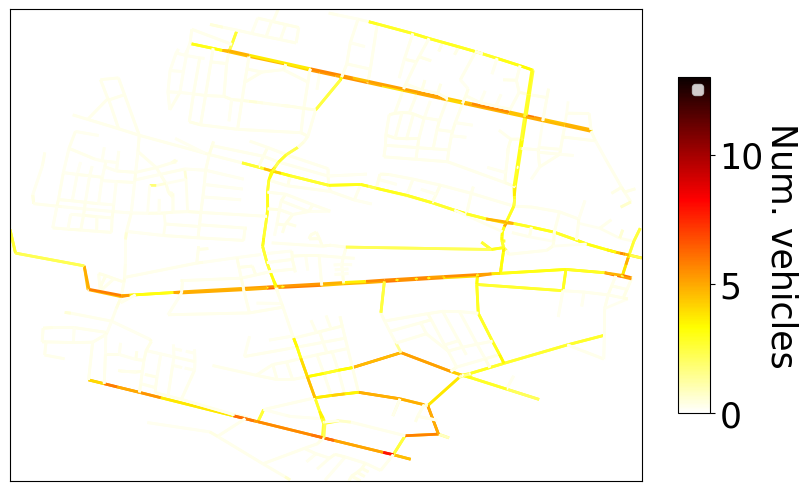}
        \caption{\textit{GNN}}
        \label{fig:appendix:SupervisedFlowGNN}
     \end{subfigure}
     \hfill
     \begin{subfigure}[t]{0.3\textwidth}
        \centering
        \includegraphics[width=\textwidth]{figures/Visualization_structured_multicommodityflow_NN_data_y_hat.png}
        \caption{\textit{CL}}
        \label{fig:appendix:CLFlow}
     \end{subfigure}
     \hfill
     \begin{subfigure}[t]{0.3\textwidth}
        \centering
        \includegraphics[width=\textwidth]{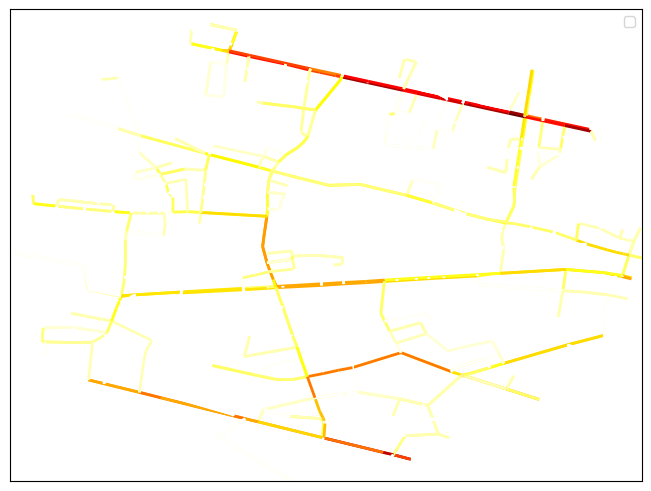}
        \caption{\textit{PL}}
        \label{fig:appendix:PLFlow}
     \end{subfigure}
     \hfill
     \begin{subfigure}[t]{0.373\textwidth}
        \centering
        \includegraphics[width=\textwidth]{figures/Visualization_structured_wardropequilibriaRegularized_NN_data_y_hat.png}
        \caption{\textit{ER}}
        \label{fig:appendix:ERFlow}
     \end{subfigure}
    \caption{Visualization of time-invariant traffic flows.}
    \label{fig:appendix:flow}
\end{figure}

\paragraph{Structural analysis:}
In this paragraph, we provide a structural analysis to substantiate the intuition on why the \gls{acr:COAML} benchmarks outperform pure \gls{acr:ML} benchmarks in predicting traffic equilibria. Figure \ref{fig:appendix:flow} shows the target (\ref{fig:appendix:TrueFlow}) and predicted (\textit{FNN}:~\ref{fig:appendix:SupervisedFlow}; \textit{GNN}:~\ref{fig:appendix:SupervisedFlowGNN}; \textit{CL}:~\ref{fig:appendix:CLFlow}; \textit{PL}:~\ref{fig:appendix:PLFlow}; \textit{ER}:~\ref{fig:appendix:ERFlow}) traffic flow aggregated over time for the various benchmarks on the \textit{Berlin scenario} (Figure \ref{fig:scenarioBerlin}).
Note that in Figure~\ref{fig:appendix:TrueFlow} the main traffic flow focuses on the main roads, i.e., roads forming the shortest paths between areas of interest, while there is only a low flow on small roads, i.e., local roads in residential areas.
As can be seen, the pure \textit{ML} benchmarks (Figures \ref{fig:appendix:SupervisedFlow} and~\ref{fig:appendix:SupervisedFlowGNN}) focus on predicting good mean values for the traffic flow but fail to map realistic traffic flow patterns that reveal high traffic flows on main roads. Analyzing the difference between the pure \gls{acr:ML} benchmarks, the \textit{GNN} benchmark yields a more realistic traffic flow pattern than the \textit{FNN} benchmark. This observation is in line with the different architectures of the \textit{FNN} benchmark and the \textit{GNN} benchmark: the \textit{GNN} benchmark allows the processing of contextual information and embeddings from neighboring roads and enables the detection of main roads to represent consistent flow attributes. Contrarily, the \textit{FNN} benchmark only considers an arc-specific context and thus yields uncorrelated traffic predictions between roads (Figure~\ref{fig:appendix:SupervisedFlow}). Still, the \textit{FNN} benchmark outperforms the \textit{GNN} with respect to accuracy due to the better performance in predicting traffic flows on small roads.
In contrast, the \gls{acr:COAML} benchmarks, namely the \textit{CL} and \textit{PL} benchmarks (Figure \ref{fig:appendix:CLFlow} and \ref{fig:appendix:PLFlow}) predict a realistic traffic equilibrium with a high traffic flow on main roads and a low traffic flow on small roads. 
The remaining \textit{ER} benchmark, which is also a \gls{acr:COAML} pipeline, fails to mimic realistic traffic equilibria but focuses more on predicting small traffic flows on rarely used small roads. This might be due to the restricted latency function used in the \textit{ER} benchmark that only allows us to learn the y-intercept, which is a strong limitation when it comes to predicting complex traffic patterns. Studying richer latency functions remains an interesting avenue for future work in this context.

\begin{figure}[b]
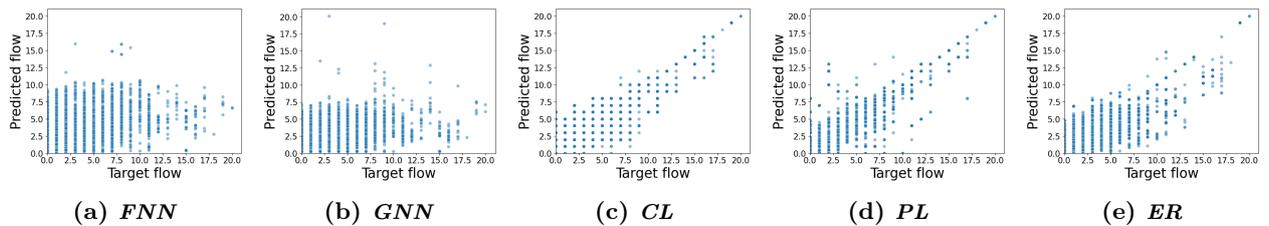

     \centering
     \begin{subfigure}[t]{0.19\textwidth}
        \centering
        \includegraphics[width=\textwidth]{results/Deviation_supervised_NN_0.png}
        \caption{\textit{FNN}}
        \label{fig:appendix:deviationFNN}
     \end{subfigure}
     \hfill
     \begin{subfigure}[t]{0.19\textwidth}
        \centering
        \includegraphics[width=\textwidth]{results/Deviation_supervised_GNN_0.png}
        \caption{\textit{GNN}}
        \label{fig:appendix:deviationGNN}
     \end{subfigure}
     \hfill
     \begin{subfigure}[t]{0.19\textwidth}
        \centering
        \includegraphics[width=\textwidth]{results/Deviation_structured_multicommodityflow_NN_0.png}
        \caption{\textit{CL}}
        \label{fig:appendix:deviationCL}
     \end{subfigure}
     \hfill
     \begin{subfigure}[t]{0.19\textwidth}
        \centering
        \includegraphics[width=\textwidth]{results/Deviation_structured_wardropequilibria_NN_0.png}
        \caption{\textit{PL}}
        \label{fig:appendix:deviationPL}
     \end{subfigure}
     \hfill
     \begin{subfigure}[t]{0.19\textwidth}
        \centering
        \includegraphics[width=\textwidth]{results/Deviation_structured_wardropequilibriaRegularized_NN_0.png}
        \caption{\textit{ER}}
        \label{fig:appendix:deviationER}
     \end{subfigure}
    \caption{Comparison of target and predicted traffic flows per arc for the Berlin scenario.}
    \label{fig:appendix:flowComparison}
\end{figure}

Figure~\ref{fig:appendix:flowComparison} shows the mapping from target traffic flow values to predicted traffic flow values and substantiates our conclusions drawn from analyzing Figure~\ref{fig:appendix:flow}. Intuitively, in the case of a perfect correlation between the target and the predicted traffic flow, all points would be on the diagonal.
However, focusing on the pure \gls{acr:ML} benchmarks, namely the \textit{FNN} (Figure~\ref{fig:appendix:deviationFNN}) and \textit{GNN} (Figure~\ref{fig:appendix:deviationGNN}) benchmarks their predictions have a low correlation with the target traffic flows especially for high traffic flow values on main raods and thus fail to represent a realistic traffic equilibrium pattern. 
In comparison, the \gls{acr:COAML} benchmarks, namely the \textit{CL} (Figure~\ref{fig:appendix:deviationCL}), the \textit{PL} (Figure~\ref{fig:appendix:deviationPL}), and the \textit{ER} (Figure~\ref{fig:appendix:deviationER}) benchmarks yield predictions with a high correlation to the target traffic flows especially for high traffic flow values on the main roads.

Overall, our structural analyses show the limitations of pure \gls{acr:ML} pipelines compared to \gls{acr:COAML} pipelines: pure \gls{acr:ML} pipelines fail to learn the combinatorial dynamics of the underlying traffic patterns and thus yield predictions with a low correlation to the target traffic flow values, especially for high traffic flow values on the main roads. 
From a pure \gls{acr:ML} perspective, it is difficult to learn that the traffic flows often spread largely over a limited number of main roads.
In contrast, the \gls{acr:COAML} benchmarks leverage the equilibrium layer to learn the combinatorial nature of traffic equilibria and thus allow the prediction of realistic traffic patterns.

\begin{figure}[t]
     \centering
     \begin{subfigure}[t]{0.3\textwidth}
         \centering
         \includegraphics[width=\textwidth]{results/EVALUATIONOriginal.png}
         \caption{Target}
         \label{fig:appendix:timeOriginal}
     \end{subfigure}
     \hfill
     \begin{subfigure}[t]{0.3\textwidth}
         \centering
         \includegraphics[width=\textwidth]{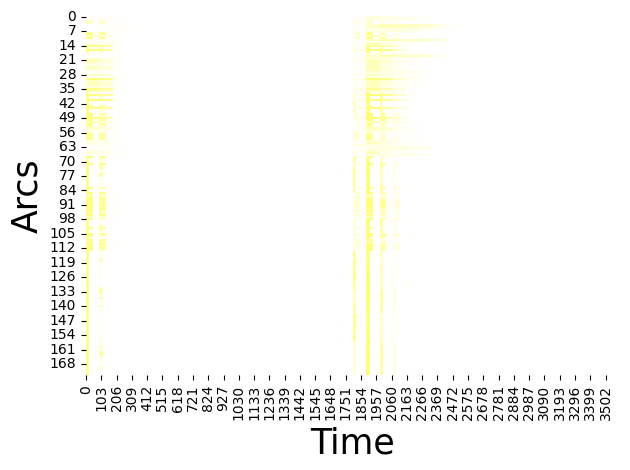}
         \caption{\textit{FNN}}
         \label{fig:appendix:timeSupervised}
     \end{subfigure}
     \hfill
     \begin{subfigure}[t]{0.36\textwidth}
         \centering
         \includegraphics[width=\textwidth]{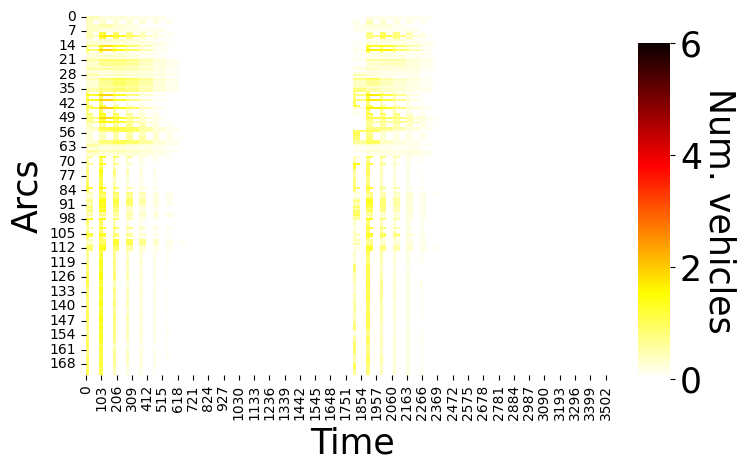}
         \caption{\textit{GNN}}
         \label{fig:appendix:timeSupervisedGNN}
     \end{subfigure}
     \hfill
     \begin{subfigure}[t]{0.3\textwidth}
         \centering
         \includegraphics[width=\textwidth]{results/EVALUATIONPredictionMCFP.png}
         \caption{\textit{CL}}
         \label{fig:appendix:timeCL}
     \end{subfigure}
     \hfill
     \begin{subfigure}[t]{0.3\textwidth}
         \centering
         \includegraphics[width=\textwidth]{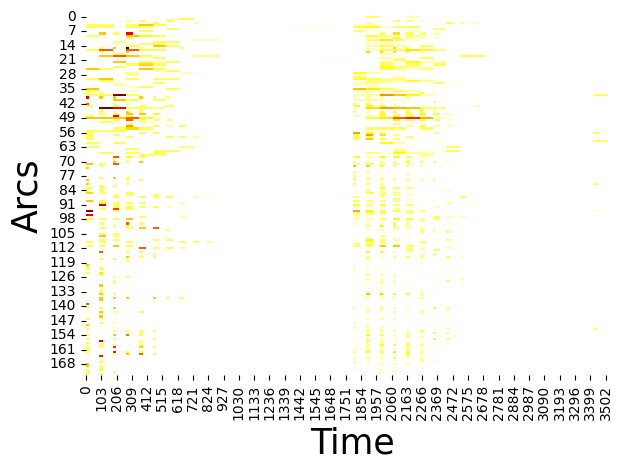}
         \caption{\textit{PL}}
         \label{fig:appendix:timePL}
     \end{subfigure}
     \hfill
     \begin{subfigure}[t]{0.36\textwidth}
         \centering
         \includegraphics[width=\textwidth]{results/EVALUATIONPredictionTestWE-reg.png}
         \caption{\textit{ER}}
         \label{fig:appendix:timeER}
     \end{subfigure}
    \caption{Visualization of time-variant traffic flows.}
    \label{fig:appendix:time}
\end{figure}

\paragraph{Time-variant traffic flows:}
Figure \ref{fig:appendix:result_time} shows the results of the \textit{square-world time-variant scenario}. As can be seen, the \textit{CL} and the \textit{PL} benchmarks outperform all other benchmarks. More specifically, the \textit{CL} benchmark reduces the prediction error in comparison to the \textit{FNN} benchmark by around 33\% on average and the \textit{PL} benchmark reduces the prediction error in comparison to the \textit{FNN} benchmark by around 22\% on average. Note that in this setting, we did not learn piecewise decomposed latency functions but expanded the underlying transport network graph over time. The time expansion of the \gls{acr:WE} problem significantly increases the runtime for finding the \gls{acr:WE} in the~\gls{acr:CO}-layer. To circumvent long training times, we ignored the perturbation during training as the aggregated traffic flow~$\bar \bfy$ acts similarly to a regularization term. However, the missing perturbation explains the slightly worse performance of the \textit{PL} benchmark compared to the \textit{CL} benchmark.
In general, the \gls{acr:MAE} of the benchmarks is lower than in the other scenarios, which is due to the many zero values in the target traffic equilibrium over the time-expansion. Note that the \textit{ER} benchmark fails to account for the time-expansion, because the benchmark only allows for learning the y-intercept, and embedding time-related information and spatial-related information within a single y-intercept remains difficult.  
Figure \ref{fig:appendix:time} visualizes the prediction of the traffic flow over time. We recall that agents schedule trips from home to work at minute zero, which reflects the morning rush hour, and from work to home after 30 minutes which reflects the evening rush hour. When comparing the target traffic flow values (Figure~\ref{fig:appendix:timeOriginal}) with the~\gls{acr:ML} benchmarks (\textit{FNN}: \ref{fig:appendix:timeSupervised}, \textit{GNN}~\ref{fig:appendix:time}), we see that the~\gls{acr:ML} benchmarks underestimate the traffic flow and focus more on predicting a good mean value. The~\gls{acr:COAML} benchmarks (\textit{CL}: Figure~\ref{fig:appendix:timeCL}, \textit{PL}:~\ref{fig:appendix:timePL}) predict realistic traffic flow patterns in line with the target traffic flow (Figure~\ref{fig:appendix:time}). This insight is consistent with the observations made when predicting traffic flows aggregated over time (cf. Figure~\ref{fig:appendix:flow}).
The \textit{ER} benchmark (Figure~\ref{fig:appendix:timeER}) fails to account for the time component such that the traffic distributes equally over time. This effect might results from the simplified latency function used in the \textit{ER} benchmark that only allows us to learn the y-intercept.

\subsection{Complementary stylized experiments}
\label{sec:appendix:stylizedExperiments}

\begin{figure}[bth]
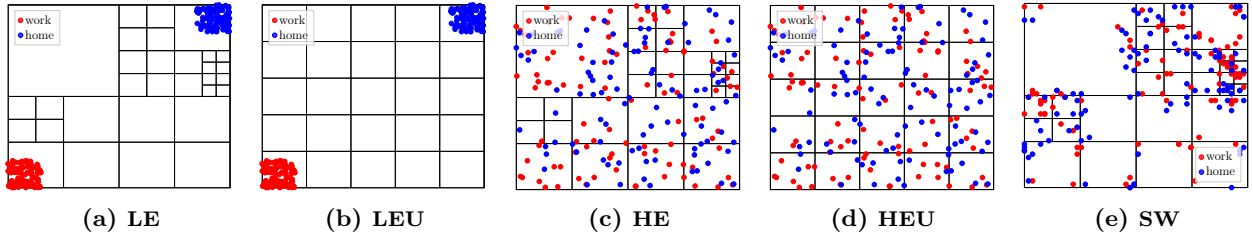

     \centering
     \begin{subfigure}[t]{0.19\textwidth}
         \centering
        \centering
        \resizebox{1\textwidth}{!}{
        \input{./figures/Input_Scenario_lowEntropy.tex}
        }
        \caption{LE}
        \label{fig:appendix:scenarioLowEntropy}
     \end{subfigure}
     \hfill
     \begin{subfigure}[t]{0.19\textwidth}
         \centering
        \centering
        \resizebox{1\textwidth}{!}{
        \input{./figures/Input_Scenario_lowEntropyEqual.tex}
        }
        \caption{LEU}
        \label{fig:appendix:scenarioLowEntropyUniform}
     \end{subfigure}
     \hfill
     \begin{subfigure}[t]{0.19\textwidth}
         \centering
        \centering
        \resizebox{1\textwidth}{!}{
        \input{./figures/Input_Scenario_highEntropy.tex}
        }
        \caption{HE}
        \label{fig:appendix:scenarioHighEntropy}
     \end{subfigure}
     \hfill
     \begin{subfigure}[t]{0.19\textwidth}
         \centering
        \resizebox{1\textwidth}{!}{
        \input{./figures/Input_Scenario_highEntropyEqual.tex}
        }
        \caption{HEU}
        \label{fig:appendix:scenarioHighEntropyUniform}
     \end{subfigure}
     \hfill
     \begin{subfigure}[t]{0.19\textwidth}
         \centering
        \resizebox{1\textwidth}{!}{
        \input{./figures/Input_squareWorlds_short.tex}
        }
        \caption{SW}
        \label{fig:appendix:squareWorld}
     \end{subfigure}
    \caption{Illustration of the environments.}
    \label{fig:appendix:environments}
\end{figure}

In this section, we present benchmark performances for 16 different scenarios. Each scenario is a combination of an \textit{environment} and an underlying \textit{oracle}. While the \textit{environment} details the underlying street network which is generated from the model of \cite{eisenstat2011random} and the population with the respective agent plans, the \textit{oracle} defines how to derive the target traffic equilibrium. In each scenario, we consider 100 agents, and each agent has a home and work location. Each agent travels in the morning from its home location to its work location and in the evening from its work location to its home location. We solve all scenarios in a time-expanded setting with 20 discrete time steps.

We consider the \textit{environments} shown in Figure \ref{fig:appendix:environments}:
\begin{description}    
    \item[Low-entropy (LE):] A randomly generated street network. All home locations are in the upper right corner and all work locations are in the lower left corner. Each instance of this scenario considers a random street network and random home and work locations.
    \item[Low-entropy uniform (LEU):] A grid street network. All home locations are in the upper right corner and all work locations are in the lower left corner. Each instance of this scenario considers random home and work locations.
    \item[High-entropy (HE):]  A randomly generated street network. All home locations and work locations are uniformly distributed. Each instance of this scenario considers a random street network and random home and work locations.
    \item[High-entropy uniform (HEU):] A grid street network. All home locations and work locations are uniformly distributed. Each instance of this scenario considers random home and work locations.
    \item[Square world (SW):] A randomly generated street network. All home locations and work locations are distributed according to the distribution of the roads. Each instance of this scenario considers a random street network and random home and work locations.
\end{description}

We consider the following \textit{oracles}:
\begin{description}
    \item[EasyMCFP:] The oracle solves a \gls{acr:MCFP}. The costs of the \gls{acr:MCFP} equal the street length.
    \item[EasyWE:] The oracle solves a \gls{acr:WE}. The latency function of the \gls{acr:WE} for arc~$a$ is~$d_a + \bar y_a$ with $d_a$ representing the length of arc~$a$ and $\bar y_a$ denoting the aggregated flow on arc~$a$.
    \item[randomMCFP:] The oracle solves a \gls{acr:MCFP}. The costs of the \gls{acr:MCFP} are uniformly distributed $c_a\sim\mathcal{U}(0, 100)$.
    \item[randomWE:] The oracle solves a \gls{acr:WE}. The latency function of the \gls{acr:WE} for arc~$a$ is~$\theta_{1,a} + \theta_{2,a} * \bar y_a$ with $\theta_{1,a}\sim\mathcal{U}(1, 100)$ randomly distributed, $\theta_{2,a}\sim\mathcal{U}(1, 20)$ randomly distributed, and $\bar y_a$ denoting the aggregated flow on arc~$a$.
\end{description}

Figure~\ref{fig:appendix:scenarios} shows the performance of the benchmarks introduced in Appendix~\ref{appendix:benchmarks} over the introduced scenarios. 
In general, the \gls{acr:COAML} benchmarks, namely the \textit{CL} and the \textit{PL} benchmarks, outperform the pure \gls{acr:ML} benchmarks, namely the \textit{FNN} and \textit{GNN} in almost all scenarios. The \textit{ER} benchmark has a mixed performance over the scenarios, which can be attributed to the simplified latency functions that only allows learning the y-intercept.
In the following, we interpret the results depicted in Figure~\ref{fig:appendix:scenarios}. 
We first focus on scenarios considering an \textit{EasyMCFP} oracle to yield target traffic equilibria. In these scenarios, the resulting traffic flow only depends on the shortest paths in the network with respect to arc lengths, but no traffic effects are considered. Intuitively, the \textit{CL} pipeline yields the best results in all scenarios. Here, the \textit{CL} would mimic the oracle when it learns to focus only on the arc length in the statistical model. In general, over all scenarios considering an \textit{EasyMCFP} oracle the differences in accuracy between the benchmarks are rather small. 
In the scenarios that consider an \textit{EasyWE} oracle to yield target traffic equilibria, the differences in accuracy between the different benchmarks are larger. This is intuitive, as the oracle yields target traffic equilibria that depend on actual congestion states. Thus, these target traffic equilibria are difficult to predict with pure \gls{acr:ML} benchmarks when only considering contextual information. In this setting, the \gls{acr:COAML} pipelines yield good results. Especially the \textit{PL} and \textit{CL} benchmarks perform well in all scenarios. While the \textit{PL} benchmark performs particularly well in the \textit{low-entropy uniform scenario} and the \textit{high-entropy uniform scenario}, the \textit{CL} benchmark performs particularly well in the \textit{square-world scenario}. The pure \gls{acr:ML} benchmarks lead to the worst results over all scenarios. 
All the tests on scenarios with the \textit{RandomMCFP} and the \textit{RandomWE} oracle yield bad accuracy values over all benchmarks. This is intuitive as the target traffic equilibria follow a random structure that is completely independent of the underlying context. Thus, the prediction accuracies on these scenarios are worse than the prediction accuracies on the scenarios that are based on the \textit{EasyMCFP} and the \textit{EasyWE} oracle over all tested benchmarks. Although the benchmarks can not use any reasonable contextual information, the \gls{acr:COAML} benchmarks can leverage combinatorial information with respect to the origin and destination location of trips in the network and, therefore, slightly outperform the pure \gls{acr:ML} benchmarks for these scenarios.

\begin{figure}[t]
     \centering
     \begin{subfigure}[t]{0.24\textwidth}
        \centering
        \resizebox{1\textwidth}{!}{
\begin{tikzpicture}

\definecolor{brown1926061}{RGB}{192,60,61}
\definecolor{darkgray176}{RGB}{176,176,176}
\definecolor{darkslategray61}{RGB}{61,61,61}
\definecolor{lightgray204}{RGB}{204,204,204}
\definecolor{mediumpurple147113178}{RGB}{147,113,178}
\definecolor{peru22412844}{RGB}{224,128,44}
\definecolor{seagreen5814558}{RGB}{58,145,58}
\definecolor{steelblue49115161}{RGB}{49,115,161}

\begin{axis}[
legend style={fill opacity=0.8, draw opacity=1, text opacity=1, draw=lightgray204},
log basis y={10},
tick align=outside,
tick pos=left,
x grid style={darkgray176},
xmin=-0.5, xmax=4.5,
xtick style={color=black},
xtick={0,1,2,3,4},
xticklabels={FNN,GNN,CL,PL,ER},
y grid style={darkgray176},
ylabel style={align=center},
ylabel={MAE},
ymin=0.01, ymax=26,
ymode=log,
ytick style={color=black}
]
\path [draw=darkslategray61, fill=steelblue49115161]
(axis cs:-0.4,7.11682871282101)
--(axis cs:0.4,7.11682871282101)
--(axis cs:0.4,7.53861700485454)
--(axis cs:-0.4,7.53861700485454)
--(axis cs:-0.4,7.11682871282101)
--cycle;
\addplot [darkslategray61, forget plot]
table {%
0 7.11682871282101
0 7.07924071848393
};
\addplot [darkslategray61, forget plot]
table {%
0 7.53861700485454
0 7.59060468490307
};
\addplot [darkslategray61, forget plot]
table {%
-0.2 7.07924071848393
0.2 7.07924071848393
};
\addplot [darkslategray61, forget plot]
table {%
-0.2 7.59060468490307
0.2 7.59060468490307
};
\addplot [black, mark=o, mark size=3, mark options={solid,fill opacity=0,draw=darkslategray61}, only marks, forget plot]
table {%
0 5.48310475856408
0 15.0991004193202
};
\path [draw=darkslategray61, fill=peru22412844]
(axis cs:0.6,7.43887958998141)
--(axis cs:1.4,7.43887958998141)
--(axis cs:1.4,8.06940651891323)
--(axis cs:0.6,8.06940651891323)
--(axis cs:0.6,7.43887958998141)
--cycle;
\addplot [darkslategray61, forget plot]
table {%
1 7.43887958998141
1 7.37038241028786
};
\addplot [darkslategray61, forget plot]
table {%
1 8.06940651891323
1 8.13764657775561
};
\addplot [darkslategray61, forget plot]
table {%
0.8 7.37038241028786
1.2 7.37038241028786
};
\addplot [darkslategray61, forget plot]
table {%
0.8 8.13764657775561
1.2 8.13764657775561
};
\addplot [black, mark=o, mark size=3, mark options={solid,fill opacity=0,draw=darkslategray61}, only marks, forget plot]
table {%
1 6.09191165573295
1 15.7815322028473
};
\path [draw=darkslategray61, fill=seagreen5814558]
(axis cs:1.6,5.38755526083112)
--(axis cs:2.4,5.38755526083112)
--(axis cs:2.4,6.65659246575343)
--(axis cs:1.6,6.65659246575343)
--(axis cs:1.6,5.38755526083112)
--cycle;
\addplot [darkslategray61, forget plot]
table {%
2 5.38755526083112
2 4.3375
};
\addplot [darkslategray61, forget plot]
table {%
2 6.65659246575343
2 6.76712328767123
};
\addplot [darkslategray61, forget plot]
table {%
1.8 4.3375
2.2 4.3375
};
\addplot [darkslategray61, forget plot]
table {%
1.8 6.76712328767123
2.2 6.76712328767123
};
\addplot [black, mark=o, mark size=3, mark options={solid,fill opacity=0,draw=darkslategray61}, only marks, forget plot]
table {%
2 15.640625
};
\path [draw=darkslategray61, fill=brown1926061]
(axis cs:2.6,7.35453886353968)
--(axis cs:3.4,7.35453886353968)
--(axis cs:3.4,7.75408013275746)
--(axis cs:2.6,7.75408013275746)
--(axis cs:2.6,7.35453886353968)
--cycle;
\addplot [darkslategray61, forget plot]
table {%
3 7.35453886353968
3 7.33350786905548
};
\addplot [darkslategray61, forget plot]
table {%
3 7.75408013275746
3 7.79528376999636
};
\addplot [darkslategray61, forget plot]
table {%
2.8 7.33350786905548
3.2 7.33350786905548
};
\addplot [darkslategray61, forget plot]
table {%
2.8 7.79528376999636
3.2 7.79528376999636
};
\addplot [black, mark=o, mark size=3, mark options={solid,fill opacity=0,draw=darkslategray61}, only marks, forget plot]
table {%
3 6.0910620710983
3 15.820226442913
};
\path [draw=darkslategray61, fill=mediumpurple147113178]
(axis cs:3.6,7.43979317466558)
--(axis cs:4.4,7.43979317466558)
--(axis cs:4.4,8.12059543213827)
--(axis cs:3.6,8.12059543213827)
--(axis cs:3.6,7.43979317466558)
--cycle;
\addplot [darkslategray61, forget plot]
table {%
4 7.43979317466558
4 7.30251153597921
};
\addplot [darkslategray61, forget plot]
table {%
4 8.12059543213827
4 8.13441590574019
};
\addplot [darkslategray61, forget plot]
table {%
3.8 7.30251153597921
4.2 7.30251153597921
};
\addplot [darkslategray61, forget plot]
table {%
3.8 8.13441590574019
4.2 8.13441590574019
};
\addplot [black, mark=o, mark size=3, mark options={solid,fill opacity=0,draw=darkslategray61}, only marks, forget plot]
table {%
4 5.83992890492027
4 15.7807616943201
};
\addplot [darkslategray61, forget plot]
table {%
-0.4 7.30612333027059
0.4 7.30612333027059
};
\addplot [darkslategray61, forget plot]
table {%
0.6 7.75452873572409
1.4 7.75452873572409
};
\addplot [darkslategray61, forget plot]
table {%
1.6 5.8837643678161
2.4 5.8837643678161
};
\addplot [darkslategray61, forget plot]
table {%
2.6 7.52405053401653
3.4 7.52405053401653
};
\addplot [darkslategray61, forget plot]
table {%
3.6 7.96538605102861
4.4 7.96538605102861
};
\end{axis}

\end{tikzpicture}}
        \caption{LE-easyMCFP}
        \label{fig:appendix:LE/easyMCFP}
    \end{subfigure}
    \hfill
    \begin{subfigure}[t]{0.24\textwidth}
        \centering
        \resizebox{\textwidth}{!}{
\begin{tikzpicture}

\definecolor{brown1926061}{RGB}{192,60,61}
\definecolor{darkgray176}{RGB}{176,176,176}
\definecolor{darkslategray61}{RGB}{61,61,61}
\definecolor{lightgray204}{RGB}{204,204,204}
\definecolor{mediumpurple147113178}{RGB}{147,113,178}
\definecolor{peru22412844}{RGB}{224,128,44}
\definecolor{seagreen5814558}{RGB}{58,145,58}
\definecolor{steelblue49115161}{RGB}{49,115,161}

\begin{axis}[
legend style={fill opacity=0.8, draw opacity=1, text opacity=1, draw=lightgray204},
log basis y={10},
tick align=outside,
tick pos=left,
x grid style={darkgray176},
xmin=-0.5, xmax=4.5,
xtick style={color=black},
xtick={0,1,2,3,4},
xticklabels={FNN,GNN,CL,PL,ER},
y grid style={darkgray176},
ylabel style={align=center},
ylabel={MAE},
ymin=0.01, ymax=26,
ymode=log,
ytick style={color=black}
]
\path [draw=darkslategray61, fill=steelblue49115161]
(axis cs:-0.4,3.88233433857908)
--(axis cs:0.4,3.88233433857908)
--(axis cs:0.4,4.64443674221889)
--(axis cs:-0.4,4.64443674221889)
--(axis cs:-0.4,3.88233433857908)
--cycle;
\addplot [darkslategray61, forget plot]
table {%
0 3.88233433857908
0 3.06079220381798
};
\addplot [darkslategray61, forget plot]
table {%
0 4.64443674221889
0 4.81364744142383
};
\addplot [darkslategray61, forget plot]
table {%
-0.2 3.06079220381798
0.2 3.06079220381798
};
\addplot [darkslategray61, forget plot]
table {%
-0.2 4.81364744142383
0.2 4.81364744142383
};
\path [draw=darkslategray61, fill=peru22412844]
(axis cs:0.6,4.42255603945375)
--(axis cs:1.4,4.42255603945375)
--(axis cs:1.4,5.75636149830142)
--(axis cs:0.6,5.75636149830142)
--(axis cs:0.6,4.42255603945375)
--cycle;
\addplot [darkslategray61, forget plot]
table {%
1 4.42255603945375
1 4.2099908007308
};
\addplot [darkslategray61, forget plot]
table {%
1 5.75636149830142
1 6.38413217221228
};
\addplot [darkslategray61, forget plot]
table {%
0.8 4.2099908007308
1.2 4.2099908007308
};
\addplot [darkslategray61, forget plot]
table {%
0.8 6.38413217221228
1.2 6.38413217221228
};
\addplot [black, mark=o, mark size=3, mark options={solid,fill opacity=0,draw=darkslategray61}, only marks, forget plot]
table {%
1 2.39123265808662
};
\path [draw=darkslategray61, fill=seagreen5814558]
(axis cs:1.6,0.880770728308566)
--(axis cs:2.4,0.880770728308566)
--(axis cs:2.4,1.18491687362901)
--(axis cs:1.6,1.18491687362901)
--(axis cs:1.6,0.880770728308566)
--cycle;
\addplot [darkslategray61, forget plot]
table {%
2 0.880770728308566
2 0.779526387735298
};
\addplot [darkslategray61, forget plot]
table {%
2 1.18491687362901
2 1.27530904126614
};
\addplot [darkslategray61, forget plot]
table {%
1.8 0.779526387735298
2.2 0.779526387735298
};
\addplot [darkslategray61, forget plot]
table {%
1.8 1.27530904126614
2.2 1.27530904126614
};
\path [draw=darkslategray61, fill=brown1926061]
(axis cs:2.6,0.960744349910603)
--(axis cs:3.4,0.960744349910603)
--(axis cs:3.4,1.37494181742253)
--(axis cs:2.6,1.37494181742253)
--(axis cs:2.6,0.960744349910603)
--cycle;
\addplot [darkslategray61, forget plot]
table {%
3 0.960744349910603
3 0.847115969271841
};
\addplot [darkslategray61, forget plot]
table {%
3 1.37494181742253
3 1.5581565272775
};
\addplot [darkslategray61, forget plot]
table {%
2.8 0.847115969271841
3.2 0.847115969271841
};
\addplot [darkslategray61, forget plot]
table {%
2.8 1.5581565272775
3.2 1.5581565272775
};
\path [draw=darkslategray61, fill=mediumpurple147113178]
(axis cs:3.6,1.5936241541159)
--(axis cs:4.4,1.5936241541159)
--(axis cs:4.4,2.1270800470881)
--(axis cs:3.6,2.1270800470881)
--(axis cs:3.6,1.5936241541159)
--cycle;
\addplot [darkslategray61, forget plot]
table {%
4 1.5936241541159
4 1.43852460711505
};
\addplot [darkslategray61, forget plot]
table {%
4 2.1270800470881
4 2.22786630874731
};
\addplot [darkslategray61, forget plot]
table {%
3.8 1.43852460711505
4.2 1.43852460711505
};
\addplot [darkslategray61, forget plot]
table {%
3.8 2.22786630874731
4.2 2.22786630874731
};
\addplot [darkslategray61, forget plot]
table {%
-0.4 4.37157034726663
0.4 4.37157034726663
};
\addplot [darkslategray61, forget plot]
table {%
0.6 5.38671696798738
1.4 5.38671696798738
};
\addplot [darkslategray61, forget plot]
table {%
1.6 0.983533780183224
2.4 0.983533780183224
};
\addplot [darkslategray61, forget plot]
table {%
2.6 1.0839779853039
3.4 1.0839779853039
};
\addplot [darkslategray61, forget plot]
table {%
3.6 1.83957807453704
4.4 1.83957807453704
};
\end{axis}

\end{tikzpicture}}
        \caption{LE-easyWE}
        \label{fig:appendix:LE/easyWE}
    \end{subfigure}
     \hfill
     \begin{subfigure}[t]{0.24\textwidth}
        \centering
        \resizebox{1\textwidth}{!}{
\begin{tikzpicture}

\definecolor{brown1926061}{RGB}{192,60,61}
\definecolor{darkgray176}{RGB}{176,176,176}
\definecolor{darkslategray61}{RGB}{61,61,61}
\definecolor{lightgray204}{RGB}{204,204,204}
\definecolor{mediumpurple147113178}{RGB}{147,113,178}
\definecolor{peru22412844}{RGB}{224,128,44}
\definecolor{seagreen5814558}{RGB}{58,145,58}
\definecolor{steelblue49115161}{RGB}{49,115,161}

\begin{axis}[
legend style={fill opacity=0.8, draw opacity=1, text opacity=1, draw=lightgray204},
log basis y={10},
tick align=outside,
tick pos=left,
x grid style={darkgray176},
xmin=-0.5, xmax=4.5,
xtick style={color=black},
xtick={0,1,2,3,4},
xticklabels={FNN,GNN,CL,PL,ER},
y grid style={darkgray176},
ylabel style={align=center},
ylabel={MAE},
ymin=0.01, ymax=26,
ymode=log,
ytick style={color=black}
]
\path [draw=darkslategray61, fill=steelblue49115161]
(axis cs:-0.4,11.6346301645972)
--(axis cs:0.4,11.6346301645972)
--(axis cs:0.4,14.602411363675)
--(axis cs:-0.4,14.602411363675)
--(axis cs:-0.4,11.6346301645972)
--cycle;
\addplot [darkslategray61, forget plot]
table {%
0 11.6346301645972
0 10.1823355500725
};
\addplot [darkslategray61, forget plot]
table {%
0 14.602411363675
0 14.7496944780219
};
\addplot [darkslategray61, forget plot]
table {%
-0.2 10.1823355500725
0.2 10.1823355500725
};
\addplot [darkslategray61, forget plot]
table {%
-0.2 14.7496944780219
0.2 14.7496944780219
};
\path [draw=darkslategray61, fill=peru22412844]
(axis cs:0.6,12.3546334918588)
--(axis cs:1.4,12.3546334918588)
--(axis cs:1.4,14.1915310710669)
--(axis cs:0.6,14.1915310710669)
--(axis cs:0.6,12.3546334918588)
--cycle;
\addplot [darkslategray61, forget plot]
table {%
1 12.3546334918588
1 11.7218335373648
};
\addplot [darkslategray61, forget plot]
table {%
1 14.1915310710669
1 14.2576300493658
};
\addplot [darkslategray61, forget plot]
table {%
0.8 11.7218335373648
1.2 11.7218335373648
};
\addplot [darkslategray61, forget plot]
table {%
0.8 14.2576300493658
1.2 14.2576300493658
};
\path [draw=darkslategray61, fill=seagreen5814558]
(axis cs:1.6,8.79302884615385)
--(axis cs:2.4,8.79302884615385)
--(axis cs:2.4,13.1878424657534)
--(axis cs:1.6,13.1878424657534)
--(axis cs:1.6,8.79302884615385)
--cycle;
\addplot [darkslategray61, forget plot]
table {%
2 8.79302884615385
2 5.91954022988506
};
\addplot [darkslategray61, forget plot]
table {%
2 13.1878424657534
2 15.625
};
\addplot [darkslategray61, forget plot]
table {%
1.8 5.91954022988506
2.2 5.91954022988506
};
\addplot [darkslategray61, forget plot]
table {%
1.8 15.625
2.2 15.625
};
\path [draw=darkslategray61, fill=brown1926061]
(axis cs:2.6,10.1110518710927)
--(axis cs:3.4,10.1110518710927)
--(axis cs:3.4,15.0860952823731)
--(axis cs:2.6,15.0860952823731)
--(axis cs:2.6,10.1110518710927)
--cycle;
\addplot [darkslategray61, forget plot]
table {%
3 10.1110518710927
3 6.40229886845714
};
\addplot [darkslategray61, forget plot]
table {%
3 15.0860952823731
3 21.8031438842297
};
\addplot [darkslategray61, forget plot]
table {%
2.8 6.40229886845714
3.2 6.40229886845714
};
\addplot [darkslategray61, forget plot]
table {%
2.8 21.8031438842297
3.2 21.8031438842297
};
\path [draw=darkslategray61, fill=mediumpurple147113178]
(axis cs:3.6,13.7688902919157)
--(axis cs:4.4,13.7688902919157)
--(axis cs:4.4,16.2665160709275)
--(axis cs:3.6,16.2665160709275)
--(axis cs:3.6,13.7688902919157)
--cycle;
\addplot [darkslategray61, forget plot]
table {%
4 13.7688902919157
4 11.9791903073627
};
\addplot [darkslategray61, forget plot]
table {%
4 16.2665160709275
4 17.2025320492631
};
\addplot [darkslategray61, forget plot]
table {%
3.8 11.9791903073627
4.2 11.9791903073627
};
\addplot [darkslategray61, forget plot]
table {%
3.8 17.2025320492631
4.2 17.2025320492631
};
\addplot [darkslategray61, forget plot]
table {%
-0.4 13.7419331500307
0.4 13.7419331500307
};
\addplot [darkslategray61, forget plot]
table {%
0.6 13.3974293127656
1.4 13.3974293127656
};
\addplot [darkslategray61, forget plot]
table {%
1.6 10.4673076923077
2.4 10.4673076923077
};
\addplot [darkslategray61, forget plot]
table {%
2.6 12.72987122952
3.4 12.72987122952
};
\addplot [darkslategray61, forget plot]
table {%
3.6 14.8458906620165
4.4 14.8458906620165
};
\end{axis}

\end{tikzpicture}}
        \caption{LE-randomMCFP}
        \label{fig:appendix:LE/randomMCFP}
    \end{subfigure}
    \hfill
    \begin{subfigure}[t]{0.24\textwidth}
        \centering
        \resizebox{1\textwidth}{!}{
\begin{tikzpicture}

\definecolor{brown1926061}{RGB}{192,60,61}
\definecolor{darkgray176}{RGB}{176,176,176}
\definecolor{darkslategray61}{RGB}{61,61,61}
\definecolor{lightgray204}{RGB}{204,204,204}
\definecolor{mediumpurple147113178}{RGB}{147,113,178}
\definecolor{peru22412844}{RGB}{224,128,44}
\definecolor{seagreen5814558}{RGB}{58,145,58}
\definecolor{steelblue49115161}{RGB}{49,115,161}

\begin{axis}[
legend style={fill opacity=0.8, draw opacity=1, text opacity=1, draw=lightgray204},
log basis y={10},
tick align=outside,
tick pos=left,
x grid style={darkgray176},
xmin=-0.5, xmax=4.5,
xtick style={color=black},
xtick={0,1,2,3,4},
xticklabels={FNN,GNN,CL,PL,ER},
y grid style={darkgray176},
ylabel style={align=center},
ylabel={MAE},
ymin=0.01, ymax=26,
ymode=log,
ytick style={color=black}
]
\path [draw=darkslategray61, fill=steelblue49115161]
(axis cs:-0.4,7.58440300033634)
--(axis cs:0.4,7.58440300033634)
--(axis cs:0.4,8.76470523695374)
--(axis cs:-0.4,8.76470523695374)
--(axis cs:-0.4,7.58440300033634)
--cycle;
\addplot [darkslategray61, forget plot]
table {%
0 7.58440300033634
0 6.44083343020987
};
\addplot [darkslategray61, forget plot]
table {%
0 8.76470523695374
0 10.0551055814708
};
\addplot [darkslategray61, forget plot]
table {%
-0.2 6.44083343020987
0.2 6.44083343020987
};
\addplot [darkslategray61, forget plot]
table {%
-0.2 10.0551055814708
0.2 10.0551055814708
};
\path [draw=darkslategray61, fill=peru22412844]
(axis cs:0.6,6.96844281290063)
--(axis cs:1.4,6.96844281290063)
--(axis cs:1.4,8.24863064324246)
--(axis cs:0.6,8.24863064324246)
--(axis cs:0.6,6.96844281290063)
--cycle;
\addplot [darkslategray61, forget plot]
table {%
1 6.96844281290063
1 6.08836928939753
};
\addplot [darkslategray61, forget plot]
table {%
1 8.24863064324246
1 8.38574896317497
};
\addplot [darkslategray61, forget plot]
table {%
0.8 6.08836928939753
1.2 6.08836928939753
};
\addplot [darkslategray61, forget plot]
table {%
0.8 8.38574896317497
1.2 8.38574896317497
};
\path [draw=darkslategray61, fill=seagreen5814558]
(axis cs:1.6,3.97668485588796)
--(axis cs:2.4,3.97668485588796)
--(axis cs:2.4,6.03606869779997)
--(axis cs:1.6,6.03606869779997)
--(axis cs:1.6,3.97668485588796)
--cycle;
\addplot [darkslategray61, forget plot]
table {%
2 3.97668485588796
2 3.70700577324175
};
\addplot [darkslategray61, forget plot]
table {%
2 6.03606869779997
2 6.53913639947183
};
\addplot [darkslategray61, forget plot]
table {%
1.8 3.70700577324175
2.2 3.70700577324175
};
\addplot [darkslategray61, forget plot]
table {%
1.8 6.53913639947183
2.2 6.53913639947183
};
\path [draw=darkslategray61, fill=brown1926061]
(axis cs:2.6,4.39165255432823)
--(axis cs:3.4,4.39165255432823)
--(axis cs:3.4,6.03910461068444)
--(axis cs:2.6,6.03910461068444)
--(axis cs:2.6,4.39165255432823)
--cycle;
\addplot [darkslategray61, forget plot]
table {%
3 4.39165255432823
3 3.48841021788972
};
\addplot [darkslategray61, forget plot]
table {%
3 6.03910461068444
3 6.22300118507588
};
\addplot [darkslategray61, forget plot]
table {%
2.8 3.48841021788972
3.2 3.48841021788972
};
\addplot [darkslategray61, forget plot]
table {%
2.8 6.22300118507588
3.2 6.22300118507588
};
\path [draw=darkslategray61, fill=mediumpurple147113178]
(axis cs:3.6,7.51682806196362)
--(axis cs:4.4,7.51682806196362)
--(axis cs:4.4,8.82752832186791)
--(axis cs:3.6,8.82752832186791)
--(axis cs:3.6,7.51682806196362)
--cycle;
\addplot [darkslategray61, forget plot]
table {%
4 7.51682806196362
4 6.74956072107359
};
\addplot [darkslategray61, forget plot]
table {%
4 8.82752832186791
4 10.6436746277848
};
\addplot [darkslategray61, forget plot]
table {%
3.8 6.74956072107359
4.2 6.74956072107359
};
\addplot [darkslategray61, forget plot]
table {%
3.8 10.6436746277848
4.2 10.6436746277848
};
\addplot [darkslategray61, forget plot]
table {%
-0.4 8.36526674966067
0.4 8.36526674966067
};
\addplot [darkslategray61, forget plot]
table {%
0.6 7.88190745308941
1.4 7.88190745308941
};
\addplot [darkslategray61, forget plot]
table {%
1.6 5.04836687919842
2.4 5.04836687919842
};
\addplot [darkslategray61, forget plot]
table {%
2.6 5.18962835657891
3.4 5.18962835657891
};
\addplot [darkslategray61, forget plot]
table {%
3.6 8.0186500471341
4.4 8.0186500471341
};
\end{axis}

\end{tikzpicture}}
        \caption{LE-randomWE}
        \label{fig:appendix:LE/randomWE}
    \end{subfigure}
    \hfill
    \begin{subfigure}[t]{0.24\textwidth}
        \centering
        \resizebox{1\textwidth}{!}{
\begin{tikzpicture}

\definecolor{brown1926061}{RGB}{192,60,61}
\definecolor{darkgray176}{RGB}{176,176,176}
\definecolor{darkslategray61}{RGB}{61,61,61}
\definecolor{lightgray204}{RGB}{204,204,204}
\definecolor{mediumpurple147113178}{RGB}{147,113,178}
\definecolor{peru22412844}{RGB}{224,128,44}
\definecolor{seagreen5814558}{RGB}{58,145,58}
\definecolor{steelblue49115161}{RGB}{49,115,161}

\begin{axis}[
legend style={fill opacity=0.8, draw opacity=1, text opacity=1, draw=lightgray204},
log basis y={10},
tick align=outside,
tick pos=left,
x grid style={darkgray176},
xmin=-0.5, xmax=4.5,
xtick style={color=black},
xtick={0,1,2,3,4},
xticklabels={FNN,GNN,CL,PL,ER},
y grid style={darkgray176},
ylabel style={align=center},
ylabel={MAE},
ymin=0.01, ymax=26,
ymode=log,
ytick style={color=black}
]
\path [draw=darkslategray61, fill=steelblue49115161]
(axis cs:-0.4,3.55557504892349)
--(axis cs:0.4,3.55557504892349)
--(axis cs:0.4,3.74050169785817)
--(axis cs:-0.4,3.74050169785817)
--(axis cs:-0.4,3.55557504892349)
--cycle;
\addplot [darkslategray61, forget plot]
table {%
0 3.55557504892349
0 3.53315394719442
};
\addplot [darkslategray61, forget plot]
table {%
0 3.74050169785817
0 3.99596300919851
};
\addplot [darkslategray61, forget plot]
table {%
-0.2 3.53315394719442
0.2 3.53315394719442
};
\addplot [darkslategray61, forget plot]
table {%
-0.2 3.99596300919851
0.2 3.99596300919851
};
\addplot [black, mark=o, mark size=3, mark options={solid,fill opacity=0,draw=darkslategray61}, only marks, forget plot]
table {%
0 3.13378878037135
};
\path [draw=darkslategray61, fill=peru22412844]
(axis cs:0.6,4.58966845870018)
--(axis cs:1.4,4.58966845870018)
--(axis cs:1.4,5.09352364738782)
--(axis cs:0.6,5.09352364738782)
--(axis cs:0.6,4.58966845870018)
--cycle;
\addplot [darkslategray61, forget plot]
table {%
1 4.58966845870018
1 4.22881770531336
};
\addplot [darkslategray61, forget plot]
table {%
1 5.09352364738782
1 5.27147254149119
};
\addplot [darkslategray61, forget plot]
table {%
0.8 4.22881770531336
1.2 4.22881770531336
};
\addplot [darkslategray61, forget plot]
table {%
0.8 5.27147254149119
1.2 5.27147254149119
};
\path [draw=darkslategray61, fill=seagreen5814558]
(axis cs:1.6,3.2125)
--(axis cs:2.4,3.2125)
--(axis cs:2.4,3.8)
--(axis cs:1.6,3.8)
--(axis cs:1.6,3.2125)
--cycle;
\addplot [darkslategray61, forget plot]
table {%
2 3.2125
2 3.11666666666667
};
\addplot [darkslategray61, forget plot]
table {%
2 3.8
2 3.96666666666667
};
\addplot [darkslategray61, forget plot]
table {%
1.8 3.11666666666667
2.2 3.11666666666667
};
\addplot [darkslategray61, forget plot]
table {%
1.8 3.96666666666667
2.2 3.96666666666667
};
\path [draw=darkslategray61, fill=brown1926061]
(axis cs:2.6,3.16225876847924)
--(axis cs:3.4,3.16225876847924)
--(axis cs:3.4,3.44688187329909)
--(axis cs:2.6,3.44688187329909)
--(axis cs:2.6,3.16225876847924)
--cycle;
\addplot [darkslategray61, forget plot]
table {%
3 3.16225876847924
3 3.03902773362065
};
\addplot [darkslategray61, forget plot]
table {%
3 3.44688187329909
3 3.52115030196547
};
\addplot [darkslategray61, forget plot]
table {%
2.8 3.03902773362065
3.2 3.03902773362065
};
\addplot [darkslategray61, forget plot]
table {%
2.8 3.52115030196547
3.2 3.52115030196547
};
\addplot [black, mark=o, mark size=3, mark options={solid,fill opacity=0,draw=darkslategray61}, only marks, forget plot]
table {%
3 4.05540364596487
};
\path [draw=darkslategray61, fill=mediumpurple147113178]
(axis cs:3.6,3.56370852304721)
--(axis cs:4.4,3.56370852304721)
--(axis cs:4.4,3.9438734586045)
--(axis cs:3.6,3.9438734586045)
--(axis cs:3.6,3.56370852304721)
--cycle;
\addplot [darkslategray61, forget plot]
table {%
4 3.56370852304721
4 3.37137228566078
};
\addplot [darkslategray61, forget plot]
table {%
4 3.9438734586045
4 3.99821537856721
};
\addplot [darkslategray61, forget plot]
table {%
3.8 3.37137228566078
4.2 3.37137228566078
};
\addplot [darkslategray61, forget plot]
table {%
3.8 3.99821537856721
4.2 3.99821537856721
};
\addplot [black, mark=o, mark size=3, mark options={solid,fill opacity=0,draw=darkslategray61}, only marks, forget plot]
table {%
4 4.71021174341126
};
\addplot [darkslategray61, forget plot]
table {%
-0.4 3.63099762598674
0.4 3.63099762598674
};
\addplot [darkslategray61, forget plot]
table {%
0.6 4.89234977761904
1.4 4.89234977761904
};
\addplot [darkslategray61, forget plot]
table {%
1.6 3.45
2.4 3.45
};
\addplot [darkslategray61, forget plot]
table {%
2.6 3.21997497508076
3.4 3.21997497508076
};
\addplot [darkslategray61, forget plot]
table {%
3.6 3.73973027596478
4.4 3.73973027596478
};
\end{axis}

\end{tikzpicture}}
        \caption{LEU-EasyMCFP}
        \label{fig:appendix:LEU/EasyMCFP}
    \end{subfigure}
     \hfill
    \begin{subfigure}[t]{0.24\textwidth}
        \centering
        \resizebox{\textwidth}{!}{
\begin{tikzpicture}

\definecolor{brown1926061}{RGB}{192,60,61}
\definecolor{darkgray176}{RGB}{176,176,176}
\definecolor{darkslategray61}{RGB}{61,61,61}
\definecolor{lightgray204}{RGB}{204,204,204}
\definecolor{mediumpurple147113178}{RGB}{147,113,178}
\definecolor{peru22412844}{RGB}{224,128,44}
\definecolor{seagreen5814558}{RGB}{58,145,58}
\definecolor{steelblue49115161}{RGB}{49,115,161}

\begin{axis}[
legend style={fill opacity=0.8, draw opacity=1, text opacity=1, draw=lightgray204},
log basis y={10},
tick align=outside,
tick pos=left,
x grid style={darkgray176},
xmin=-0.5, xmax=4.5,
xtick style={color=black},
xtick={0,1,2,3,4},
xticklabels={FNN,GNN,CL,PL,ER},
y grid style={darkgray176},
ylabel style={align=center},
ylabel={MAE},
ymin=0.01, ymax=26,
ymode=log,
ytick style={color=black}
]
\path [draw=darkslategray61, fill=steelblue49115161]
(axis cs:-0.4,0.661898499727249)
--(axis cs:0.4,0.661898499727249)
--(axis cs:0.4,0.833925052483876)
--(axis cs:-0.4,0.833925052483876)
--(axis cs:-0.4,0.661898499727249)
--cycle;
\addplot [darkslategray61, forget plot]
table {%
0 0.661898499727249
0 0.62682797908783
};
\addplot [darkslategray61, forget plot]
table {%
0 0.833925052483876
0 0.841383794943492
};
\addplot [darkslategray61, forget plot]
table {%
-0.2 0.62682797908783
0.2 0.62682797908783
};
\addplot [darkslategray61, forget plot]
table {%
-0.2 0.841383794943492
0.2 0.841383794943492
};
\addplot [black, mark=o, mark size=3, mark options={solid,fill opacity=0,draw=darkslategray61}, only marks, forget plot]
table {%
0 1.18237251440684
};
\path [draw=darkslategray61, fill=peru22412844]
(axis cs:0.6,1.41373012891155)
--(axis cs:1.4,1.41373012891155)
--(axis cs:1.4,1.51443649794521)
--(axis cs:0.6,1.51443649794521)
--(axis cs:0.6,1.41373012891155)
--cycle;
\addplot [darkslategray61, forget plot]
table {%
1 1.41373012891155
1 1.39202522196589
};
\addplot [darkslategray61, forget plot]
table {%
1 1.51443649794521
1 1.52530898575735
};
\addplot [darkslategray61, forget plot]
table {%
0.8 1.39202522196589
1.2 1.39202522196589
};
\addplot [darkslategray61, forget plot]
table {%
0.8 1.52530898575735
1.2 1.52530898575735
};
\addplot [black, mark=o, mark size=3, mark options={solid,fill opacity=0,draw=darkslategray61}, only marks, forget plot]
table {%
1 1.70649715415864
};
\path [draw=darkslategray61, fill=seagreen5814558]
(axis cs:1.6,0.516097970803579)
--(axis cs:2.4,0.516097970803579)
--(axis cs:2.4,0.600274447600047)
--(axis cs:1.6,0.600274447600047)
--(axis cs:1.6,0.516097970803579)
--cycle;
\addplot [darkslategray61, forget plot]
table {%
2 0.516097970803579
2 0.487164115905908
};
\addplot [darkslategray61, forget plot]
table {%
2 0.600274447600047
2 0.618106913566589
};
\addplot [darkslategray61, forget plot]
table {%
1.8 0.487164115905908
2.2 0.487164115905908
};
\addplot [darkslategray61, forget plot]
table {%
1.8 0.618106913566589
2.2 0.618106913566589
};
\addplot [black, mark=o, mark size=3, mark options={solid,fill opacity=0,draw=darkslategray61}, only marks, forget plot]
table {%
2 0.787458729743958
};
\path [draw=darkslategray61, fill=brown1926061]
(axis cs:2.6,0.0214226528718057)
--(axis cs:3.4,0.0214226528718057)
--(axis cs:3.4,0.0308963090819991)
--(axis cs:2.6,0.0308963090819991)
--(axis cs:2.6,0.0214226528718057)
--cycle;
\addplot [darkslategray61, forget plot]
table {%
3 0.0214226528718057
3 0.01368096817548
};
\addplot [darkslategray61, forget plot]
table {%
3 0.0308963090819991
3 0.0320933343422697
};
\addplot [darkslategray61, forget plot]
table {%
2.8 0.01368096817548
3.2 0.01368096817548
};
\addplot [darkslategray61, forget plot]
table {%
2.8 0.0320933343422697
3.2 0.0320933343422697
};
\addplot [black, mark=o, mark size=3, mark options={solid,fill opacity=0,draw=darkslategray61}, only marks, forget plot]
table {%
3 0.0474940338781865
};
\path [draw=darkslategray61, fill=mediumpurple147113178]
(axis cs:3.6,0.437786072508946)
--(axis cs:4.4,0.437786072508946)
--(axis cs:4.4,0.440975106888447)
--(axis cs:3.6,0.440975106888447)
--(axis cs:3.6,0.437786072508946)
--cycle;
\addplot [darkslategray61, forget plot]
table {%
4 0.437786072508946
4 0.436749049076218
};
\addplot [darkslategray61, forget plot]
table {%
4 0.440975106888447
4 0.444938577918734
};
\addplot [darkslategray61, forget plot]
table {%
3.8 0.436749049076218
4.2 0.436749049076218
};
\addplot [darkslategray61, forget plot]
table {%
3.8 0.444938577918734
4.2 0.444938577918734
};
\addplot [darkslategray61, forget plot]
table {%
-0.4 0.765281651417414
0.4 0.765281651417414
};
\addplot [darkslategray61, forget plot]
table {%
0.6 1.45869794086653
1.4 1.45869794086653
};
\addplot [darkslategray61, forget plot]
table {%
1.6 0.539701354503634
2.4 0.539701354503634
};
\addplot [darkslategray61, forget plot]
table {%
2.6 0.0266460748316645
3.4 0.0266460748316645
};
\addplot [darkslategray61, forget plot]
table {%
3.6 0.439755596265165
4.4 0.439755596265165
};
\end{axis}

\end{tikzpicture}}
        \caption{LEU-EasyWE}
        \label{fig:appendix:LEU/EasyWE}
    \end{subfigure}
     \hfill
     \begin{subfigure}[t]{0.24\textwidth}
        \centering
        \resizebox{1\textwidth}{!}{
\begin{tikzpicture}

\definecolor{brown1926061}{RGB}{192,60,61}
\definecolor{darkgray176}{RGB}{176,176,176}
\definecolor{darkslategray61}{RGB}{61,61,61}
\definecolor{lightgray204}{RGB}{204,204,204}
\definecolor{mediumpurple147113178}{RGB}{147,113,178}
\definecolor{peru22412844}{RGB}{224,128,44}
\definecolor{seagreen5814558}{RGB}{58,145,58}
\definecolor{steelblue49115161}{RGB}{49,115,161}

\begin{axis}[
legend style={fill opacity=0.8, draw opacity=1, text opacity=1, draw=lightgray204},
log basis y={10},
tick align=outside,
tick pos=left,
x grid style={darkgray176},
xmin=-0.5, xmax=4.5,
xtick style={color=black},
xtick={0,1,2,3,4},
xticklabels={FNN,GNN,CL,PL,ER},
y grid style={darkgray176},
ylabel style={align=center},
ylabel={MAE},
ymin=0.01, ymax=26,
ymode=log,
ytick style={color=black}
]
\path [draw=darkslategray61, fill=steelblue49115161]
(axis cs:-0.4,14.5234336525202)
--(axis cs:0.4,14.5234336525202)
--(axis cs:0.4,15.1642570108175)
--(axis cs:-0.4,15.1642570108175)
--(axis cs:-0.4,14.5234336525202)
--cycle;
\addplot [darkslategray61, forget plot]
table {%
0 14.5234336525202
0 14.5077419996262
};
\addplot [darkslategray61, forget plot]
table {%
0 15.1642570108175
0 15.7233978867531
};
\addplot [darkslategray61, forget plot]
table {%
-0.2 14.5077419996262
0.2 14.5077419996262
};
\addplot [darkslategray61, forget plot]
table {%
-0.2 15.7233978867531
0.2 15.7233978867531
};
\path [draw=darkslategray61, fill=peru22412844]
(axis cs:0.6,14.0573409637591)
--(axis cs:1.4,14.0573409637591)
--(axis cs:1.4,15.5009004376557)
--(axis cs:0.6,15.5009004376557)
--(axis cs:0.6,14.0573409637591)
--cycle;
\addplot [darkslategray61, forget plot]
table {%
1 14.0573409637591
1 13.67371357282
};
\addplot [darkslategray61, forget plot]
table {%
1 15.5009004376557
1 16.1053670608516
};
\addplot [darkslategray61, forget plot]
table {%
0.8 13.67371357282
1.2 13.67371357282
};
\addplot [darkslategray61, forget plot]
table {%
0.8 16.1053670608516
1.2 16.1053670608516
};
\path [draw=darkslategray61, fill=seagreen5814558]
(axis cs:1.6,10.7541666666667)
--(axis cs:2.4,10.7541666666667)
--(axis cs:2.4,20.2125)
--(axis cs:1.6,20.2125)
--(axis cs:1.6,10.7541666666667)
--cycle;
\addplot [darkslategray61, forget plot]
table {%
2 10.7541666666667
2 7.05
};
\addplot [darkslategray61, forget plot]
table {%
2 20.2125
2 20.3666666666667
};
\addplot [darkslategray61, forget plot]
table {%
1.8 7.05
2.2 7.05
};
\addplot [darkslategray61, forget plot]
table {%
1.8 20.3666666666667
2.2 20.3666666666667
};
\path [draw=darkslategray61, fill=brown1926061]
(axis cs:2.6,11.7492566729014)
--(axis cs:3.4,11.7492566729014)
--(axis cs:3.4,15.4596024855516)
--(axis cs:2.6,15.4596024855516)
--(axis cs:2.6,11.7492566729014)
--cycle;
\addplot [darkslategray61, forget plot]
table {%
3 11.7492566729014
3 6.90000069875141
};
\addplot [darkslategray61, forget plot]
table {%
3 15.4596024855516
3 16.2734924033951
};
\addplot [darkslategray61, forget plot]
table {%
2.8 6.90000069875141
3.2 6.90000069875141
};
\addplot [darkslategray61, forget plot]
table {%
2.8 16.2734924033951
3.2 16.2734924033951
};
\path [draw=darkslategray61, fill=mediumpurple147113178]
(axis cs:3.6,22.1222990006492)
--(axis cs:4.4,22.1222990006492)
--(axis cs:4.4,22.3004547163398)
--(axis cs:3.6,22.3004547163398)
--(axis cs:3.6,22.1222990006492)
--cycle;
\addplot [darkslategray61, forget plot]
table {%
4 22.1222990006492
4 22.04858162159
};
\addplot [darkslategray61, forget plot]
table {%
4 22.3004547163398
4 22.3142138718427
};
\addplot [darkslategray61, forget plot]
table {%
3.8 22.04858162159
4.2 22.04858162159
};
\addplot [darkslategray61, forget plot]
table {%
3.8 22.3142138718427
4.2 22.3142138718427
};
\addplot [darkslategray61, forget plot]
table {%
-0.4 14.7569187521935
0.4 14.7569187521935
};
\addplot [darkslategray61, forget plot]
table {%
0.6 14.893796282704
1.4 14.893796282704
};
\addplot [darkslategray61, forget plot]
table {%
1.6 19.2833333333333
2.4 19.2833333333333
};
\addplot [darkslategray61, forget plot]
table {%
2.6 14.014227445202
3.4 14.014227445202
};
\addplot [darkslategray61, forget plot]
table {%
3.6 22.2297160911926
4.4 22.2297160911926
};
\end{axis}

\end{tikzpicture}}
        \caption{LEU-randomMCFP}
        \label{fig:appendix:LEU/randomMCFP}
    \end{subfigure}
    \hfill
    \begin{subfigure}[t]{0.24\textwidth}
        \centering
        \resizebox{1\textwidth}{!}{
\begin{tikzpicture}

\definecolor{brown1926061}{RGB}{192,60,61}
\definecolor{darkgray176}{RGB}{176,176,176}
\definecolor{darkslategray61}{RGB}{61,61,61}
\definecolor{lightgray204}{RGB}{204,204,204}
\definecolor{mediumpurple147113178}{RGB}{147,113,178}
\definecolor{peru22412844}{RGB}{224,128,44}
\definecolor{seagreen5814558}{RGB}{58,145,58}
\definecolor{steelblue49115161}{RGB}{49,115,161}

\begin{axis}[
legend style={fill opacity=0.8, draw opacity=1, text opacity=1, draw=lightgray204},
log basis y={10},
tick align=outside,
tick pos=left,
x grid style={darkgray176},
xmin=-0.5, xmax=4.5,
xtick style={color=black},
xtick={0,1,2,3,4},
xticklabels={FNN,GNN,CL,PL,ER},
y grid style={darkgray176},
ylabel style={align=center},
ylabel={MAE},
ymin=0.01, ymax=26,
ymode=log,
ytick style={color=black}
]
\path [draw=darkslategray61, fill=steelblue49115161]
(axis cs:-0.4,11.8173321405193)
--(axis cs:0.4,11.8173321405193)
--(axis cs:0.4,12.1863872595366)
--(axis cs:-0.4,12.1863872595366)
--(axis cs:-0.4,11.8173321405193)
--cycle;
\addplot [darkslategray61, forget plot]
table {%
0 11.8173321405193
0 11.5769000767506
};
\addplot [darkslategray61, forget plot]
table {%
0 12.1863872595366
0 12.6609766587244
};
\addplot [darkslategray61, forget plot]
table {%
-0.2 11.5769000767506
0.2 11.5769000767506
};
\addplot [darkslategray61, forget plot]
table {%
-0.2 12.6609766587244
0.2 12.6609766587244
};
\path [draw=darkslategray61, fill=peru22412844]
(axis cs:0.6,11.3166513249501)
--(axis cs:1.4,11.3166513249501)
--(axis cs:1.4,12.0433863192493)
--(axis cs:0.6,12.0433863192493)
--(axis cs:0.6,11.3166513249501)
--cycle;
\addplot [darkslategray61, forget plot]
table {%
1 11.3166513249501
1 10.9605980174806
};
\addplot [darkslategray61, forget plot]
table {%
1 12.0433863192493
1 12.6487219163772
};
\addplot [darkslategray61, forget plot]
table {%
0.8 10.9605980174806
1.2 10.9605980174806
};
\addplot [darkslategray61, forget plot]
table {%
0.8 12.6487219163772
1.2 12.6487219163772
};
\path [draw=darkslategray61, fill=seagreen5814558]
(axis cs:1.6,11.4033053309616)
--(axis cs:2.4,11.4033053309616)
--(axis cs:2.4,12.1490806243086)
--(axis cs:1.6,12.1490806243086)
--(axis cs:1.6,11.4033053309616)
--cycle;
\addplot [darkslategray61, forget plot]
table {%
2 11.4033053309616
2 11.2384018099703
};
\addplot [darkslategray61, forget plot]
table {%
2 12.1490806243086
2 12.1843840165766
};
\addplot [darkslategray61, forget plot]
table {%
1.8 11.2384018099703
2.2 11.2384018099703
};
\addplot [darkslategray61, forget plot]
table {%
1.8 12.1843840165766
2.2 12.1843840165766
};
\addplot [black, mark=o, mark size=3, mark options={solid,fill opacity=0,draw=darkslategray61}, only marks, forget plot]
table {%
2 9.48441172148747
2 13.3434375644884
};
\path [draw=darkslategray61, fill=brown1926061]
(axis cs:2.6,12.0379526274326)
--(axis cs:3.4,12.0379526274326)
--(axis cs:3.4,13.3508502409698)
--(axis cs:2.6,13.3508502409698)
--(axis cs:2.6,12.0379526274326)
--cycle;
\addplot [darkslategray61, forget plot]
table {%
3 12.0379526274326
3 11.1641555233214
};
\addplot [darkslategray61, forget plot]
table {%
3 13.3508502409698
3 14.1424337836405
};
\addplot [darkslategray61, forget plot]
table {%
2.8 11.1641555233214
3.2 11.1641555233214
};
\addplot [darkslategray61, forget plot]
table {%
2.8 14.1424337836405
3.2 14.1424337836405
};
\path [draw=darkslategray61, fill=mediumpurple147113178]
(axis cs:3.6,13.5398038761519)
--(axis cs:4.4,13.5398038761519)
--(axis cs:4.4,13.7464852144266)
--(axis cs:3.6,13.7464852144266)
--(axis cs:3.6,13.5398038761519)
--cycle;
\addplot [darkslategray61, forget plot]
table {%
4 13.5398038761519
4 13.3695075346695
};
\addplot [darkslategray61, forget plot]
table {%
4 13.7464852144266
4 13.8355831429214
};
\addplot [darkslategray61, forget plot]
table {%
3.8 13.3695075346695
4.2 13.3695075346695
};
\addplot [darkslategray61, forget plot]
table {%
3.8 13.8355831429214
4.2 13.8355831429214
};
\addplot [darkslategray61, forget plot]
table {%
-0.4 11.9753797002762
0.4 11.9753797002762
};
\addplot [darkslategray61, forget plot]
table {%
0.6 11.9553808380212
1.4 11.9553808380212
};
\addplot [darkslategray61, forget plot]
table {%
1.6 11.97059317072
2.4 11.97059317072
};
\addplot [darkslategray61, forget plot]
table {%
2.6 12.7774226162416
3.4 12.7774226162416
};
\addplot [darkslategray61, forget plot]
table {%
3.6 13.6366810560412
4.4 13.6366810560412
};
\end{axis}

\end{tikzpicture}}
        \caption{LEU-randomWE}
        \label{fig:appendix:LEU/randomWE}
    \end{subfigure}
    \hfill
    \begin{subfigure}[t]{0.24\textwidth}
        \centering
        \resizebox{1\textwidth}{!}{
\begin{tikzpicture}

\definecolor{brown1926061}{RGB}{192,60,61}
\definecolor{darkgray176}{RGB}{176,176,176}
\definecolor{darkslategray61}{RGB}{61,61,61}
\definecolor{lightgray204}{RGB}{204,204,204}
\definecolor{mediumpurple147113178}{RGB}{147,113,178}
\definecolor{peru22412844}{RGB}{224,128,44}
\definecolor{seagreen5814558}{RGB}{58,145,58}
\definecolor{steelblue49115161}{RGB}{49,115,161}

\begin{axis}[
legend style={fill opacity=0.8, draw opacity=1, text opacity=1, draw=lightgray204},
log basis y={10},
tick align=outside,
tick pos=left,
x grid style={darkgray176},
xmin=-0.5, xmax=4.5,
xtick style={color=black},
xtick={0,1,2,3,4},
xticklabels={FNN,GNN,CL,PL,ER},
y grid style={darkgray176},
ylabel style={align=center},
ylabel={MAE},
ymin=0.01, ymax=26,
ymode=log,
ytick style={color=black}
]
\path [draw=darkslategray61, fill=steelblue49115161]
(axis cs:-0.4,2.29146872253315)
--(axis cs:0.4,2.29146872253315)
--(axis cs:0.4,2.67893839535805)
--(axis cs:-0.4,2.67893839535805)
--(axis cs:-0.4,2.29146872253315)
--cycle;
\addplot [darkslategray61, forget plot]
table {%
0 2.29146872253315
0 2.10434525310993
};
\addplot [darkslategray61, forget plot]
table {%
0 2.67893839535805
0 2.69009981354078
};
\addplot [darkslategray61, forget plot]
table {%
-0.2 2.10434525310993
0.2 2.10434525310993
};
\addplot [darkslategray61, forget plot]
table {%
-0.2 2.69009981354078
0.2 2.69009981354078
};
\addplot [black, mark=o, mark size=3, mark options={solid,fill opacity=0,draw=darkslategray61}, only marks, forget plot]
table {%
0 3.55160739924759
};
\path [draw=darkslategray61, fill=peru22412844]
(axis cs:0.6,2.47730095632476)
--(axis cs:1.4,2.47730095632476)
--(axis cs:1.4,3.2259463439003)
--(axis cs:0.6,3.2259463439003)
--(axis cs:0.6,2.47730095632476)
--cycle;
\addplot [darkslategray61, forget plot]
table {%
1 2.47730095632476
1 2.39650931303529
};
\addplot [darkslategray61, forget plot]
table {%
1 3.2259463439003
1 3.93714399915189
};
\addplot [darkslategray61, forget plot]
table {%
0.8 2.39650931303529
1.2 2.39650931303529
};
\addplot [darkslategray61, forget plot]
table {%
0.8 3.93714399915189
1.2 3.93714399915189
};
\path [draw=darkslategray61, fill=seagreen5814558]
(axis cs:1.6,1.81822916666667)
--(axis cs:2.4,1.81822916666667)
--(axis cs:2.4,2.21119599578504)
--(axis cs:1.6,2.21119599578504)
--(axis cs:1.6,1.81822916666667)
--cycle;
\addplot [darkslategray61, forget plot]
table {%
2 1.81822916666667
2 1.73563218390805
};
\addplot [darkslategray61, forget plot]
table {%
2 2.21119599578504
2 2.21538461538462
};
\addplot [darkslategray61, forget plot]
table {%
1.8 1.73563218390805
2.2 1.73563218390805
};
\addplot [darkslategray61, forget plot]
table {%
1.8 2.21538461538462
2.2 2.21538461538462
};
\addplot [black, mark=o, mark size=3, mark options={solid,fill opacity=0,draw=darkslategray61}, only marks, forget plot]
table {%
2 2.859375
};
\path [draw=darkslategray61, fill=brown1926061]
(axis cs:2.6,1.96478243530166)
--(axis cs:3.4,1.96478243530166)
--(axis cs:3.4,2.50157321397167)
--(axis cs:2.6,2.50157321397167)
--(axis cs:2.6,1.96478243530166)
--cycle;
\addplot [darkslategray61, forget plot]
table {%
3 1.96478243530166
3 1.70237583100462
};
\addplot [darkslategray61, forget plot]
table {%
3 2.50157321397167
3 3.02396761419025
};
\addplot [darkslategray61, forget plot]
table {%
2.8 1.70237583100462
3.2 1.70237583100462
};
\addplot [darkslategray61, forget plot]
table {%
2.8 3.02396761419025
3.2 3.02396761419025
};
\path [draw=darkslategray61, fill=mediumpurple147113178]
(axis cs:3.6,2.89488857981224)
--(axis cs:4.4,2.89488857981224)
--(axis cs:4.4,3.26254312918026)
--(axis cs:3.6,3.26254312918026)
--(axis cs:3.6,2.89488857981224)
--cycle;
\addplot [darkslategray61, forget plot]
table {%
4 2.89488857981224
4 2.75228276422624
};
\addplot [darkslategray61, forget plot]
table {%
4 3.26254312918026
4 3.31138227891378
};
\addplot [darkslategray61, forget plot]
table {%
3.8 2.75228276422624
4.2 2.75228276422624
};
\addplot [darkslategray61, forget plot]
table {%
3.8 3.31138227891378
4.2 3.31138227891378
};
\addplot [black, mark=o, mark size=3, mark options={solid,fill opacity=0,draw=darkslategray61}, only marks, forget plot]
table {%
4 3.92981520659793
};
\addplot [darkslategray61, forget plot]
table {%
-0.4 2.52355721210822
0.4 2.52355721210822
};
\addplot [darkslategray61, forget plot]
table {%
0.6 2.63565379331509
1.4 2.63565379331509
};
\addplot [darkslategray61, forget plot]
table {%
1.6 2.08264840182648
2.4 2.08264840182648
};
\addplot [darkslategray61, forget plot]
table {%
2.6 2.25031184736387
3.4 2.25031184736387
};
\addplot [darkslategray61, forget plot]
table {%
3.6 3.03360039127533
4.4 3.03360039127533
};
\end{axis}

\end{tikzpicture}}
        \caption{HE-easyMCFP}
        \label{fig:appendix:HE/easyMCFP}
    \end{subfigure}
     \hfill
    \begin{subfigure}[t]{0.24\textwidth}
        \centering
        \resizebox{\textwidth}{!}{
\begin{tikzpicture}

\definecolor{brown1926061}{RGB}{192,60,61}
\definecolor{darkgray176}{RGB}{176,176,176}
\definecolor{darkslategray61}{RGB}{61,61,61}
\definecolor{lightgray204}{RGB}{204,204,204}
\definecolor{mediumpurple147113178}{RGB}{147,113,178}
\definecolor{peru22412844}{RGB}{224,128,44}
\definecolor{seagreen5814558}{RGB}{58,145,58}
\definecolor{steelblue49115161}{RGB}{49,115,161}

\begin{axis}[
legend style={fill opacity=0.8, draw opacity=1, text opacity=1, draw=lightgray204},
log basis y={10},
tick align=outside,
tick pos=left,
x grid style={darkgray176},
xmin=-0.5, xmax=4.5,
xtick style={color=black},
xtick={0,1,2,3,4},
xticklabels={FNN,GNN,CL,PL,ER},
y grid style={darkgray176},
ylabel style={align=center},
ylabel={MAE},
ymin=0.01, ymax=26,
ymode=log,
ytick style={color=black}
]
\path [draw=darkslategray61, fill=steelblue49115161]
(axis cs:-0.4,1.56779485358226)
--(axis cs:0.4,1.56779485358226)
--(axis cs:0.4,1.84412889584975)
--(axis cs:-0.4,1.84412889584975)
--(axis cs:-0.4,1.56779485358226)
--cycle;
\addplot [darkslategray61, forget plot]
table {%
0 1.56779485358226
0 1.36083438012698
};
\addplot [darkslategray61, forget plot]
table {%
0 1.84412889584975
0 1.87048888746007
};
\addplot [darkslategray61, forget plot]
table {%
-0.2 1.36083438012698
0.2 1.36083438012698
};
\addplot [darkslategray61, forget plot]
table {%
-0.2 1.87048888746007
0.2 1.87048888746007
};
\addplot [black, mark=o, mark size=3, mark options={solid,fill opacity=0,draw=darkslategray61}, only marks, forget plot]
table {%
0 2.28432240244001
};
\path [draw=darkslategray61, fill=peru22412844]
(axis cs:0.6,2.06933176759316)
--(axis cs:1.4,2.06933176759316)
--(axis cs:1.4,2.22189723292406)
--(axis cs:0.6,2.22189723292406)
--(axis cs:0.6,2.06933176759316)
--cycle;
\addplot [darkslategray61, forget plot]
table {%
1 2.06933176759316
1 1.8872582633582
};
\addplot [darkslategray61, forget plot]
table {%
1 2.22189723292406
1 2.24538152733418
};
\addplot [darkslategray61, forget plot]
table {%
0.8 1.8872582633582
1.2 1.8872582633582
};
\addplot [darkslategray61, forget plot]
table {%
0.8 2.24538152733418
1.2 2.24538152733418
};
\addplot [black, mark=o, mark size=3, mark options={solid,fill opacity=0,draw=darkslategray61}, only marks, forget plot]
table {%
1 2.88261204583034
};
\path [draw=darkslategray61, fill=seagreen5814558]
(axis cs:1.6,0.627307541804902)
--(axis cs:2.4,0.627307541804902)
--(axis cs:2.4,0.774914737762506)
--(axis cs:1.6,0.774914737762506)
--(axis cs:1.6,0.627307541804902)
--cycle;
\addplot [darkslategray61, forget plot]
table {%
2 0.627307541804902
2 0.527946170470486
};
\addplot [darkslategray61, forget plot]
table {%
2 0.774914737762506
2 0.818386216551123
};
\addplot [darkslategray61, forget plot]
table {%
1.8 0.527946170470486
2.2 0.527946170470486
};
\addplot [darkslategray61, forget plot]
table {%
1.8 0.818386216551123
2.2 0.818386216551123
};
\path [draw=darkslategray61, fill=brown1926061]
(axis cs:2.6,0.554056629694295)
--(axis cs:3.4,0.554056629694295)
--(axis cs:3.4,0.982384982742925)
--(axis cs:2.6,0.982384982742925)
--(axis cs:2.6,0.554056629694295)
--cycle;
\addplot [darkslategray61, forget plot]
table {%
3 0.554056629694295
3 0.493992227281728
};
\addplot [darkslategray61, forget plot]
table {%
3 0.982384982742925
3 1.42256799145572
};
\addplot [darkslategray61, forget plot]
table {%
2.8 0.493992227281728
3.2 0.493992227281728
};
\addplot [darkslategray61, forget plot]
table {%
2.8 1.42256799145572
3.2 1.42256799145572
};
\path [draw=darkslategray61, fill=mediumpurple147113178]
(axis cs:3.6,1.30167581384878)
--(axis cs:4.4,1.30167581384878)
--(axis cs:4.4,1.54893287461699)
--(axis cs:3.6,1.54893287461699)
--(axis cs:3.6,1.30167581384878)
--cycle;
\addplot [darkslategray61, forget plot]
table {%
4 1.30167581384878
4 1.2701883330458
};
\addplot [darkslategray61, forget plot]
table {%
4 1.54893287461699
4 1.81640737682287
};
\addplot [darkslategray61, forget plot]
table {%
3.8 1.2701883330458
4.2 1.2701883330458
};
\addplot [darkslategray61, forget plot]
table {%
3.8 1.81640737682287
4.2 1.81640737682287
};
\addplot [black, mark=o, mark size=3, mark options={solid,fill opacity=0,draw=darkslategray61}, only marks, forget plot]
table {%
4 0.786637067887213
};
\addplot [darkslategray61, forget plot]
table {%
-0.4 1.7500879595414
0.4 1.7500879595414
};
\addplot [darkslategray61, forget plot]
table {%
0.6 2.11483584203195
1.4 2.11483584203195
};
\addplot [darkslategray61, forget plot]
table {%
1.6 0.724479869417833
2.4 0.724479869417833
};
\addplot [darkslategray61, forget plot]
table {%
2.6 0.815782392500781
3.4 0.815782392500781
};
\addplot [darkslategray61, forget plot]
table {%
3.6 1.41928783766476
4.4 1.41928783766476
};
\end{axis}

\end{tikzpicture}}
        \caption{HE-easyWE}
        \label{fig:appendix:HE/easyWE}
    \end{subfigure}
     \hfill
     \begin{subfigure}[t]{0.24\textwidth}
        \centering
        \resizebox{1\textwidth}{!}{
\begin{tikzpicture}

\definecolor{brown1926061}{RGB}{192,60,61}
\definecolor{darkgray176}{RGB}{176,176,176}
\definecolor{darkslategray61}{RGB}{61,61,61}
\definecolor{lightgray204}{RGB}{204,204,204}
\definecolor{mediumpurple147113178}{RGB}{147,113,178}
\definecolor{peru22412844}{RGB}{224,128,44}
\definecolor{seagreen5814558}{RGB}{58,145,58}
\definecolor{steelblue49115161}{RGB}{49,115,161}

\begin{axis}[
legend style={fill opacity=0.8, draw opacity=1, text opacity=1, draw=lightgray204},
log basis y={10},
tick align=outside,
tick pos=left,
x grid style={darkgray176},
xmin=-0.5, xmax=4.5,
xtick style={color=black},
xtick={0,1,2,3,4},
xticklabels={FNN,GNN,CL,PL,ER},
y grid style={darkgray176},
ylabel style={align=center},
ylabel={MAE},
ymin=0.01, ymax=26,
ymode=log,
ytick style={color=black}
]
\path [draw=darkslategray61, fill=steelblue49115161]
(axis cs:-0.4,3.46856412805306)
--(axis cs:0.4,3.46856412805306)
--(axis cs:0.4,4.32561157607681)
--(axis cs:-0.4,4.32561157607681)
--(axis cs:-0.4,3.46856412805306)
--cycle;
\addplot [darkslategray61, forget plot]
table {%
0 3.46856412805306
0 3.3262645740062
};
\addplot [darkslategray61, forget plot]
table {%
0 4.32561157607681
0 4.51831095314574
};
\addplot [darkslategray61, forget plot]
table {%
-0.2 3.3262645740062
0.2 3.3262645740062
};
\addplot [darkslategray61, forget plot]
table {%
-0.2 4.51831095314574
0.2 4.51831095314574
};
\path [draw=darkslategray61, fill=peru22412844]
(axis cs:0.6,3.71882108524442)
--(axis cs:1.4,3.71882108524442)
--(axis cs:1.4,4.05041208787591)
--(axis cs:0.6,4.05041208787591)
--(axis cs:0.6,3.71882108524442)
--cycle;
\addplot [darkslategray61, forget plot]
table {%
1 3.71882108524442
1 3.53207843191922
};
\addplot [darkslategray61, forget plot]
table {%
1 4.05041208787591
1 4.13039813610329
};
\addplot [darkslategray61, forget plot]
table {%
0.8 3.53207843191922
1.2 3.53207843191922
};
\addplot [darkslategray61, forget plot]
table {%
0.8 4.13039813610329
1.2 4.13039813610329
};
\addplot [black, mark=o, mark size=3, mark options={solid,fill opacity=0,draw=darkslategray61}, only marks, forget plot]
table {%
1 4.73219177539532
};
\path [draw=darkslategray61, fill=seagreen5814558]
(axis cs:1.6,3.26927083333333)
--(axis cs:2.4,3.26927083333333)
--(axis cs:2.4,4.21538461538462)
--(axis cs:1.6,4.21538461538462)
--(axis cs:1.6,3.26927083333333)
--cycle;
\addplot [darkslategray61, forget plot]
table {%
2 3.26927083333333
2 2.2890625
};
\addplot [darkslategray61, forget plot]
table {%
2 4.21538461538462
2 4.5632183908046
};
\addplot [darkslategray61, forget plot]
table {%
1.8 2.2890625
2.2 2.2890625
};
\addplot [darkslategray61, forget plot]
table {%
1.8 4.5632183908046
2.2 4.5632183908046
};
\path [draw=darkslategray61, fill=brown1926061]
(axis cs:2.6,3.22831345787917)
--(axis cs:3.4,3.22831345787917)
--(axis cs:3.4,4.08468443651429)
--(axis cs:2.6,4.08468443651429)
--(axis cs:2.6,3.22831345787917)
--cycle;
\addplot [darkslategray61, forget plot]
table {%
3 3.22831345787917
3 2.50560678260811
};
\addplot [darkslategray61, forget plot]
table {%
3 4.08468443651429
3 4.34328770592671
};
\addplot [darkslategray61, forget plot]
table {%
2.8 2.50560678260811
3.2 2.50560678260811
};
\addplot [darkslategray61, forget plot]
table {%
2.8 4.34328770592671
3.2 4.34328770592671
};
\path [draw=darkslategray61, fill=mediumpurple147113178]
(axis cs:3.6,3.63502531797395)
--(axis cs:4.4,3.63502531797395)
--(axis cs:4.4,4.33853979922924)
--(axis cs:3.6,4.33853979922924)
--(axis cs:3.6,3.63502531797395)
--cycle;
\addplot [darkslategray61, forget plot]
table {%
4 3.63502531797395
4 3.20353668837269
};
\addplot [darkslategray61, forget plot]
table {%
4 4.33853979922924
4 4.62028243766456
};
\addplot [darkslategray61, forget plot]
table {%
3.8 3.20353668837269
4.2 3.20353668837269
};
\addplot [darkslategray61, forget plot]
table {%
3.8 4.62028243766456
4.2 4.62028243766456
};
\addplot [darkslategray61, forget plot]
table {%
-0.4 3.77326274182154
0.4 3.77326274182154
};
\addplot [darkslategray61, forget plot]
table {%
0.6 3.77041351163428
1.4 3.77041351163428
};
\addplot [darkslategray61, forget plot]
table {%
1.6 3.47916666666667
2.4 3.47916666666667
};
\addplot [darkslategray61, forget plot]
table {%
2.6 3.40221899923136
3.4 3.40221899923136
};
\addplot [darkslategray61, forget plot]
table {%
3.6 3.85242983281913
4.4 3.85242983281913
};
\end{axis}

\end{tikzpicture}}
        \caption{HE-randomMCFP}
        \label{fig:appendix:HE/randomMCFP}
    \end{subfigure}
    \hfill
    \begin{subfigure}[t]{0.24\textwidth}
        \centering
        \resizebox{1\textwidth}{!}{
\begin{tikzpicture}

\definecolor{brown1926061}{RGB}{192,60,61}
\definecolor{darkgray176}{RGB}{176,176,176}
\definecolor{darkslategray61}{RGB}{61,61,61}
\definecolor{lightgray204}{RGB}{204,204,204}
\definecolor{mediumpurple147113178}{RGB}{147,113,178}
\definecolor{peru22412844}{RGB}{224,128,44}
\definecolor{seagreen5814558}{RGB}{58,145,58}
\definecolor{steelblue49115161}{RGB}{49,115,161}

\begin{axis}[
legend style={fill opacity=0.8, draw opacity=1, text opacity=1, draw=lightgray204},
log basis y={10},
tick align=outside,
tick pos=left,
x grid style={darkgray176},
xmin=-0.5, xmax=4.5,
xtick style={color=black},
xtick={0,1,2,3,4},
xticklabels={FNN,GNN,CL,PL,ER},
y grid style={darkgray176},
ylabel style={align=center},
ylabel={MAE},
ymin=0.01, ymax=26,
ymode=log,
ytick style={color=black}
]
\path [draw=darkslategray61, fill=steelblue49115161]
(axis cs:-0.4,3.14352648094214)
--(axis cs:0.4,3.14352648094214)
--(axis cs:0.4,3.64207810535562)
--(axis cs:-0.4,3.64207810535562)
--(axis cs:-0.4,3.14352648094214)
--cycle;
\addplot [darkslategray61, forget plot]
table {%
0 3.14352648094214
0 2.87570721298848
};
\addplot [darkslategray61, forget plot]
table {%
0 3.64207810535562
0 3.84058975282894
};
\addplot [darkslategray61, forget plot]
table {%
-0.2 2.87570721298848
0.2 2.87570721298848
};
\addplot [darkslategray61, forget plot]
table {%
-0.2 3.84058975282894
0.2 3.84058975282894
};
\path [draw=darkslategray61, fill=peru22412844]
(axis cs:0.6,3.24978757951815)
--(axis cs:1.4,3.24978757951815)
--(axis cs:1.4,3.53157139744785)
--(axis cs:0.6,3.53157139744785)
--(axis cs:0.6,3.24978757951815)
--cycle;
\addplot [darkslategray61, forget plot]
table {%
1 3.24978757951815
1 2.88971228872821
};
\addplot [darkslategray61, forget plot]
table {%
1 3.53157139744785
1 3.59682940918336
};
\addplot [darkslategray61, forget plot]
table {%
0.8 2.88971228872821
1.2 2.88971228872821
};
\addplot [darkslategray61, forget plot]
table {%
0.8 3.59682940918336
1.2 3.59682940918336
};
\path [draw=darkslategray61, fill=seagreen5814558]
(axis cs:1.6,2.5290898489345)
--(axis cs:2.4,2.5290898489345)
--(axis cs:2.4,3.06819119960889)
--(axis cs:1.6,3.06819119960889)
--(axis cs:1.6,2.5290898489345)
--cycle;
\addplot [darkslategray61, forget plot]
table {%
2 2.5290898489345
2 2.00557669280319
};
\addplot [darkslategray61, forget plot]
table {%
2 3.06819119960889
2 3.76246904292348
};
\addplot [darkslategray61, forget plot]
table {%
1.8 2.00557669280319
2.2 2.00557669280319
};
\addplot [darkslategray61, forget plot]
table {%
1.8 3.76246904292348
2.2 3.76246904292348
};
\path [draw=darkslategray61, fill=brown1926061]
(axis cs:2.6,2.70240360067244)
--(axis cs:3.4,2.70240360067244)
--(axis cs:3.4,3.16348069257549)
--(axis cs:2.6,3.16348069257549)
--(axis cs:2.6,2.70240360067244)
--cycle;
\addplot [darkslategray61, forget plot]
table {%
3 2.70240360067244
3 2.45823966921671
};
\addplot [darkslategray61, forget plot]
table {%
3 3.16348069257549
3 3.74565226451637
};
\addplot [darkslategray61, forget plot]
table {%
2.8 2.45823966921671
3.2 2.45823966921671
};
\addplot [darkslategray61, forget plot]
table {%
2.8 3.74565226451637
3.2 3.74565226451637
};
\path [draw=darkslategray61, fill=mediumpurple147113178]
(axis cs:3.6,2.95668781222272)
--(axis cs:4.4,2.95668781222272)
--(axis cs:4.4,3.44470160992776)
--(axis cs:3.6,3.44470160992776)
--(axis cs:3.6,2.95668781222272)
--cycle;
\addplot [darkslategray61, forget plot]
table {%
4 2.95668781222272
4 2.85030088018965
};
\addplot [darkslategray61, forget plot]
table {%
4 3.44470160992776
4 3.76234458146944
};
\addplot [darkslategray61, forget plot]
table {%
3.8 2.85030088018965
4.2 2.85030088018965
};
\addplot [darkslategray61, forget plot]
table {%
3.8 3.76234458146944
4.2 3.76234458146944
};
\addplot [darkslategray61, forget plot]
table {%
-0.4 3.22894269960985
0.4 3.22894269960985
};
\addplot [darkslategray61, forget plot]
table {%
0.6 3.46307719349227
1.4 3.46307719349227
};
\addplot [darkslategray61, forget plot]
table {%
1.6 2.8091440020655
2.4 2.8091440020655
};
\addplot [darkslategray61, forget plot]
table {%
2.6 2.99422515288145
3.4 2.99422515288145
};
\addplot [darkslategray61, forget plot]
table {%
3.6 3.15165962715772
4.4 3.15165962715772
};
\end{axis}

\end{tikzpicture}}
        \caption{HE-randomWE}
        \label{fig:appendix:HE/randomWE}
    \end{subfigure}
    \hfill
    \begin{subfigure}[t]{0.24\textwidth}
        \centering
        \resizebox{1\textwidth}{!}{
\begin{tikzpicture}

\definecolor{brown1926061}{RGB}{192,60,61}
\definecolor{darkgray176}{RGB}{176,176,176}
\definecolor{darkslategray61}{RGB}{61,61,61}
\definecolor{lightgray204}{RGB}{204,204,204}
\definecolor{mediumpurple147113178}{RGB}{147,113,178}
\definecolor{peru22412844}{RGB}{224,128,44}
\definecolor{seagreen5814558}{RGB}{58,145,58}
\definecolor{steelblue49115161}{RGB}{49,115,161}

\begin{axis}[
legend style={fill opacity=0.8, draw opacity=1, text opacity=1, draw=lightgray204},
log basis y={10},
tick align=outside,
tick pos=left,
x grid style={darkgray176},
xmin=-0.5, xmax=4.5,
xtick style={color=black},
xtick={0,1,2,3,4},
xticklabels={FNN,GNN,CL,PL,ER},
y grid style={darkgray176},
ylabel style={align=center},
ylabel={MAE},
ymin=0.01, ymax=26,
ymode=log,
ytick style={color=black}
]
\path [draw=darkslategray61, fill=steelblue49115161]
(axis cs:-0.4,1.74071534797549)
--(axis cs:0.4,1.74071534797549)
--(axis cs:0.4,1.84779502352079)
--(axis cs:-0.4,1.84779502352079)
--(axis cs:-0.4,1.74071534797549)
--cycle;
\addplot [darkslategray61, forget plot]
table {%
0 1.74071534797549
0 1.60032903850079
};
\addplot [darkslategray61, forget plot]
table {%
0 1.84779502352079
0 1.95426094830036
};
\addplot [darkslategray61, forget plot]
table {%
-0.2 1.60032903850079
0.2 1.60032903850079
};
\addplot [darkslategray61, forget plot]
table {%
-0.2 1.95426094830036
0.2 1.95426094830036
};
\path [draw=darkslategray61, fill=peru22412844]
(axis cs:0.6,1.64469143139819)
--(axis cs:1.4,1.64469143139819)
--(axis cs:1.4,1.76423726355036)
--(axis cs:0.6,1.76423726355036)
--(axis cs:0.6,1.64469143139819)
--cycle;
\addplot [darkslategray61, forget plot]
table {%
1 1.64469143139819
1 1.56359108289083
};
\addplot [darkslategray61, forget plot]
table {%
1 1.76423726355036
1 1.83339939365784
};
\addplot [darkslategray61, forget plot]
table {%
0.8 1.56359108289083
1.2 1.56359108289083
};
\addplot [darkslategray61, forget plot]
table {%
0.8 1.83339939365784
1.2 1.83339939365784
};
\path [draw=darkslategray61, fill=seagreen5814558]
(axis cs:1.6,1.6)
--(axis cs:2.4,1.6)
--(axis cs:2.4,1.675)
--(axis cs:1.6,1.675)
--(axis cs:1.6,1.6)
--cycle;
\addplot [darkslategray61, forget plot]
table {%
2 1.6
2 1.53333333333333
};
\addplot [darkslategray61, forget plot]
table {%
2 1.675
2 1.75
};
\addplot [darkslategray61, forget plot]
table {%
1.8 1.53333333333333
2.2 1.53333333333333
};
\addplot [darkslategray61, forget plot]
table {%
1.8 1.75
2.2 1.75
};
\path [draw=darkslategray61, fill=brown1926061]
(axis cs:2.6,2.09967424188138)
--(axis cs:3.4,2.09967424188138)
--(axis cs:3.4,2.3187986637261)
--(axis cs:2.6,2.3187986637261)
--(axis cs:2.6,2.09967424188138)
--cycle;
\addplot [darkslategray61, forget plot]
table {%
3 2.09967424188138
3 1.87564182538031
};
\addplot [darkslategray61, forget plot]
table {%
3 2.3187986637261
3 2.37788667760381
};
\addplot [darkslategray61, forget plot]
table {%
2.8 1.87564182538031
3.2 1.87564182538031
};
\addplot [darkslategray61, forget plot]
table {%
2.8 2.37788667760381
3.2 2.37788667760381
};
\path [draw=darkslategray61, fill=mediumpurple147113178]
(axis cs:3.6,2.31623410973599)
--(axis cs:4.4,2.31623410973599)
--(axis cs:4.4,2.41613226187453)
--(axis cs:3.6,2.41613226187453)
--(axis cs:3.6,2.31623410973599)
--cycle;
\addplot [darkslategray61, forget plot]
table {%
4 2.31623410973599
4 2.27579082045109
};
\addplot [darkslategray61, forget plot]
table {%
4 2.41613226187453
4 2.55219718597951
};
\addplot [darkslategray61, forget plot]
table {%
3.8 2.27579082045109
4.2 2.27579082045109
};
\addplot [darkslategray61, forget plot]
table {%
3.8 2.55219718597951
4.2 2.55219718597951
};
\addplot [darkslategray61, forget plot]
table {%
-0.4 1.78867805848519
0.4 1.78867805848519
};
\addplot [darkslategray61, forget plot]
table {%
0.6 1.73170979395509
1.4 1.73170979395509
};
\addplot [darkslategray61, forget plot]
table {%
1.6 1.65
2.4 1.65
};
\addplot [darkslategray61, forget plot]
table {%
2.6 2.28338317479709
3.4 2.28338317479709
};
\addplot [darkslategray61, forget plot]
table {%
3.6 2.37423990567335
4.4 2.37423990567335
};
\end{axis}

\end{tikzpicture}}
        \caption{HEU-easyMCFP}
        \label{fig:appendix:HEU/easyMCFP}
    \end{subfigure}
     \hfill
    \begin{subfigure}[t]{0.24\textwidth}
        \centering
        \resizebox{\textwidth}{!}{
\begin{tikzpicture}

\definecolor{brown1926061}{RGB}{192,60,61}
\definecolor{darkgray176}{RGB}{176,176,176}
\definecolor{darkslategray61}{RGB}{61,61,61}
\definecolor{lightgray204}{RGB}{204,204,204}
\definecolor{mediumpurple147113178}{RGB}{147,113,178}
\definecolor{peru22412844}{RGB}{224,128,44}
\definecolor{seagreen5814558}{RGB}{58,145,58}
\definecolor{steelblue49115161}{RGB}{49,115,161}

\begin{axis}[
legend style={fill opacity=0.8, draw opacity=1, text opacity=1, draw=lightgray204},
log basis y={10},
tick align=outside,
tick pos=left,
x grid style={darkgray176},
xmin=-0.5, xmax=4.5,
xtick style={color=black},
xtick={0,1,2,3,4},
xticklabels={FNN,GNN,CL,PL,ER},
y grid style={darkgray176},
ylabel style={align=center},
ylabel={MAE},
ymin=0.01, ymax=26,
ymode=log,
ytick style={color=black}
]
\path [draw=darkslategray61, fill=steelblue49115161]
(axis cs:-0.4,0.80814278597633)
--(axis cs:0.4,0.80814278597633)
--(axis cs:0.4,1.05228579019507)
--(axis cs:-0.4,1.05228579019507)
--(axis cs:-0.4,0.80814278597633)
--cycle;
\addplot [darkslategray61, forget plot]
table {%
0 0.80814278597633
0 0.688514865438143
};
\addplot [darkslategray61, forget plot]
table {%
0 1.05228579019507
0 1.11848492796222
};
\addplot [darkslategray61, forget plot]
table {%
-0.2 0.688514865438143
0.2 0.688514865438143
};
\addplot [darkslategray61, forget plot]
table {%
-0.2 1.11848492796222
0.2 1.11848492796222
};
\path [draw=darkslategray61, fill=peru22412844]
(axis cs:0.6,0.744856185449563)
--(axis cs:1.4,0.744856185449563)
--(axis cs:1.4,0.938615344205525)
--(axis cs:0.6,0.938615344205525)
--(axis cs:0.6,0.744856185449563)
--cycle;
\addplot [darkslategray61, forget plot]
table {%
1 0.744856185449563
1 0.689825915248933
};
\addplot [darkslategray61, forget plot]
table {%
1 0.938615344205525
1 1.10984613421583
};
\addplot [darkslategray61, forget plot]
table {%
0.8 0.689825915248933
1.2 0.689825915248933
};
\addplot [darkslategray61, forget plot]
table {%
0.8 1.10984613421583
1.2 1.10984613421583
};
\path [draw=darkslategray61, fill=seagreen5814558]
(axis cs:1.6,0.634582599500815)
--(axis cs:2.4,0.634582599500815)
--(axis cs:2.4,0.683607863634825)
--(axis cs:1.6,0.683607863634825)
--(axis cs:1.6,0.634582599500815)
--cycle;
\addplot [darkslategray61, forget plot]
table {%
2 0.634582599500815
2 0.597100207954645
};
\addplot [darkslategray61, forget plot]
table {%
2 0.683607863634825
2 0.688033973177274
};
\addplot [darkslategray61, forget plot]
table {%
1.8 0.597100207954645
2.2 0.597100207954645
};
\addplot [darkslategray61, forget plot]
table {%
1.8 0.688033973177274
2.2 0.688033973177274
};
\addplot [black, mark=o, mark size=3, mark options={solid,fill opacity=0,draw=darkslategray61}, only marks, forget plot]
table {%
2 0.806987553834915
};
\path [draw=darkslategray61, fill=brown1926061]
(axis cs:2.6,0.130902417272652)
--(axis cs:3.4,0.130902417272652)
--(axis cs:3.4,0.174226726246474)
--(axis cs:2.6,0.174226726246474)
--(axis cs:2.6,0.130902417272652)
--cycle;
\addplot [darkslategray61, forget plot]
table {%
3 0.130902417272652
3 0.107927302984354
};
\addplot [darkslategray61, forget plot]
table {%
3 0.174226726246474
3 0.207227331616511
};
\addplot [darkslategray61, forget plot]
table {%
2.8 0.107927302984354
3.2 0.107927302984354
};
\addplot [darkslategray61, forget plot]
table {%
2.8 0.207227331616511
3.2 0.207227331616511
};
\path [draw=darkslategray61, fill=mediumpurple147113178]
(axis cs:3.6,0.126565698038684)
--(axis cs:4.4,0.126565698038684)
--(axis cs:4.4,0.268879608251615)
--(axis cs:3.6,0.268879608251615)
--(axis cs:3.6,0.126565698038684)
--cycle;
\addplot [darkslategray61, forget plot]
table {%
4 0.126565698038684
4 0.105147415091587
};
\addplot [darkslategray61, forget plot]
table {%
4 0.268879608251615
4 0.294301905094056
};
\addplot [darkslategray61, forget plot]
table {%
3.8 0.105147415091587
4.2 0.105147415091587
};
\addplot [darkslategray61, forget plot]
table {%
3.8 0.294301905094056
4.2 0.294301905094056
};
\addplot [black, mark=o, mark size=3, mark options={solid,fill opacity=0,draw=darkslategray61}, only marks, forget plot]
table {%
4 0.501066235280549
};
\addplot [darkslategray61, forget plot]
table {%
-0.4 0.930668678879738
0.4 0.930668678879738
};
\addplot [darkslategray61, forget plot]
table {%
0.6 0.857526715430792
1.4 0.857526715430792
};
\addplot [darkslategray61, forget plot]
table {%
1.6 0.659317489465078
2.4 0.659317489465078
};
\addplot [darkslategray61, forget plot]
table {%
2.6 0.154419601459114
3.4 0.154419601459114
};
\addplot [darkslategray61, forget plot]
table {%
3.6 0.168068849510994
4.4 0.168068849510994
};
\end{axis}

\end{tikzpicture}}
        \caption{HEU-easyWE}
        \label{fig:appendix:HEU/easyWE}
    \end{subfigure}
     \hfill
     \begin{subfigure}[t]{0.24\textwidth}
        \centering
        \resizebox{1\textwidth}{!}{
\begin{tikzpicture}

\definecolor{brown1926061}{RGB}{192,60,61}
\definecolor{darkgray176}{RGB}{176,176,176}
\definecolor{darkslategray61}{RGB}{61,61,61}
\definecolor{lightgray204}{RGB}{204,204,204}
\definecolor{mediumpurple147113178}{RGB}{147,113,178}
\definecolor{peru22412844}{RGB}{224,128,44}
\definecolor{seagreen5814558}{RGB}{58,145,58}
\definecolor{steelblue49115161}{RGB}{49,115,161}

\begin{axis}[
legend style={fill opacity=0.8, draw opacity=1, text opacity=1, draw=lightgray204},
log basis y={10},
tick align=outside,
tick pos=left,
x grid style={darkgray176},
xmin=-0.5, xmax=4.5,
xtick style={color=black},
xtick={0,1,2,3,4},
xticklabels={FNN,GNN,CL,PL,ER},
y grid style={darkgray176},
ylabel style={align=center},
ylabel={MAE},
ymin=0.01, ymax=26,
ymode=log,
ytick style={color=black}
]
\path [draw=darkslategray61, fill=steelblue49115161]
(axis cs:-0.4,4.52243324505786)
--(axis cs:0.4,4.52243324505786)
--(axis cs:0.4,4.88897105020781)
--(axis cs:-0.4,4.88897105020781)
--(axis cs:-0.4,4.52243324505786)
--cycle;
\addplot [darkslategray61, forget plot]
table {%
0 4.52243324505786
0 4.42042522778114
};
\addplot [darkslategray61, forget plot]
table {%
0 4.88897105020781
0 4.91254556079706
};
\addplot [darkslategray61, forget plot]
table {%
-0.2 4.42042522778114
0.2 4.42042522778114
};
\addplot [darkslategray61, forget plot]
table {%
-0.2 4.91254556079706
0.2 4.91254556079706
};
\addplot [black, mark=o, mark size=3, mark options={solid,fill opacity=0,draw=darkslategray61}, only marks, forget plot]
table {%
0 5.51187680959702
};
\path [draw=darkslategray61, fill=peru22412844]
(axis cs:0.6,4.49821873878439)
--(axis cs:1.4,4.49821873878439)
--(axis cs:1.4,4.82102134749293)
--(axis cs:0.6,4.82102134749293)
--(axis cs:0.6,4.49821873878439)
--cycle;
\addplot [darkslategray61, forget plot]
table {%
1 4.49821873878439
1 4.26887922386328
};
\addplot [darkslategray61, forget plot]
table {%
1 4.82102134749293
1 4.86521398027738
};
\addplot [darkslategray61, forget plot]
table {%
0.8 4.26887922386328
1.2 4.26887922386328
};
\addplot [darkslategray61, forget plot]
table {%
0.8 4.86521398027738
1.2 4.86521398027738
};
\addplot [black, mark=o, mark size=3, mark options={solid,fill opacity=0,draw=darkslategray61}, only marks, forget plot]
table {%
1 5.4344529102246
};
\path [draw=darkslategray61, fill=seagreen5814558]
(axis cs:1.6,4.5125)
--(axis cs:2.4,4.5125)
--(axis cs:2.4,4.85833333333333)
--(axis cs:1.6,4.85833333333333)
--(axis cs:1.6,4.5125)
--cycle;
\addplot [darkslategray61, forget plot]
table {%
2 4.5125
2 4.05
};
\addplot [darkslategray61, forget plot]
table {%
2 4.85833333333333
2 4.93333333333333
};
\addplot [darkslategray61, forget plot]
table {%
1.8 4.05
2.2 4.05
};
\addplot [darkslategray61, forget plot]
table {%
1.8 4.93333333333333
2.2 4.93333333333333
};
\addplot [black, mark=o, mark size=3, mark options={solid,fill opacity=0,draw=darkslategray61}, only marks, forget plot]
table {%
2 5.55
};
\path [draw=darkslategray61, fill=brown1926061]
(axis cs:2.6,4.61029335745201)
--(axis cs:3.4,4.61029335745201)
--(axis cs:3.4,4.88711581619441)
--(axis cs:2.6,4.88711581619441)
--(axis cs:2.6,4.61029335745201)
--cycle;
\addplot [darkslategray61, forget plot]
table {%
3 4.61029335745201
3 4.47071530868652
};
\addplot [darkslategray61, forget plot]
table {%
3 4.88711581619441
3 4.96729793792248
};
\addplot [darkslategray61, forget plot]
table {%
2.8 4.47071530868652
3.2 4.47071530868652
};
\addplot [darkslategray61, forget plot]
table {%
2.8 4.96729793792248
3.2 4.96729793792248
};
\addplot [black, mark=o, mark size=3, mark options={solid,fill opacity=0,draw=darkslategray61}, only marks, forget plot]
table {%
3 5.43011374196719
};
\path [draw=darkslategray61, fill=mediumpurple147113178]
(axis cs:3.6,4.98935887247319)
--(axis cs:4.4,4.98935887247319)
--(axis cs:4.4,5.3699030901093)
--(axis cs:3.6,5.3699030901093)
--(axis cs:3.6,4.98935887247319)
--cycle;
\addplot [darkslategray61, forget plot]
table {%
4 4.98935887247319
4 4.60623121262837
};
\addplot [darkslategray61, forget plot]
table {%
4 5.3699030901093
4 5.38995715043297
};
\addplot [darkslategray61, forget plot]
table {%
3.8 4.60623121262837
4.2 4.60623121262837
};
\addplot [darkslategray61, forget plot]
table {%
3.8 5.38995715043297
4.2 5.38995715043297
};
\addplot [black, mark=o, mark size=3, mark options={solid,fill opacity=0,draw=darkslategray61}, only marks, forget plot]
table {%
4 6.10896060990383
};
\addplot [darkslategray61, forget plot]
table {%
-0.4 4.70633701309562
0.4 4.70633701309562
};
\addplot [darkslategray61, forget plot]
table {%
0.6 4.64188017199437
1.4 4.64188017199437
};
\addplot [darkslategray61, forget plot]
table {%
1.6 4.61666666666667
2.4 4.61666666666667
};
\addplot [darkslategray61, forget plot]
table {%
2.6 4.64069113633481
3.4 4.64069113633481
};
\addplot [darkslategray61, forget plot]
table {%
3.6 5.1669894270727
4.4 5.1669894270727
};
\end{axis}

\end{tikzpicture}}
        \caption{HEU-randomMCFP}
        \label{fig:appendix:HEU/randomMCFP}
    \end{subfigure}
    \hfill
    \begin{subfigure}[t]{0.24\textwidth}
        \centering
        \resizebox{1\textwidth}{!}{
\begin{tikzpicture}

\definecolor{brown1926061}{RGB}{192,60,61}
\definecolor{darkgray176}{RGB}{176,176,176}
\definecolor{darkslategray61}{RGB}{61,61,61}
\definecolor{lightgray204}{RGB}{204,204,204}
\definecolor{mediumpurple147113178}{RGB}{147,113,178}
\definecolor{peru22412844}{RGB}{224,128,44}
\definecolor{seagreen5814558}{RGB}{58,145,58}
\definecolor{steelblue49115161}{RGB}{49,115,161}

\begin{axis}[
legend style={fill opacity=0.8, draw opacity=1, text opacity=1, draw=lightgray204},
log basis y={10},
tick align=outside,
tick pos=left,
x grid style={darkgray176},
xmin=-0.5, xmax=4.5,
xtick style={color=black},
xtick={0,1,2,3,4},
xticklabels={FNN,GNN,CL,PL,ER},
y grid style={darkgray176},
ylabel style={align=center},
ylabel={MAE},
ymin=0.01, ymax=26,
ymode=log,
ytick style={color=black}
]
\path [draw=darkslategray61, fill=steelblue49115161]
(axis cs:-0.4,3.7150573318495)
--(axis cs:0.4,3.7150573318495)
--(axis cs:0.4,3.92301750655067)
--(axis cs:-0.4,3.92301750655067)
--(axis cs:-0.4,3.7150573318495)
--cycle;
\addplot [darkslategray61, forget plot]
table {%
0 3.7150573318495
0 3.65776788585736
};
\addplot [darkslategray61, forget plot]
table {%
0 3.92301750655067
0 4.14248194112095
};
\addplot [darkslategray61, forget plot]
table {%
-0.2 3.65776788585736
0.2 3.65776788585736
};
\addplot [darkslategray61, forget plot]
table {%
-0.2 4.14248194112095
0.2 4.14248194112095
};
\path [draw=darkslategray61, fill=peru22412844]
(axis cs:0.6,3.73846811473591)
--(axis cs:1.4,3.73846811473591)
--(axis cs:1.4,3.98068817442792)
--(axis cs:0.6,3.98068817442792)
--(axis cs:0.6,3.73846811473591)
--cycle;
\addplot [darkslategray61, forget plot]
table {%
1 3.73846811473591
1 3.53803281779753
};
\addplot [darkslategray61, forget plot]
table {%
1 3.98068817442792
1 4.08817886662643
};
\addplot [darkslategray61, forget plot]
table {%
0.8 3.53803281779753
1.2 3.53803281779753
};
\addplot [darkslategray61, forget plot]
table {%
0.8 4.08817886662643
1.2 4.08817886662643
};
\path [draw=darkslategray61, fill=seagreen5814558]
(axis cs:1.6,3.79555539908913)
--(axis cs:2.4,3.79555539908913)
--(axis cs:2.4,4.06812610273301)
--(axis cs:1.6,4.06812610273301)
--(axis cs:1.6,3.79555539908913)
--cycle;
\addplot [darkslategray61, forget plot]
table {%
2 3.79555539908913
2 3.47437253641061
};
\addplot [darkslategray61, forget plot]
table {%
2 4.06812610273301
2 4.13054218534119
};
\addplot [darkslategray61, forget plot]
table {%
1.8 3.47437253641061
2.2 3.47437253641061
};
\addplot [darkslategray61, forget plot]
table {%
1.8 4.13054218534119
2.2 4.13054218534119
};
\path [draw=darkslategray61, fill=brown1926061]
(axis cs:2.6,3.9136244798274)
--(axis cs:3.4,3.9136244798274)
--(axis cs:3.4,4.05413021012571)
--(axis cs:2.6,4.05413021012571)
--(axis cs:2.6,3.9136244798274)
--cycle;
\addplot [darkslategray61, forget plot]
table {%
3 3.9136244798274
3 3.89072801710071
};
\addplot [darkslategray61, forget plot]
table {%
3 4.05413021012571
3 4.06705853772293
};
\addplot [darkslategray61, forget plot]
table {%
2.8 3.89072801710071
3.2 3.89072801710071
};
\addplot [darkslategray61, forget plot]
table {%
2.8 4.06705853772293
3.2 4.06705853772293
};
\addplot [black, mark=o, mark size=3, mark options={solid,fill opacity=0,draw=darkslategray61}, only marks, forget plot]
table {%
3 3.46440710196729
3 4.27650553429668
};
\path [draw=darkslategray61, fill=mediumpurple147113178]
(axis cs:3.6,4.3637975078868)
--(axis cs:4.4,4.3637975078868)
--(axis cs:4.4,4.46636443861645)
--(axis cs:3.6,4.46636443861645)
--(axis cs:3.6,4.3637975078868)
--cycle;
\addplot [darkslategray61, forget plot]
table {%
4 4.3637975078868
4 4.36352158505585
};
\addplot [darkslategray61, forget plot]
table {%
4 4.46636443861645
4 4.47447665109334
};
\addplot [darkslategray61, forget plot]
table {%
3.8 4.36352158505585
4.2 4.36352158505585
};
\addplot [darkslategray61, forget plot]
table {%
3.8 4.47447665109334
4.2 4.47447665109334
};
\addplot [black, mark=o, mark size=3, mark options={solid,fill opacity=0,draw=darkslategray61}, only marks, forget plot]
table {%
4 3.93101663065075
4 4.80232174084192
};
\addplot [darkslategray61, forget plot]
table {%
-0.4 3.82521807590915
0.4 3.82521807590915
};
\addplot [darkslategray61, forget plot]
table {%
0.6 3.88595371362947
1.4 3.88595371362947
};
\addplot [darkslategray61, forget plot]
table {%
1.6 3.91304559748693
2.4 3.91304559748693
};
\addplot [darkslategray61, forget plot]
table {%
2.6 3.99882954767078
3.4 3.99882954767078
};
\addplot [darkslategray61, forget plot]
table {%
3.6 4.4033265387827
4.4 4.4033265387827
};
\end{axis}

\end{tikzpicture}}
        \caption{HEU-randomWE}
        \label{fig:appendix:HEU/randomWE}
    \end{subfigure}

    \hfill
    \begin{subfigure}[t]{0.24\textwidth}
        \centering
        \resizebox{1\textwidth}{!}{
\begin{tikzpicture}

\definecolor{brown1926061}{RGB}{192,60,61}
\definecolor{darkgray176}{RGB}{176,176,176}
\definecolor{darkslategray61}{RGB}{61,61,61}
\definecolor{lightgray204}{RGB}{204,204,204}
\definecolor{mediumpurple147113178}{RGB}{147,113,178}
\definecolor{peru22412844}{RGB}{224,128,44}
\definecolor{seagreen5814558}{RGB}{58,145,58}
\definecolor{steelblue49115161}{RGB}{49,115,161}

\begin{axis}[
legend style={fill opacity=0.8, draw opacity=1, text opacity=1, draw=lightgray204},
log basis y={10},
tick align=outside,
tick pos=left,
x grid style={darkgray176},
xmin=-0.5, xmax=4.5,
xtick style={color=black},
xtick={0,1,2,3,4},
xticklabels={FNN,GNN,CL,PL,ER},
y grid style={darkgray176},
ylabel style={align=center},
ylabel={MAE},
ymin=0.01, ymax=26,
ymode=log,
ytick style={color=black}
]
\path [draw=darkslategray61, fill=steelblue49115161]
(axis cs:-0.4,2.8125445709922)
--(axis cs:0.4,2.8125445709922)
--(axis cs:0.4,3.50938947494945)
--(axis cs:-0.4,3.50938947494945)
--(axis cs:-0.4,2.8125445709922)
--cycle;
\addplot [darkslategray61, forget plot]
table {%
0 2.8125445709922
0 2.48694388792939
};
\addplot [darkslategray61, forget plot]
table {%
0 3.50938947494945
0 3.66339606692394
};
\addplot [darkslategray61, forget plot]
table {%
-0.2 2.48694388792939
0.2 2.48694388792939
};
\addplot [darkslategray61, forget plot]
table {%
-0.2 3.66339606692394
0.2 3.66339606692394
};
\path [draw=darkslategray61, fill=peru22412844]
(axis cs:0.6,2.56306636454984)
--(axis cs:1.4,2.56306636454984)
--(axis cs:1.4,2.93980348244438)
--(axis cs:0.6,2.93980348244438)
--(axis cs:0.6,2.56306636454984)
--cycle;
\addplot [darkslategray61, forget plot]
table {%
1 2.56306636454984
1 2.36588915127974
};
\addplot [darkslategray61, forget plot]
table {%
1 2.93980348244438
1 2.94152997588289
};
\addplot [darkslategray61, forget plot]
table {%
0.8 2.36588915127974
1.2 2.36588915127974
};
\addplot [darkslategray61, forget plot]
table {%
0.8 2.94152997588289
1.2 2.94152997588289
};
\addplot [black, mark=o, mark size=3, mark options={solid,fill opacity=0,draw=darkslategray61}, only marks, forget plot]
table {%
1 3.98116390307744
};
\path [draw=darkslategray61, fill=seagreen5814558]
(axis cs:1.6,1.98683647260274)
--(axis cs:2.4,1.98683647260274)
--(axis cs:2.4,2.08766612787356)
--(axis cs:1.6,2.08766612787356)
--(axis cs:1.6,1.98683647260274)
--cycle;
\addplot [darkslategray61, forget plot]
table {%
2 1.98683647260274
2 1.96875
};
\addplot [darkslategray61, forget plot]
table {%
2 2.08766612787356
2 2.175
};
\addplot [darkslategray61, forget plot]
table {%
1.8 1.96875
2.2 1.96875
};
\addplot [darkslategray61, forget plot]
table {%
1.8 2.175
2.2 2.175
};
\addplot [black, mark=o, mark size=3, mark options={solid,fill opacity=0,draw=darkslategray61}, only marks, forget plot]
table {%
2 1.75384615384615
};
\path [draw=darkslategray61, fill=brown1926061]
(axis cs:2.6,2.08887847261289)
--(axis cs:3.4,2.08887847261289)
--(axis cs:3.4,2.25907021109403)
--(axis cs:2.6,2.25907021109403)
--(axis cs:2.6,2.08887847261289)
--cycle;
\addplot [darkslategray61, forget plot]
table {%
3 2.08887847261289
3 1.84780337206834
};
\addplot [darkslategray61, forget plot]
table {%
3 2.25907021109403
3 2.29109256945342
};
\addplot [darkslategray61, forget plot]
table {%
2.8 1.84780337206834
3.2 1.84780337206834
};
\addplot [darkslategray61, forget plot]
table {%
2.8 2.29109256945342
3.2 2.29109256945342
};
\addplot [black, mark=o, mark size=3, mark options={solid,fill opacity=0,draw=darkslategray61}, only marks, forget plot]
table {%
3 3.16798229819162
};
\path [draw=darkslategray61, fill=mediumpurple147113178]
(axis cs:3.6,2.66451303134783)
--(axis cs:4.4,2.66451303134783)
--(axis cs:4.4,3.52326558080633)
--(axis cs:3.6,3.52326558080633)
--(axis cs:3.6,2.66451303134783)
--cycle;
\addplot [darkslategray61, forget plot]
table {%
4 2.66451303134783
4 2.54193292842823
};
\addplot [darkslategray61, forget plot]
table {%
4 3.52326558080633
4 3.53916200311243
};
\addplot [darkslategray61, forget plot]
table {%
3.8 2.54193292842823
4.2 2.54193292842823
};
\addplot [darkslategray61, forget plot]
table {%
3.8 3.53916200311243
4.2 3.53916200311243
};
\addplot [black, mark=o, mark size=3, mark options={solid,fill opacity=0,draw=darkslategray61}, only marks, forget plot]
table {%
4 4.97970991821009
};
\addplot [darkslategray61, forget plot]
table {%
-0.4 3.10941189192235
0.4 3.10941189192235
};
\addplot [darkslategray61, forget plot]
table {%
0.6 2.91438441025986
1.4 2.91438441025986
};
\addplot [darkslategray61, forget plot]
table {%
1.6 2.04353645095261
2.4 2.04353645095261
};
\addplot [darkslategray61, forget plot]
table {%
2.6 2.12947839747565
3.4 2.12947839747565
};
\addplot [darkslategray61, forget plot]
table {%
3.6 3.15825148300397
4.4 3.15825148300397
};
\end{axis}

\end{tikzpicture}}
        \caption{SW-easyMCFP}
        \label{fig:appendix:SW/easyMCFP}
    \end{subfigure}
     \hfill
    \begin{subfigure}[t]{0.24\textwidth}
        \centering
        \resizebox{\textwidth}{!}{
\begin{tikzpicture}

\definecolor{brown1926061}{RGB}{192,60,61}
\definecolor{darkgray176}{RGB}{176,176,176}
\definecolor{darkslategray61}{RGB}{61,61,61}
\definecolor{lightgray204}{RGB}{204,204,204}
\definecolor{mediumpurple147113178}{RGB}{147,113,178}
\definecolor{peru22412844}{RGB}{224,128,44}
\definecolor{seagreen5814558}{RGB}{58,145,58}
\definecolor{steelblue49115161}{RGB}{49,115,161}

\begin{axis}[
legend style={fill opacity=0.8, draw opacity=1, text opacity=1, draw=lightgray204},
log basis y={10},
tick align=outside,
tick pos=left,
x grid style={darkgray176},
xmin=-0.5, xmax=4.5,
xtick style={color=black},
xtick={0,1,2,3,4},
xticklabels={FNN,GNN,CL,PL,ER},
y grid style={darkgray176},
ylabel style={align=center},
ylabel={MAE},
ymin=0.01, ymax=26,
ymode=log,
ytick style={color=black}
]
\path [draw=darkslategray61, fill=steelblue49115161]
(axis cs:-0.4,1.76393047203213)
--(axis cs:0.4,1.76393047203213)
--(axis cs:0.4,2.83578490607824)
--(axis cs:-0.4,2.83578490607824)
--(axis cs:-0.4,1.76393047203213)
--cycle;
\addplot [darkslategray61, forget plot]
table {%
0 1.76393047203213
0 1.6030866388691
};
\addplot [darkslategray61, forget plot]
table {%
0 2.83578490607824
0 2.98359935573935
};
\addplot [darkslategray61, forget plot]
table {%
-0.2 1.6030866388691
0.2 1.6030866388691
};
\addplot [darkslategray61, forget plot]
table {%
-0.2 2.98359935573935
0.2 2.98359935573935
};
\path [draw=darkslategray61, fill=peru22412844]
(axis cs:0.6,1.69909693721369)
--(axis cs:1.4,1.69909693721369)
--(axis cs:1.4,2.72390695215878)
--(axis cs:0.6,2.72390695215878)
--(axis cs:0.6,1.69909693721369)
--cycle;
\addplot [darkslategray61, forget plot]
table {%
1 1.69909693721369
1 1.33918603056523
};
\addplot [darkslategray61, forget plot]
table {%
1 2.72390695215878
1 3.26748991196709
};
\addplot [darkslategray61, forget plot]
table {%
0.8 1.33918603056523
1.2 1.33918603056523
};
\addplot [darkslategray61, forget plot]
table {%
0.8 3.26748991196709
1.2 3.26748991196709
};
\path [draw=darkslategray61, fill=seagreen5814558]
(axis cs:1.6,0.479297665952745)
--(axis cs:2.4,0.479297665952745)
--(axis cs:2.4,0.706138321792397)
--(axis cs:1.6,0.706138321792397)
--(axis cs:1.6,0.479297665952745)
--cycle;
\addplot [darkslategray61, forget plot]
table {%
2 0.479297665952745
2 0.374905543853795
};
\addplot [darkslategray61, forget plot]
table {%
2 0.706138321792397
2 0.77191718891263
};
\addplot [darkslategray61, forget plot]
table {%
1.8 0.374905543853795
2.2 0.374905543853795
};
\addplot [darkslategray61, forget plot]
table {%
1.8 0.77191718891263
2.2 0.77191718891263
};
\addplot [black, mark=o, mark size=3, mark options={solid,fill opacity=0,draw=darkslategray61}, only marks, forget plot]
table {%
2 1.29493635487755
};
\path [draw=darkslategray61, fill=brown1926061]
(axis cs:2.6,1.18666090423645)
--(axis cs:3.4,1.18666090423645)
--(axis cs:3.4,1.57797657087699)
--(axis cs:2.6,1.57797657087699)
--(axis cs:2.6,1.18666090423645)
--cycle;
\addplot [darkslategray61, forget plot]
table {%
3 1.18666090423645
3 1.17142702162674
};
\addplot [darkslategray61, forget plot]
table {%
3 1.57797657087699
3 1.90068640766166
};
\addplot [darkslategray61, forget plot]
table {%
2.8 1.17142702162674
3.2 1.17142702162674
};
\addplot [darkslategray61, forget plot]
table {%
2.8 1.90068640766166
3.2 1.90068640766166
};
\addplot [black, mark=o, mark size=3, mark options={solid,fill opacity=0,draw=darkslategray61}, only marks, forget plot]
table {%
3 0.581033625792934
};
\path [draw=darkslategray61, fill=mediumpurple147113178]
(axis cs:3.6,1.52945599100091)
--(axis cs:4.4,1.52945599100091)
--(axis cs:4.4,2.01453058368806)
--(axis cs:3.6,2.01453058368806)
--(axis cs:3.6,1.52945599100091)
--cycle;
\addplot [darkslategray61, forget plot]
table {%
4 1.52945599100091
4 1.2278145275081
};
\addplot [darkslategray61, forget plot]
table {%
4 2.01453058368806
4 2.09291863182453
};
\addplot [darkslategray61, forget plot]
table {%
3.8 1.2278145275081
4.2 1.2278145275081
};
\addplot [darkslategray61, forget plot]
table {%
3.8 2.09291863182453
4.2 2.09291863182453
};
\addplot [black, mark=o, mark size=3, mark options={solid,fill opacity=0,draw=darkslategray61}, only marks, forget plot]
table {%
4 2.76563472032611
};
\addplot [darkslategray61, forget plot]
table {%
-0.4 2.26847621604862
0.4 2.26847621604862
};
\addplot [darkslategray61, forget plot]
table {%
0.6 2.21661587731496
1.4 2.21661587731496
};
\addplot [darkslategray61, forget plot]
table {%
1.6 0.5013569531887
2.4 0.5013569531887
};
\addplot [darkslategray61, forget plot]
table {%
2.6 1.29682167103565
3.4 1.29682167103565
};
\addplot [darkslategray61, forget plot]
table {%
3.6 1.68555546696996
4.4 1.68555546696996
};
\end{axis}

\end{tikzpicture}}
        \caption{SW-easyWE}
        \label{fig:appendix:SW/easyWE}
    \end{subfigure}
     \hfill
     \begin{subfigure}[t]{0.24\textwidth}
        \centering
        \resizebox{1\textwidth}{!}{
\begin{tikzpicture}

\definecolor{brown1926061}{RGB}{192,60,61}
\definecolor{darkgray176}{RGB}{176,176,176}
\definecolor{darkslategray61}{RGB}{61,61,61}
\definecolor{lightgray204}{RGB}{204,204,204}
\definecolor{mediumpurple147113178}{RGB}{147,113,178}
\definecolor{peru22412844}{RGB}{224,128,44}
\definecolor{seagreen5814558}{RGB}{58,145,58}
\definecolor{steelblue49115161}{RGB}{49,115,161}

\begin{axis}[
legend style={fill opacity=0.8, draw opacity=1, text opacity=1, draw=lightgray204},
log basis y={10},
tick align=outside,
tick pos=left,
x grid style={darkgray176},
xmin=-0.5, xmax=4.5,
xtick style={color=black},
xtick={0,1,2,3,4},
xticklabels={FNN,GNN,CL,PL,ER},
y grid style={darkgray176},
ylabel style={align=center},
ylabel={MAE},
ymin=0.01, ymax=26,
ymode=log,
ytick style={color=black}
]
\path [draw=darkslategray61, fill=steelblue49115161]
(axis cs:-0.4,4.22203955713194)
--(axis cs:0.4,4.22203955713194)
--(axis cs:0.4,5.08147021945628)
--(axis cs:-0.4,5.08147021945628)
--(axis cs:-0.4,4.22203955713194)
--cycle;
\addplot [darkslategray61, forget plot]
table {%
0 4.22203955713194
0 3.53253586243277
};
\addplot [darkslategray61, forget plot]
table {%
0 5.08147021945628
0 5.88328591138125
};
\addplot [darkslategray61, forget plot]
table {%
-0.2 3.53253586243277
0.2 3.53253586243277
};
\addplot [darkslategray61, forget plot]
table {%
-0.2 5.88328591138125
0.2 5.88328591138125
};
\path [draw=darkslategray61, fill=peru22412844]
(axis cs:0.6,4.20479360826277)
--(axis cs:1.4,4.20479360826277)
--(axis cs:1.4,5.13385587265858)
--(axis cs:0.6,5.13385587265858)
--(axis cs:0.6,4.20479360826277)
--cycle;
\addplot [darkslategray61, forget plot]
table {%
1 4.20479360826277
1 3.84170225076377
};
\addplot [darkslategray61, forget plot]
table {%
1 5.13385587265858
1 6.09352194269498
};
\addplot [darkslategray61, forget plot]
table {%
0.8 3.84170225076377
1.2 3.84170225076377
};
\addplot [darkslategray61, forget plot]
table {%
0.8 6.09352194269498
1.2 6.09352194269498
};
\path [draw=darkslategray61, fill=seagreen5814558]
(axis cs:1.6,3.96171875)
--(axis cs:2.4,3.96171875)
--(axis cs:2.4,5.11312260536399)
--(axis cs:1.6,5.11312260536399)
--(axis cs:1.6,3.96171875)
--cycle;
\addplot [darkslategray61, forget plot]
table {%
2 3.96171875
2 3.24657534246575
};
\addplot [darkslategray61, forget plot]
table {%
2 5.11312260536399
2 5.6
};
\addplot [darkslategray61, forget plot]
table {%
1.8 3.24657534246575
2.2 3.24657534246575
};
\addplot [darkslategray61, forget plot]
table {%
1.8 5.6
2.2 5.6
};
\path [draw=darkslategray61, fill=brown1926061]
(axis cs:2.6,3.84070183877937)
--(axis cs:3.4,3.84070183877937)
--(axis cs:3.4,4.94077386790109)
--(axis cs:2.6,4.94077386790109)
--(axis cs:2.6,3.84070183877937)
--cycle;
\addplot [darkslategray61, forget plot]
table {%
3 3.84070183877937
3 3.29129840696349
};
\addplot [darkslategray61, forget plot]
table {%
3 4.94077386790109
3 5.57253266891066
};
\addplot [darkslategray61, forget plot]
table {%
2.8 3.29129840696349
3.2 3.29129840696349
};
\addplot [darkslategray61, forget plot]
table {%
2.8 5.57253266891066
3.2 5.57253266891066
};
\path [draw=darkslategray61, fill=mediumpurple147113178]
(axis cs:3.6,3.88523158239881)
--(axis cs:4.4,3.88523158239881)
--(axis cs:4.4,4.93018989128129)
--(axis cs:3.6,4.93018989128129)
--(axis cs:3.6,3.88523158239881)
--cycle;
\addplot [darkslategray61, forget plot]
table {%
4 3.88523158239881
4 3.40098101310798
};
\addplot [darkslategray61, forget plot]
table {%
4 4.93018989128129
4 5.61021035590332
};
\addplot [darkslategray61, forget plot]
table {%
3.8 3.40098101310798
4.2 3.40098101310798
};
\addplot [darkslategray61, forget plot]
table {%
3.8 5.61021035590332
4.2 5.61021035590332
};
\addplot [darkslategray61, forget plot]
table {%
-0.4 4.88338575162978
0.4 4.88338575162978
};
\addplot [darkslategray61, forget plot]
table {%
0.6 4.93842633544714
1.4 4.93842633544714
};
\addplot [darkslategray61, forget plot]
table {%
1.6 4.60749521072797
2.4 4.60749521072797
};
\addplot [darkslategray61, forget plot]
table {%
2.6 4.58099395257975
3.4 4.58099395257975
};
\addplot [darkslategray61, forget plot]
table {%
3.6 4.68910403427381
4.4 4.68910403427381
};
\end{axis}

\end{tikzpicture}}
        \caption{SW-randomMCFP}
        \label{fig:appendix:SW/randomMCFP}
    \end{subfigure}
    \hfill
    \begin{subfigure}[t]{0.24\textwidth}
        \centering
        \resizebox{1\textwidth}{!}{
\begin{tikzpicture}

\definecolor{brown1926061}{RGB}{192,60,61}
\definecolor{darkgray176}{RGB}{176,176,176}
\definecolor{darkslategray61}{RGB}{61,61,61}
\definecolor{lightgray204}{RGB}{204,204,204}
\definecolor{mediumpurple147113178}{RGB}{147,113,178}
\definecolor{peru22412844}{RGB}{224,128,44}
\definecolor{seagreen5814558}{RGB}{58,145,58}
\definecolor{steelblue49115161}{RGB}{49,115,161}

\begin{axis}[
legend style={fill opacity=0.8, draw opacity=1, text opacity=1, draw=lightgray204},
log basis y={10},
tick align=outside,
tick pos=left,
x grid style={darkgray176},
xmin=-0.5, xmax=4.5,
xtick style={color=black},
xtick={0,1,2,3,4},
xticklabels={FNN,GNN,CL,PL,ER},
y grid style={darkgray176},
ylabel style={align=center},
ylabel={MAE},
ymin=0.01, ymax=26,
ymode=log,
ytick style={color=black}
]
\path [draw=darkslategray61, fill=steelblue49115161]
(axis cs:-0.4,3.54799620292454)
--(axis cs:0.4,3.54799620292454)
--(axis cs:0.4,4.3118879042139)
--(axis cs:-0.4,4.3118879042139)
--(axis cs:-0.4,3.54799620292454)
--cycle;
\addplot [darkslategray61, forget plot]
table {%
0 3.54799620292454
0 3.2121389531975
};
\addplot [darkslategray61, forget plot]
table {%
0 4.3118879042139
0 4.40837260503074
};
\addplot [darkslategray61, forget plot]
table {%
-0.2 3.2121389531975
0.2 3.2121389531975
};
\addplot [darkslategray61, forget plot]
table {%
-0.2 4.40837260503074
0.2 4.40837260503074
};
\path [draw=darkslategray61, fill=peru22412844]
(axis cs:0.6,3.52147838453809)
--(axis cs:1.4,3.52147838453809)
--(axis cs:1.4,4.26635312583745)
--(axis cs:0.6,4.26635312583745)
--(axis cs:0.6,3.52147838453809)
--cycle;
\addplot [darkslategray61, forget plot]
table {%
1 3.52147838453809
1 3.04932231502032
};
\addplot [darkslategray61, forget plot]
table {%
1 4.26635312583745
1 4.42866038873164
};
\addplot [darkslategray61, forget plot]
table {%
0.8 3.04932231502032
1.2 3.04932231502032
};
\addplot [darkslategray61, forget plot]
table {%
0.8 4.42866038873164
1.2 4.42866038873164
};
\path [draw=darkslategray61, fill=seagreen5814558]
(axis cs:1.6,3.17642832216365)
--(axis cs:2.4,3.17642832216365)
--(axis cs:2.4,3.69525516125248)
--(axis cs:1.6,3.69525516125248)
--(axis cs:1.6,3.17642832216365)
--cycle;
\addplot [darkslategray61, forget plot]
table {%
2 3.17642832216365
2 2.72081163803683
};
\addplot [darkslategray61, forget plot]
table {%
2 3.69525516125248
2 3.80114822751409
};
\addplot [darkslategray61, forget plot]
table {%
1.8 2.72081163803683
2.2 2.72081163803683
};
\addplot [darkslategray61, forget plot]
table {%
1.8 3.80114822751409
2.2 3.80114822751409
};
\path [draw=darkslategray61, fill=brown1926061]
(axis cs:2.6,3.14508303495794)
--(axis cs:3.4,3.14508303495794)
--(axis cs:3.4,3.7519825986352)
--(axis cs:2.6,3.7519825986352)
--(axis cs:2.6,3.14508303495794)
--cycle;
\addplot [darkslategray61, forget plot]
table {%
3 3.14508303495794
3 2.88107352721898
};
\addplot [darkslategray61, forget plot]
table {%
3 3.7519825986352
3 3.76843635342999
};
\addplot [darkslategray61, forget plot]
table {%
2.8 2.88107352721898
3.2 2.88107352721898
};
\addplot [darkslategray61, forget plot]
table {%
2.8 3.76843635342999
3.2 3.76843635342999
};
\path [draw=darkslategray61, fill=mediumpurple147113178]
(axis cs:3.6,3.14508303495794)
--(axis cs:4.4,3.14508303495794)
--(axis cs:4.4,3.7519825986352)
--(axis cs:3.6,3.7519825986352)
--(axis cs:3.6,3.14508303495794)
--cycle;
\addplot [darkslategray61, forget plot]
table {%
4 3.14508303495794
4 2.88107352721898
};
\addplot [darkslategray61, forget plot]
table {%
4 3.7519825986352
4 3.76843635342999
};
\addplot [darkslategray61, forget plot]
table {%
3.8 2.88107352721898
4.2 2.88107352721898
};
\addplot [darkslategray61, forget plot]
table {%
3.8 3.76843635342999
4.2 3.76843635342999
};
\addplot [darkslategray61, forget plot]
table {%
-0.4 3.96762327921943
0.4 3.96762327921943
};
\addplot [darkslategray61, forget plot]
table {%
0.6 3.89384527453962
1.4 3.89384527453962
};
\addplot [darkslategray61, forget plot]
table {%
1.6 3.68137870953088
2.4 3.68137870953088
};
\addplot [darkslategray61, forget plot]
table {%
2.6 3.65761488332976
3.4 3.65761488332976
};
\addplot [darkslategray61, forget plot]
table {%
3.6 3.65761488332976
4.4 3.65761488332976
};
\end{axis}

\end{tikzpicture}}
        \caption{SW-randomWE}
        \label{fig:appendix:SW/randomWE}
    \end{subfigure}
    
    \caption{Performance of benchmarks on different scenarios.}
    \label{fig:appendix:scenarios}
\end{figure}
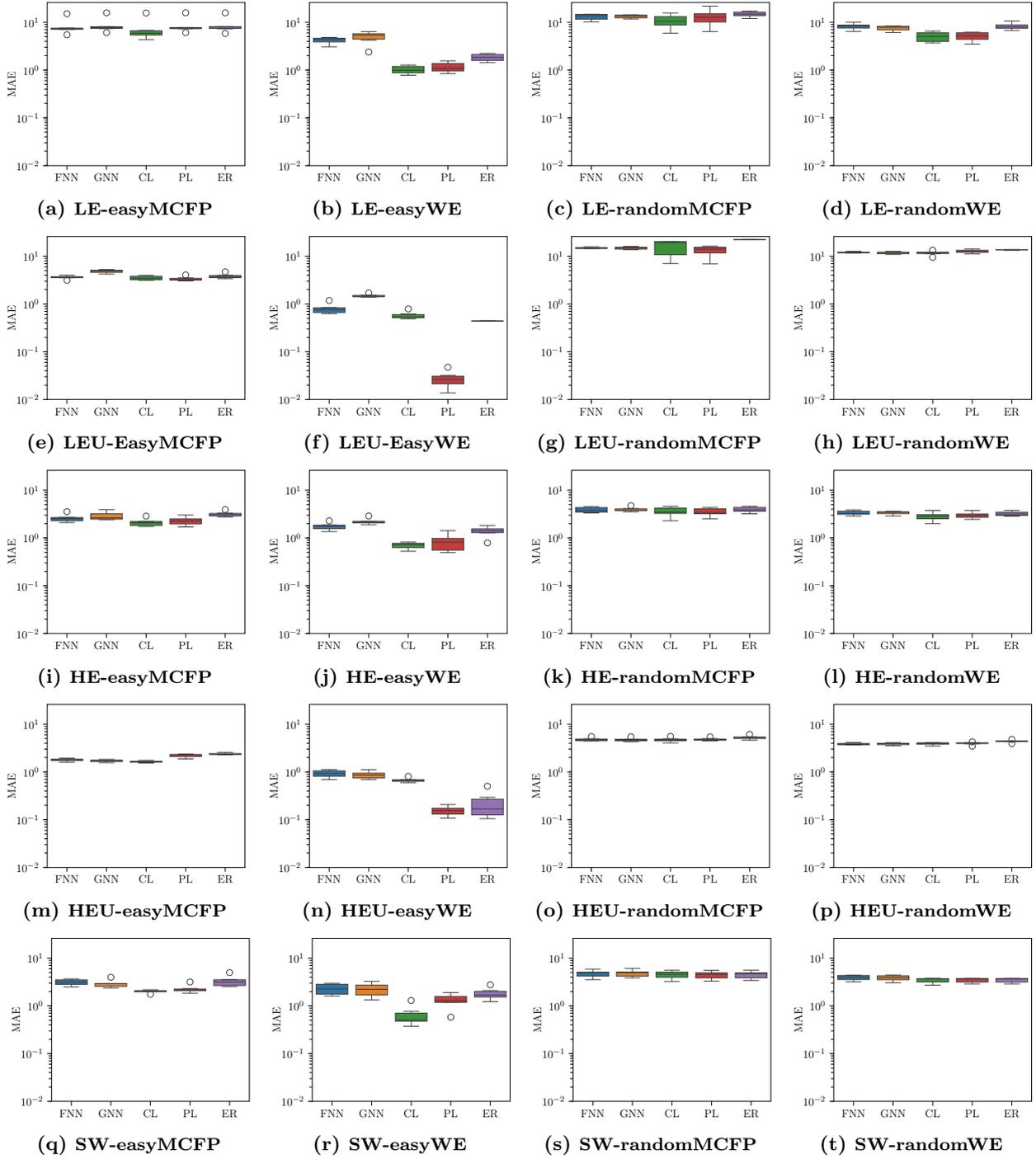

\end{appendices}
\end{document}